\documentclass{article}
\usepackage[preprint]{neurips_2026}

\usepackage{graphicx} 
\usepackage{amsmath, amssymb, amsthm}
\usepackage{mathtools}
\usepackage{csquotes}
\usepackage{hyperref}
\usepackage{tikz}
\usetikzlibrary{arrows.meta}
\usetikzlibrary{decorations.pathreplacing}

\usepackage{xcolor}
\usepackage{algorithm}
\usepackage{algorithmic}
\usepackage{graphicx}
\theoremstyle{plain}
\newtheorem{theorem}{Theorem}[section]
\newtheorem{lemma}[theorem]{Lemma}
\theoremstyle{definition}

\newtheorem{proposition}[theorem]{Proposition}
\usepackage{subcaption}

\title{Representations from Pretrained Machine-Learning Interatomic Potentials as Coarse Coordinates for Material Generation and Evaluation}
\author{
  Paul Hagemann \\
  \texttt{paul.hagemann@bam.de}
  \And
  Katharina Ueltzen \\
  \texttt{katharina.ueltzen@bam.de}
  \And
  Simon Müller \\
  \texttt{simon.mueller@bam.de}
  \And
  Janine George \\
  \texttt{janine.george@bam.de}
  \And
  Philipp Benner \\
  \texttt{philipp.benner@bam.de}
}
\date{January 2026}

\begin{document}

\maketitle
\begin{abstract}
Generative machine learning is increasingly used for inorganic crystal structure generation. Most models and the corresponding evaluation approaches rely on simple forms of crystal structure representation. In this paper, we showcase the power of atom-averaged features from pretrained Machine-Learning Interatomic Potentials (MLIPs), such as MACE, for such tasks. We first introduce a distance measure that assesses the output of material generative models by capturing both quality and novelty in a single distribution-based evaluation framework. In particular, we introduce the Coarse-Fine Transport Distance (CFTD) using two different featurizers, where the quality component is based on coarse MACE features. We showcase CFTD's versatility in capturing crystal-structure quality while also detecting memorization, and compare it with the recently introduced continuous SUN metrics. We further show that coarse MACE features can be used as guidance for a material generative model.
\end{abstract}

\section{Introduction}
Materials are an important foundation of technological progress, and finding new materials with desired properties is crucial for future developments in many different fields. Inorganic crystals underpin many devices and applications, but because the space of possible crystal structures is extremely vast, suitable candidates are difficult to identify and this process is time-consuming and expensive. With recent advancements in generative machine learning, various models have emerged as promising tools for creating inorganic crystal structures from scratch \cite{debreuck2026}. Multiple publications \cite{MatterGen2025, xie2022crystal, wu2026dmflow, WyckoffDiff} have demonstrated strong potential to generate realistic materials not present in the training data, and new models are published frequently. 

Conversely, synthesized crystals originating from generative models remain rare, and the novelty of such discoveries has been heavily debated \cite{MatterGen2025, juesholt}. Additionally, and perhaps related to this, metric development for evaluating materials generative models seems harder than in other domains: in particular, \emph{novelty} and \emph{stability} are crucial for model design and evaluation. Determining these two characteristics is not straightforward, and, lacking alternatives, most material generators are evaluated based on their SUN-metrics, as for instance done in MatterGen \cite{MatterGen2025}. This now-established evaluation criterion measures stability, uniqueness, and novelty at the instance level (i.e., structure-by-structure), which would allow an e.g. highly collapsed generative model to still produces novel and stable materials. 

Since generative models are usually trained to match the training data distribution, distributional metrics might offer a more natural approach to evaluation. 

Other evaluation criteria, as used in \cite{xie2022crystal, betala2026lematgenbenchunifiedevaluationframework}, target specific properties such as atom or space-group diversity or the volume of the generated crystals. These are more distributional but yield a reliable picture of performance only when combined. This inspired us to consider a more Fréchet Inception Distance (FID)-like approach \cite{fid_paper} for metric design. In their paper, the authors argued that a classifier for images also has a good feature representation of images. One possible analog for materials is provided by the hidden features of pre-trained Machine-Learning Interatomic Potentials (MLIPs), such as MACE \cite{batatia2025foundationmodelatomisticmaterials}, which learn energetic representations of materials based on atomic environments.
However, classical distributional metrics are minimized when we copy the exact training distribution (i.e., we do not create any novel samples and their novelty is, accordingly, very low). We will therefore focus on our recently introduced Transport Novelty Distance (TNovD) \cite{hagemann2025transportnoveltydistancedistributional}, which unifies quality and novelty within a single framework. It is inspired by optimal transport \cite{peyré2020computationaloptimaltransport} with a distributional flavor. The initial TNovD used a contrastive GNN featurizer to evaluate both quality and novelty. However, we decided to extend the metric because, by using only a single featurizer, every improvement in quality necessarily decreases novelty, and vice versa. To overcome this drawback, we established a Goldilocks zone, which lies between a memorization and a quality regime and includes structures not considered for evaluation at all. The two separated regimes are established through two different featurizers: a MACE featurizer for quality, and a contrastive GNN featurizer for novelty evaluation. Since each featurizer contains either coarse or fine information of the investigated structure, we call the new metric Coarse-Fine Transport distance (CFTD), borrowing the terminology from vision community, where coarse and fine features capture different levels of visual detail.

We discuss how the classical coverage-novelty tradeoff relates to our metric, and, by replicating toy experiments from \cite{hagemann2025transportnoveltydistancedistributional}, we show that the new approach including an additional MLIP featurizer is chemically more sensible and captures stability much better than \cite{hagemann2025transportnoveltydistancedistributional}. To investigate differences relative to more instance-based metrics, such as continuous SUN \cite{negishi2026continuoussunstableunique}, we added experiments that resemble mode-collapsing generative models. In such cases, our newly introduced Coarse-Fine Transport Distance (CFTD) behaves more reasonably than instance-based metrics.
We further analyze the geometries of the fine contrastive featurizer and the coarse MACE featurizer (Appendix \ref{app:fine_coarse}) and additionally show, which informational content the MACE features possess (Appendix \ref{app:coarse_mace}). 

We then benchmark sets of 10,000 generated crystal structures, each created with a different materials generative model \cite{MatterGen2025, jiao2024crystalstructurepredictionjoint, jiao2024spacegroupconstrainedcrystal, xie2022crystal, joshi2025allatomdiffusiontransformersunified, llmcrystalgeneration, park2025guidinggenerativemodelsuncover}. Benchmarking is done before and after relaxation of the resulting crystal structures with a pre-trained MLIP, and the qualitative behavior is discussed as opposed to \cite{negishi2026continuoussunstableunique}. In particular, we observe that relaxing the generated structures generally improves the model's quality but often increases its memorization. This suggests that, to some extent, generative models push existing materials slightly away from their energy minima, rather than generating truly novel and stable structures.

\begin{figure}[h!]
\centering
\includegraphics[height=0.33\textheight]{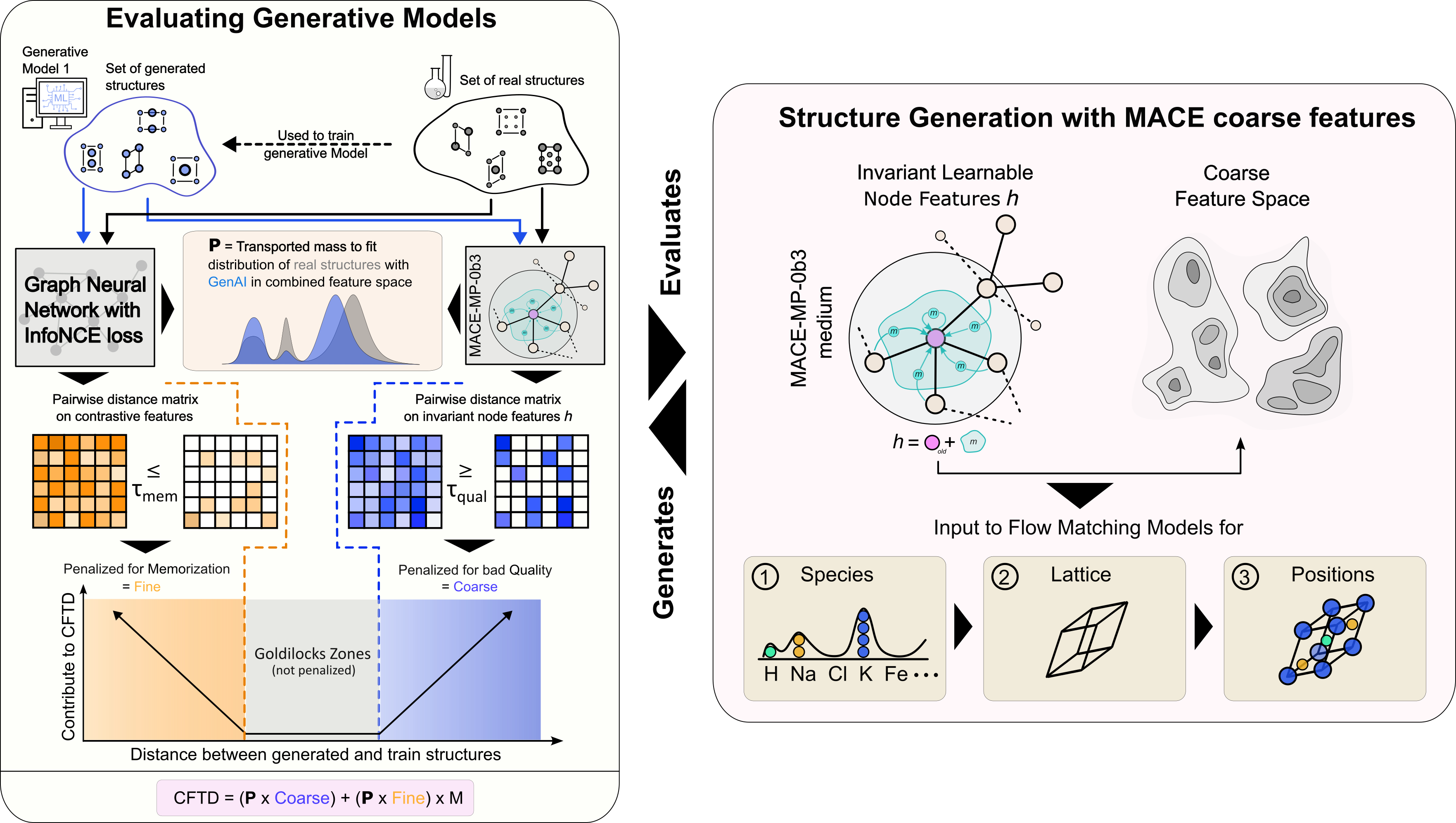}%
\caption{Overview of the CFTD metric and the generative pipeline. We propose using mean-pooled, invariant h features from the message passing step of a pretrained MACE model in both the evaluation and generation procedures, a practice vital for material design.}
\label{fig:CFTD}
\end{figure}

The results from CFTD indicate that the hidden MACE features carry structural, chemical, and physical information, even if they are used in a compressed form. 
To further investigate possible applications of the hidden MACE features, we created an MLIP-conditional material generative model and trained it by including empirically sampled hidden MACE features from the training set as an additional conditioning signal. The model overfits when trained on only a small subset of the training set, a behavior that is detected by the CFTD. We then trained our generative model again on the full training set, this time with and without MACE feature conditioning. The results demonstrate that the architecture indeed benefits from MACE conditioning by showing that the generated materials obey the relation to their condition. The MACE-conditional model exhibits a very different stability-novelty profile, confirming our thesis. We evaluated both models on the recently introduced LeMat-GenBench benchmark \cite{betala2026lematgenbenchunifiedevaluationframework}. 

\subsection{Related Work}
\paragraph{Evaluation of generative models}
In imaging, early work used precision-recall \cite{prec_rec1, prec_rec2} or nearest-neighbor checks. Then, feature-based metrics like LPIPS \cite{zhang2018perceptual} for single images or Frechet Inception Distance (FID) \cite{fid_paper} became more popular, as they better aligned with human visual judgments. 

In terms of materials generative model evaluation, the most common methods are the SUN-metrics, to the best of our knowledge first introduced in \cite{MatterGen2025}. These have been refined recently in \cite{negishi2025continuous}, which treats stability, uniqueness, and novelty not as binary objectives but as continuous ones. Further, Kelvinius et al. \cite{WyckoffDiff} proposes a FID-style evaluation in the activation space of a GNN, which is close in spirit but does not capture novelty nor stability. In this paper, we follow up on \cite{hagemann2025transportnoveltydistancedistributional}, where we introduced a quality-novelty metric. 
First, we establish the connection to existing metrics, then refine it within a dual-featurizer framework. We do this to avoid the otherwise opposing behavior of novelty and quality. 
We then show that these changes enable the new metric to capture stability much better than \cite{hagemann2025transportnoveltydistancedistributional} and to capture mode collapse much better than \cite{negishi2025continuous}.

When evaluating materials generative models, particular focus is given to novelty. The general problem is that there is no easy way to distinguish whether something is just another representation of an existing material, in part because there is debate about the definition of what actually counts as the same material \cite{widdowson2022}. Recently, there has been some controversy over whether some of the materials discovered by generative models were already known \cite{juesholt}. Additionally, Negishi et al. \cite{negishi2026} showed that a large majority of chemically valid and metastable crystals created by current generative models are either training duplicates or can be reproduced by substitutions of training samples with chemically similar elements. In our new metric, we mainly treat novelty with respect to the featurizer used for detecting memorization. In similar spirit to \cite{negishi2026}, this featurizer is trained to treat substitutions with chemically similar elements as the same sample, since those structures can be generated using more classical generative approaches.
This featurizer can be easily modified to fit new or other definitions of novel. 

\paragraph{MLIP-conditional generation}
Recently, there has been some work using MLIP features for materials generation. This development is consistent with a recent commentary in ref. \cite{didi2025r4g}, which advocates using feature-based representations for generative modeling, as is common in the vision community \cite{oquab2024dinov, rombach2022high}. For crystal structures, this translates to using the understanding of pretrained foundation models (in the form of their hidden feature space), instead of generating or classifying in the raw materials space. 

Le et al. \cite{le2025equivariant} trained an encoder-decoder framework to conditionally generate ligands in structure-based drug design, and their encoder is partly obtained by learned training representations. In ref. \cite{li2025geometric}, they learn a conditional/hierarchical model for generating organic molecules based on the UniMol representation. This is very similar to what we do in practice, with both approaches introducing a two-stage training process. However, their conditioning is focused on the semantic meaning, whereas ours is based on hidden features from the MACE foundation model. It thereby captures many more physical and chemical properties, as MACE is focused on predicting energies, forces and stresses. In ref. \cite{li2025platonicrepresentationfoundationmachine}, a strong case for using hidden MLIP-features for conditioning generative models is made: they show that different MLIPs seem to converge to the same representation, once anchored. Li et al. \cite{li2026elignequivariantdiffusionmodel} use MLIP force fields in a reinforcement learning-style guidance procedure, highlighting their effectiveness for generating molecular structures. Pinede et al. \cite{pinede2025unifying} also use MLIP representations by regularizing the Boltzmann generators' hidden states to be close to a pretrained MACE model. Recently, also Hessmann el al. \cite{hessmann2026generativepseudoforcefieldsmolecular} draw the analogy between the denoising behavior of diffusion \cite{song2021scorebased} and relaxation by learning a pseudo force fields. By perturbing equilibrium molecules and learning a suitable denoiser, they overcome the need for calculating DFT and get a generative model for trajectory data.

\subsection{Contributions}
Overall, our approach is inspired by hierarchical image generation, where a coarse-to-fine approach is often pursued; see, e.g., \cite{kadkhodaie2023learning, tian2024visual}. The first thesis of our paper is that a metric that captures both memorization and quality requires a dual-featurizer geometry, in which each component (coarse or fine) captures distinct properties. For the coarse (or quality) component, we introduce a specific kind of MACE feature that is useful both for evaluation and for guiding a material generator. Our contributions are as followed:
 
\begin{itemize}
    \item We introduce the Coarse-Fine Transport Distance (CFTD), which improves upon TNovD \cite{hagemann2025transportnoveltydistancedistributional}. It resolves the quality-novelty trade-off intrinsic to metrics that capture both with a single featurizer, and we relate it to classical coverage and novelty. 
    \item For the coarse (or quality) component, we take random projections of pooled hidden MACE features. We also investigate what information these features capture. For the fine (or novelty) component, we build upon the contrastive GNN from \cite{hagemann2025transportnoveltydistancedistributional} and refine it using substitutions with chemically similar elements \cite{Glawe_2016}. 
    \item We show that, in contrast to cSUN \cite{negishi2025continuous}, the CFTD reliably tracks stability and mode collapse, respectively. Using toy experiments, we verify that it detects quality errors and memorization, and show that it provides a better representation of stability and chemical composition than \cite{hagemann2025transportnoveltydistancedistributional}.
    \item We benchmark different material generators using CFTD, and show that it captures memorization in an overfitted material generator.
    \item We also show that a material generative model benefits from integrating MACE features and that these features can serve as a guidance signal. 
\end{itemize}

\section{Resolving the Coverage-Novelty tradeoff for Materials Generation with the Coarse-Fine Transport Distance}

In this section, we review established methods for measuring the performance of generative models and motivate our proposed Coarse-Fine Transport Distance (CFTD). We first discuss more classical, instance-level coverage and novelty scores and how they relate to quality and novelty of structures. 

\paragraph{The classical Coverage-Novelty tradeoff and its implications for material generation}
One of the main issues in generative modeling is that we want a generative model that captures statistics of the data distribution, i.e., reproduces similar behavior to the data, while not overfitting on the specific training examples. In essence, we want to generate \emph{novel} samples that still resemble the features from the data distribution. Classically, this is often summed up in the terms \emph{coverage} and \emph{novelty}, which we define in Appendix \ref{app:tnovd}. In the general case, coverage describes the amount of training data for which a close generated sample can be found, while novelty describes the amount of generated samples for which no close training sample exists. These two points are generally at odds with one another when measured under the same notion of closeness. 
In materials generation, high coverage is specifically wanted for the quality parameters of a generated structure, while, novelty here refers to structural and compositional resemblance of a generated structure to the training data. These two things do not necessarily need to be measured in the same reference space (i.e. under the same notion of closeness), and we resolve the classical tradeoff by introducing two featurizers that decouple quality coverage from novelty, allowing us to evaluate both at the same time. In the following, we will use the term quality coverage to refer to how well the quality aspects of the generated structures is covered by the training data, while the term novelty will refer to structure memorization. 

\paragraph{Coarse-Fine Transport Novelty Distance}

 The CFTD is an extension of the recently proposed Transport Novelty Distance (TNovD) \cite{hagemann2025transportnoveltydistancedistributional}, see Appendix \ref{app:tnovd} for a concise summary and the relation to coverage and novelty. 

However, by using only a single featurizer, the TNovD has the above-mentioned quality coverage-novelty tradeoff problem. 
More specifically, any generated structure either counts towards the novelty or the quality regime of the metric. By penalizing all structures, the ones with good novelty (i.e., no memorization) necessarily reduce the quality coverage. As written above, we argue that \emph{novelty} and \emph{quality coverage} are distinct tasks that require different featurizers. In particular, judging whether two materials are the same requires close consideration and detailed verification of whether they have the same atom types at similar positions. On the other hand, judging how good quality is covered involves other factors, such as the structure's stability. To account for this, we introduce a second featurizer extending the TNovD \cite{hagemann2025transportnoveltydistancedistributional} to better respect stability and evaluate quality coverage individually. Note that when talking about stability in this work, we refer to \emph{implicit} stability: stability-related information encoded in the hidden MACE features introduced below, rather than an explicit thermodynamic quantity such as energy above hull. Our perturbation experiments and correlations with polymorph formation energies in Appendix \ref{app:info_mace} indeed suggest that they are a useful proxy. Direct stability measures, such as energy above hull, are complementary but often prohibitively costly. 

\paragraph{The "Fine" in CFTD: The Identity Featurizer}
In ref. \cite{hagemann2025transportnoveltydistancedistributional}, the featurizer $F_{id}$ is a GNN trained with an InfoNCE loss for contrastive learning. This is suitable for novelty detection, but does not necessarily encode a good means of quality coverage. 

More precisely, we used the InfoNCE loss \cite{oord2019representationlearningcontrastivepredictive}, which maximizes the similarity between pairs $(z^1_i, z^2_i) = (F_{id}(x_i^1), F_{id}(x_i^2))$ by minimizing the loss 
$$L(\theta) =- \sum_{i=1}^n \log\left(\frac{\exp(z_i^1 z_i^2)}{\sum_{j \in N_i} \exp(z_i^1 z_j^2) } \right), $$
where $N_i$ is the set of indices without $z_i^1$. We provide some justification for using the InfoNCE loss in section \ref{sec:infonce}.

For the similarity calculation, we augment a structure and treat the original structure and its augmentation as a pair. We thereby need to specify the $\mathrm{augment}$-function, which returns these \emph{positive} pairs for the contrastive procedure. In \cite{hagemann2025transportnoveltydistancedistributional}, we used rotation and translation. However, to make the identity featurizer $F_{id}$ more chemically aware and more robust towards noise, we now added a small amount of random zero-mean Gaussian noise with a standard deviation of $0.01$ to the fractional coordinates, so that small differences in atomic positions between generated samples and training materials are not penalized as heavily. Additionally, we expanded the augmentation by substitution with similar elements, replacing each site of an element with either its nearest or its next-nearest neighbors on the modified Pettifor scale with a certain probability (0.1 and 0.05, respectively)\cite{Glawe_2016}. Pettifor neighbors are often chemically similar atom types that have a high probability of substituting for one another in a crystal structure. We therefore expect similar bonding properties and consider the adapted structure to be a comparable material, or at least a material for which we do not require a generative artificial intelligence model to discover it. This is also relevant for the consideration of substitutional disorder in crystal structures \cite{jakob2026}, which is often neglected in generative models for crystals. The modification will therefore penalize generative models that mostly create new crystal structures by substituting supercells of well-known crystal structures with low concentrations of chemically similar elements. 
We note that the augmentation can be flexibly adapted to the specific definition of identical materials, or based on the materials discovery use case.

\paragraph{The "Coarse" in CFTD: The MACE Featurizer}
The featurizer in TNovD \cite{hagemann2025transportnoveltydistancedistributional} is trained with a contrastive loss function determining whether two structures are identical. However, there is no reason to expect it to directly capture a structure's quality parameters.

For this, we implemented a second featurizer $F_{mace}$, which is directly derived from MACE-MP-0b3, a machine learning interatomic potential (MLIP) foundation model \cite{Batatia2022mace, Batatia2022Design, batatia2025foundationmodelatomisticmaterials} that captures stability information. 

To be more specific: $F_{mace}$ does not consider the final output of MACE, i.e., forces or energy, but rather statistics of its layers, in a similar fashion to FID \cite{fid_paper}. Technically speaking, we consider the invariant part of the pooled "node features" of MACE-MP-0b3 "medium": Let $h^k = (h^k_1,...,h^k_{n_{atoms}})$ be the hidden states of its $k$-th interaction block, which are of size $(n_{atoms}, h_{dim})$. Within MACE, the invariant part of these hidden features is used to predict a hierarchical decomposition of site energies within the read-out phase. We apply mean pooling to obtain features of the same size, regardless of the number of atoms in a structure, and then concatenate the resulting layers. This yields $H = (\mathrm{mean}(h^1),...,\mathrm{mean}(h^L))$. 

To create the final feature space, we use a random projection by considering a fixed set of projections $(\xi_1,...,\xi_{n_{proj}})$ with $\Vert \xi_i \Vert = 1$ and Gaussian initialization. From this, we then obtain the map $F_{mace}(x) = \xi H(x)$, where $x$ is the material and $\xi \in \mathbb{R}^{n_{proj}, dim(H)}$. This operation allows us to reduce the dimensionality of the pooled MACE features, while keeping all information intact. The result is an $n_{proj}$-dimensional representation for the coarse feature space, which encodes quality information of the structure. 
We verify its coarse character and the contained information in Appendix \ref{app:coarse_mace} and \ref{app:stability-mace} by reconstructing the space-group, lattice, and composition of the test set, as well as the formation energies of polymorphs from it. 
None of those parameters can be reconstructed perfectly, underlining the coarse nature of the final MACE feature space. Composition is recovered best, while lattice and space-group type are captured less accurately. We further find that MACE feature distances are highly correlated with polymorph energy distances. Separately, on random pairs from the training set, MACE cosine similarities differ systematically between pairs of (meta)stable structures and pairs of structures, for which the fractional coordinates where altered by inducing random mean Gaussian noise. This finding shows sensitivity to local structural perturbations. Both findings demonstrate that MACE features carry implicit stability information.
We also discuss the geometries of the feature spaces invoked by the coarse MACE-featurizer (for quality coverage) and the fine GNN-based featurizer (for novelty) in Appendix \ref{app:fine_coarse} and show that both geometries differ from each other. 

\begin{algorithm}[t]
\caption{Coarse-Fine Transport Distance}
\label{alg:cftd}
\small

\textbf{Input:} Empirical measures
$\mu=\frac{1}{n}\sum_{i=1}^n\delta_{x_i}$ and
$\nu=\frac{1}{m}\sum_{j=1}^m\delta_{g_j}$,
train and validation calibration sets, featurizers $F_{id},F_{mace}$, parameter $\beta$.

\textbf{Step 1: Calibration.}
\begin{enumerate}
    \item Compute the optimal coupling (i.e. the optimal transport plan) between the train and validation sets.
    \item Choose $\tau_{mem}$ and $\tau_{qual}$ so that $\frac{1-\beta}{2}$ of the coupled mass lies in the memorization and low-quality regimes, respectively. Choose $M$ so that both error terms are balanced; see Appendix~\ref{app:hyp_sel}.
\end{enumerate}

\textbf{Step 2: Evaluation.}
\[
C_{i,j}^{id}=\lVert F_{id}(x_i)-F_{id}(g_j)\rVert,\quad
C_{i,j}^{mace}=\lVert F_{mace}(x_i)-F_{mace}(g_j)\rVert,\quad
C_{i,j}=\tfrac12 C_{i,j}^{id}+\tfrac12 C_{i,j}^{mace}.
\]
Calculate the optimal transport plan $\pi(\mu,\nu)$ for train and generated samples with respect to the pairwise distances $C_{i,j}$. Define
\[
\begin{aligned}
\mathrm{CFTD}(\mu,\nu)
&=
\sum_{i=1}^n \sum_{j=1}^m
\pi_{i,j}(\mu,\nu)
\bigl[
M\,\mathrm{ReLU}(\tau_{mem}-C^{id}_{i,j})
\\
&\qquad\qquad
+
\mathrm{ReLU}(C^{mace}_{i,j}-\tau_{qual})
\bigr].
\end{aligned}
\]
\end{algorithm}

\paragraph{Coarse-Fine Transport Distance}

Given training samples $(x_i)_{i=1}^n$ and generated samples $(g_i)_{i=1}^m$, we developed an algorithm similar to the Transport Novelty Distance. In particular, we implement a "Goldilocks" zone separating novelty and quality coverage into two distinct regimes. This zone defines the percentage of structures that are not penalized in a reasonable train split. We call this proportion $\beta$. We then proceed with the following steps. We describe the algorithm in \ref{alg:cftd}, where we first determine the two thresholds for which we penalize memorized or low quality data, $\tau_{mem}$ and $\tau_{qual}$ and choose a scale $M$ that acts as a tuning knob balancing the two parts. We then calculate the optimal transport plan for a weighted average of both featurizers and use it to penalize coupled generated and training set pairs by assigning a cost to generated samples that are either too close in the identity feature space (below $\tau_{mem}$) or too far away in the MACE feature space (above $\tau_{qual}$).
This procedure allows us to use different criteria to detect memorization (fine) and low quality (coarse) in the generated samples, resolving the novelty-quality coverage tradeoff. 

At heart, the metric remains distributional, but the optimal transport coupling now accounts for two distinct featurizers: novelty and quality coverage. Informally, quality coverage ($\mathrm{Cov}$) and novelty ($\mathrm{Nov}$) approach 1 as the CFTD vanishes. We state this here informally, with the full statement in the Appendix \ref{thm:cftd}. This means that $\mathrm{CFTD}$ penalizes mass outside the Goldilocks zone.

\begin{proposition}
    For suitable radii $r_{\mathrm{qual}}>\tau_{\mathrm{qual}}$ and $r_{\mathrm{mem}}<\tau_{\mathrm{mem}}$, we have in the feature space that $\mathrm{Nov}_{r_{mem}} \rightarrow 1$ and $\mathrm{Cov}_{r_{qual}} \rightarrow 1$ as the $\mathrm{CFTD}\rightarrow 0$.
\end{proposition}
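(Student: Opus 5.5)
We will work directly with the optimal plan $\pi=\pi(\mu,\nu)$ from Algorithm~\ref{alg:cftd} and use Markov-type inequalities on the two penalty terms. The definitions of coverage and novelty are deferred to Appendix~\ref{app:tnovd}, so we adopt the natural mass versions with respect to the coupling. Quality coverage $\mathrm{Cov}_{r_{qual}}$ is the fraction of training mass whose coupled generated partner lies within MACE-distance $r_{qual}$. Novelty $\mathrm{Nov}_{r_{mem}}$ is the fraction of generated mass whose coupled training partner lies at identity-distance at least $r_{mem}$. If the appendix uses nearest-neighbour versions instead (``there exists a close generated sample'', ``no close training sample''), we bound those in the final step.

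\textbf{Quality coverage.} Set $\delta_q = r_{qual}-\tau_{qual}>0$. Whenever $C^{mace}_{i,j}>r_{qual}$, we have $\mathrm{ReLU}(C^{mace}_{i,j}-\tau_{qual})\ge \delta_q$. Since both summands in the CFTD are nonnegative,
\[
\sum_{i,j}\pi_{i,j}\,\mathbf 1\{C^{mace}_{i,j}>r_{qual}\}\le \frac{\mathrm{CFTD}(\mu,\nu)}{\delta_q}.
\]
The first marginal of $\pi$ is $\mu$. Hence the training mass not coupled within radius $r_{qual}$ is at most $\mathrm{CFTD}/\delta_q$, which gives $\mathrm{Cov}_{r_{qual}}\ge 1-\mathrm{CFTD}/\delta_q$.

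\textbf{Novelty.} Set $\delta_m=\tau_{mem}-r_{mem}>0$. Whenever $C^{id}_{i,j}<r_{mem}$, we have $M\,\mathrm{ReLU}(\tau_{mem}-C^{id}_{i,j})\ge M\delta_m$. Therefore the $\pi$-mass of pairs with $C^{id}_{i,j}<r_{mem}$ is at most $\mathrm{CFTD}/(M\delta_m)$. The second marginal of $\pi$ is $\nu$, so $\mathrm{Nov}_{r_{mem}}\ge 1-\mathrm{CFTD}/(M\delta_m)$. Letting $\mathrm{CFTD}\to0$ in both bounds gives both limits. The radii must be strictly separated from the thresholds; this is exactly where the hypothesis ``suitable radii'' enters.

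\textbf{Main obstacle.} The hard step is reconciling the coupling-based quantities with the instance-level definitions in Appendix~\ref{app:tnovd}.

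For coverage this is harmless. If training point $x_i$ has a coupled partner $g_j$ with $\pi_{i,j}>0$ and $C^{mace}_{i,j}\le r_{qual}$, then a close generated sample exists. So the nearest-neighbour coverage is at least the coupled coverage.

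For novelty the direction is bad. A generated point may be coupled far away under $\pi$ yet lie close in $F_{id}$ to some other training point, because $\pi$ minimises the averaged cost $C$ rather than $C^{id}$ alone. The route I would try first has two parts:
\begin{itemize}
\item[(i)] Use finite samples: each atom $g_j$ has mass $1/m$. Once $\mathrm{CFTD}<M\delta_m/m$, no generated atom can place any mass with identity-distance below $r_{mem}$. This controls only pairs with $\pi_{i,j}>0$.
\item[(ii)] Use cyclical monotonicity of the optimal plan for $C=\tfrac12C^{id}+\tfrac12C^{mace}$. Suppose $g_j$ has an uncoupled training point $x_k$ with $C^{id}_{k,j}<r_{mem}$. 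Swapping partners would lower the cost unless the MACE distances compensate. Combined with the quality bound, which says the coupled MACE distances are at most $r_{qual}$, this should force a near-match to appear in the support of $\pi$.
\end{itemize}
If (ii) fails in general, I will state the proposition with the coupling-based $\mathrm{Nov}$, which is presumably the ``full statement'' in Appendix~\ref{thm:cftd}.
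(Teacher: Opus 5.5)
Your coverage argument is correct and is the paper's own: a Markov bound on $\mathrm{ReLU}(C^{mace}-\tau_{qual})$, then $\sum_j\pi_{i,j}=1/n$ to pass from $\pi(C^{mace}>r_{qual})$ to the instance-level $1-\mathrm{Cov}_{r_{qual}}$.

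The gap is in novelty. The paper's $\mathrm{Nov}_{r}$ is the instance-level one (Appendix~\ref{app:tnovd}, redefined with $F^{id}$). Your Markov bound only controls $\pi(C^{id}\le r_{mem})$. It says nothing about a generated point that is close in $F_{id}$ to a training point it is not coupled to.

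Route (ii) cannot be repaired, because without an extra hypothesis the implication is false. Take $n=m=2$ with, on a line, $F_{id}(x_1)=0$, $F_{id}(x_2)=F_{id}(g_1)=\tau_{mem}$, $F_{id}(g_2)=2\tau_{mem}$. Take $F_{mace}(x_1)=F_{mace}(g_1)=0$ and $F_{mace}(x_2)=F_{mace}(g_2)=D>0$. Then $C_{11}+C_{22}=\tau_{mem}<\tau_{mem}+D=C_{12}+C_{21}$, so the diagonal plan is the unique optimum. Both coupled pairs have $C^{id}=\tau_{mem}$ and $C^{mace}=0$, hence $\mathrm{CFTD}=0$ (as $\tau_{qual}\ge 0$). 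Yet $C^{id}_{21}=0$, so $\mathrm{Nov}_{r_{mem}}=1/2$ for every $r_{mem}\in[0,\tau_{mem})$.

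The MACE part of the cost can always make the swap unprofitable. Here the coupled MACE distances are even zero, so no cyclical-monotonicity argument combined with the quality bound will force the near match into the support.

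Route (i) also has a slip. A generated atom can split its $1/m$ mass, so $\mathrm{CFTD}<M\delta_m/m$ does not exclude small positive $\pi_{i,j}$ on close pairs. You would need a lower bound on the smallest positive entry of $\pi$ (for a vertex solution with uniform marginals, $1/\mathrm{lcm}(n,m)$). Even then you only control pairs in the support.

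The paper keeps the instance-level novelty and adds a non-degeneracy hypothesis in the full statement, Theorem~\ref{thm:cftd}. It assumes there is $\alpha>0$ such that for every $j$ with $\min_i C^{id}_{i,j}\le r_{mem}$, one has $\sum_{i:\,C^{id}_{i,j}\le r_{mem}}\pi_{i,j}\ge\alpha/m$. In words, the plan is assumed to put mass on the near matches, which is exactly what fails in the example above.

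With $S=\{j:\min_i C^{id}_{i,j}\le r_{mem}\}$, this gives $\pi(C^{id}\le r_{mem})\ge\alpha|S|/m=\alpha\,(1-\mathrm{Nov}_{r_{mem}})$. Combined with your Markov bound, it yields $1-\mathrm{Nov}_{r_{mem}}\le \mathrm{CFTD}/(\alpha M(\tau_{mem}-r_{mem}))$.

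So ``suitable'' in the informal proposition has to hide this assumption, not only the separation of radii from thresholds. Your fallback of switching to a coupling-based novelty proves a different statement from the paper's.

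A minor point: use $C^{id}_{i,j}\le r_{mem}$ rather than $<$, since non-novel means some training point at distance $\le r_{mem}$. Your ReLU bound holds there as well.
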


\section{CFTD experiments}

\paragraph{Quality Coverage-Novelty Checks}
As a first sanity check for the new metric, we repeat some experiments from \cite{hagemann2025transportnoveltydistancedistributional}. In particular, we want to show that the CFTD can indeed detect issues of generative models with regard to novelty and quality coverage. Our experiments in \cite{hagemann2025transportnoveltydistancedistributional} revealed that unstable materials were not that heavily penalized by the contrastive GNN, since even heavy structural distortions only resulted in small increases of the final TNovD. To test this for the new metric, we strain the lattice, add noise to the fractional coordinates, substitute atomic sites with neighbors on the modified Pettifor scale, elements of the same group, and random elements, and iteratively replace an increasing number of test-set data points with training data. 
\begin{figure}[!htpb]
    \centering
    \begin{subfigure}{0.48\textwidth}
        \includegraphics[width=\linewidth]{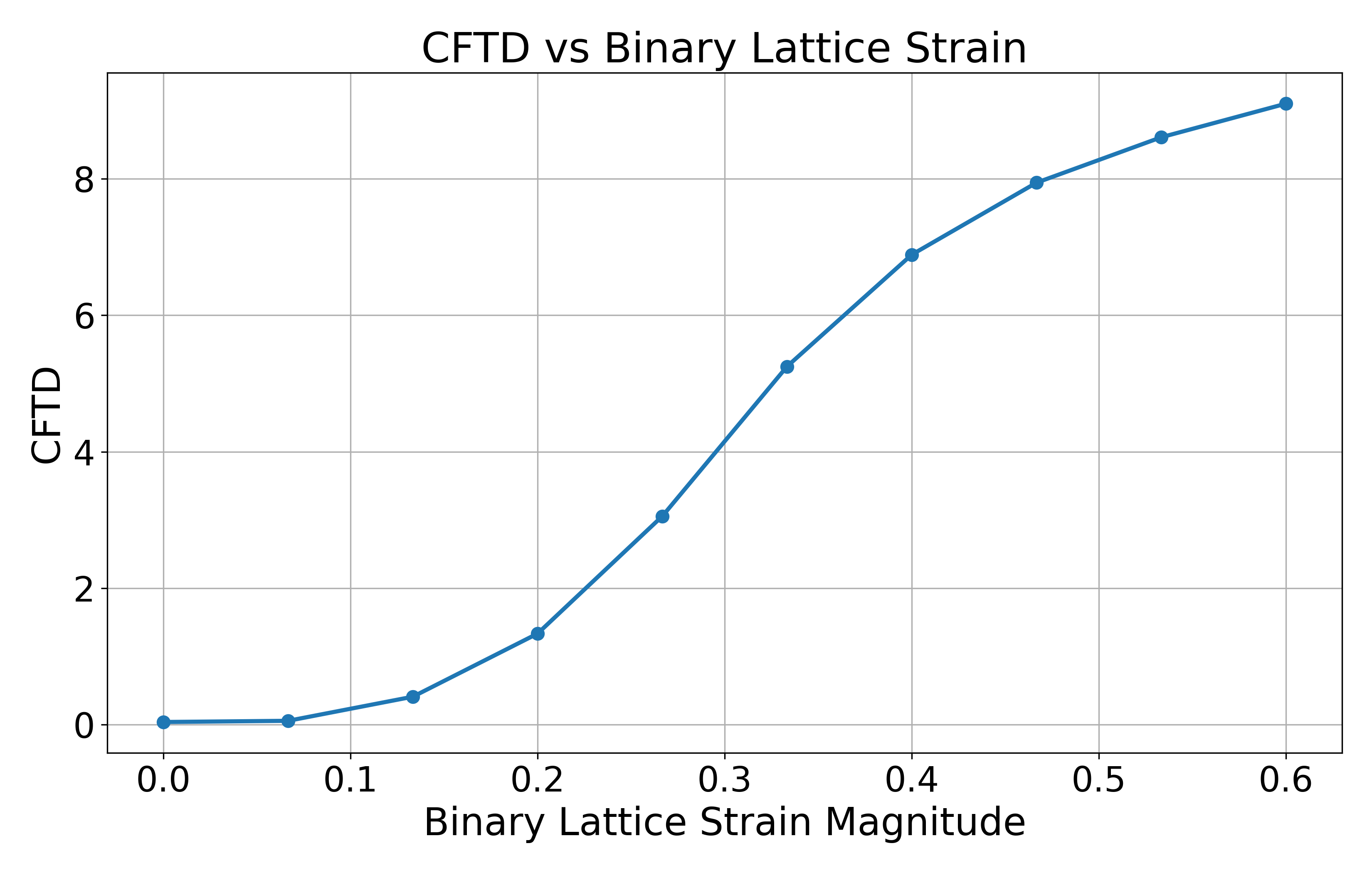}
        \caption{Lattice strain.}
    \end{subfigure}
    \hfill
    \begin{subfigure}{0.48\textwidth}
        \includegraphics[width=\linewidth]{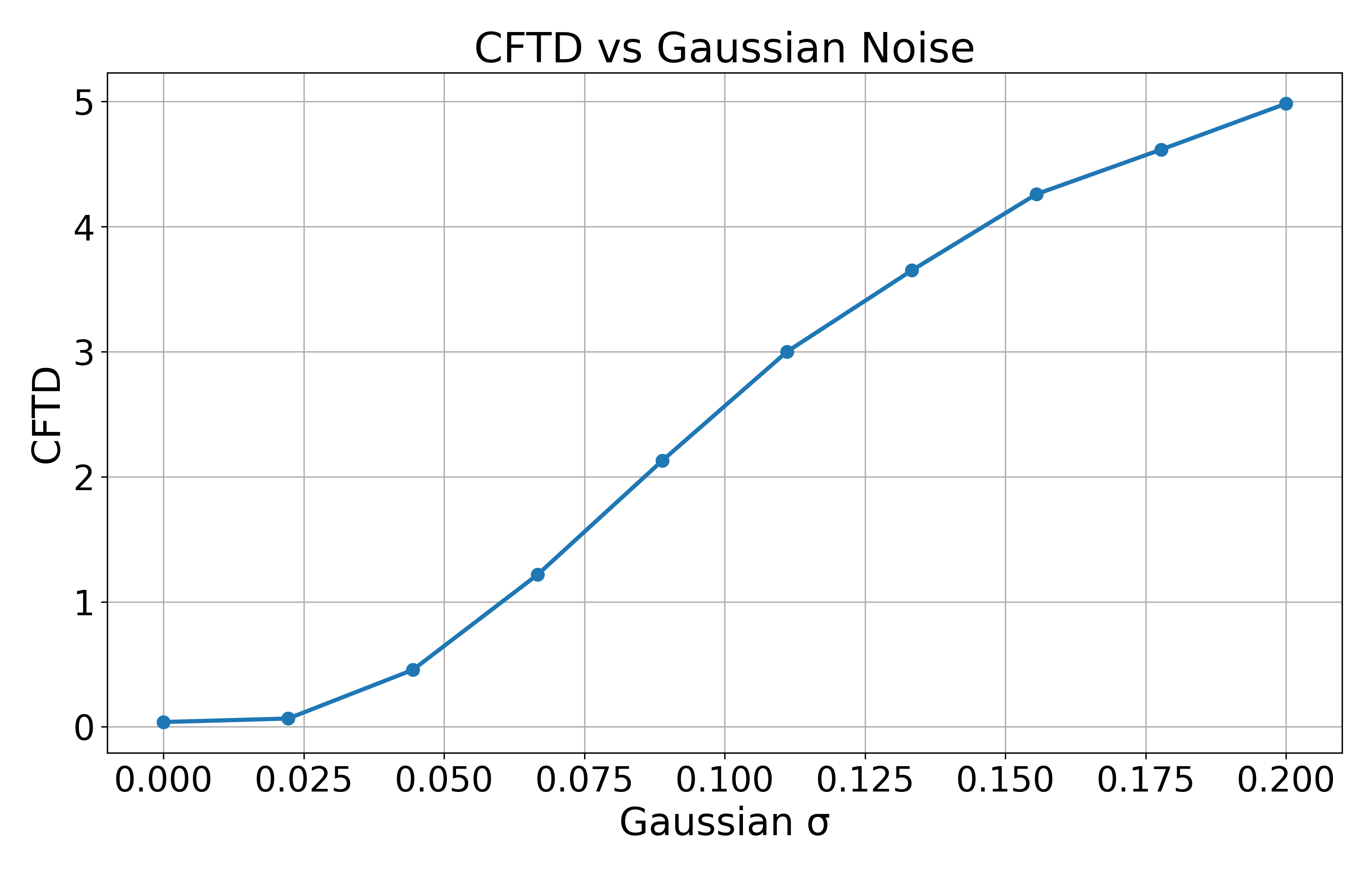}
        \caption{Gaussian noise.}
    \end{subfigure}
    \hfill
    \begin{subfigure}{0.48\textwidth}
        \includegraphics[width=\linewidth]{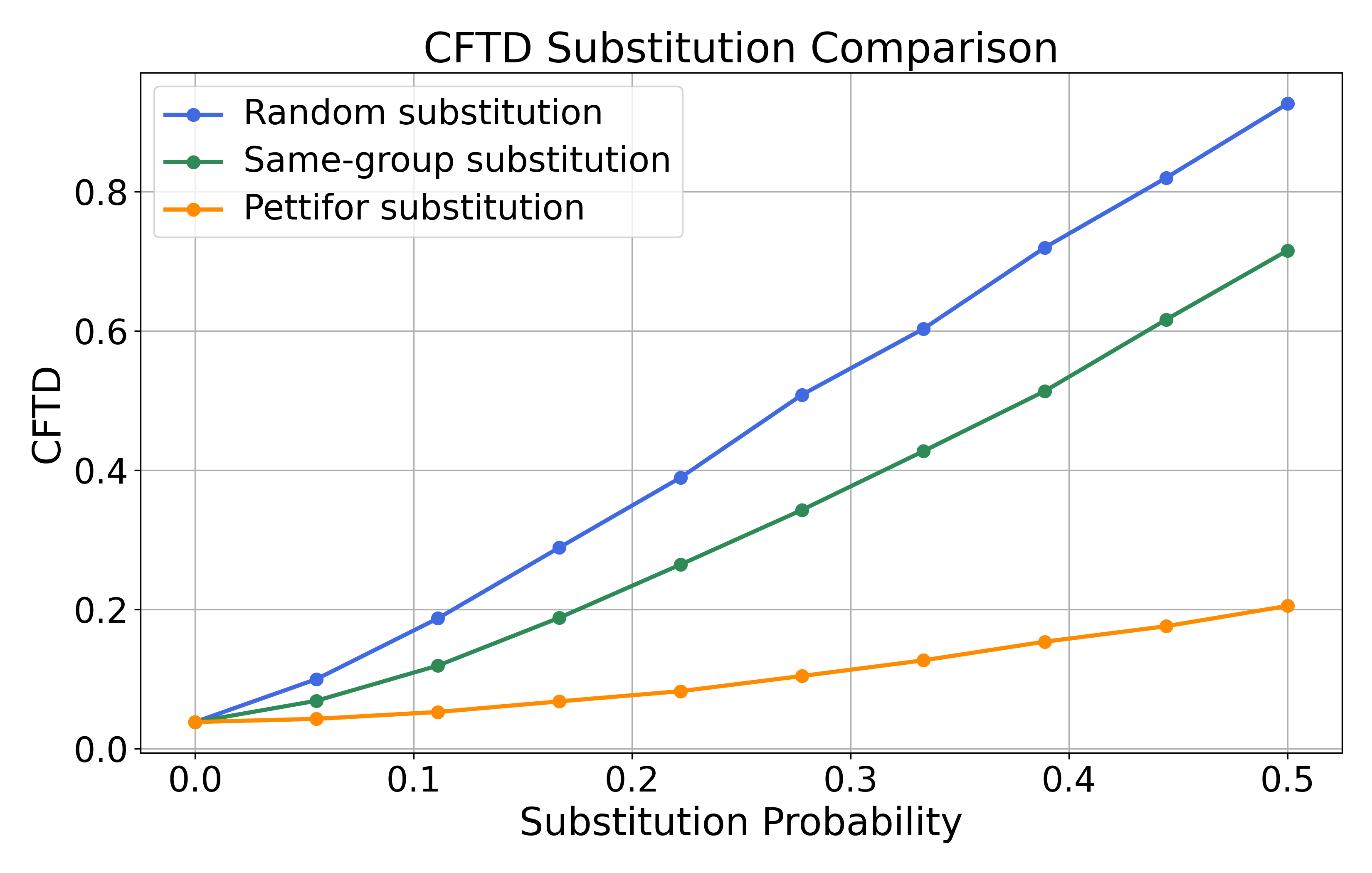}
        \caption{Substitutions.}
    \end{subfigure}
    \hfill
    \begin{subfigure}{0.48\textwidth}
        \includegraphics[width=\linewidth]{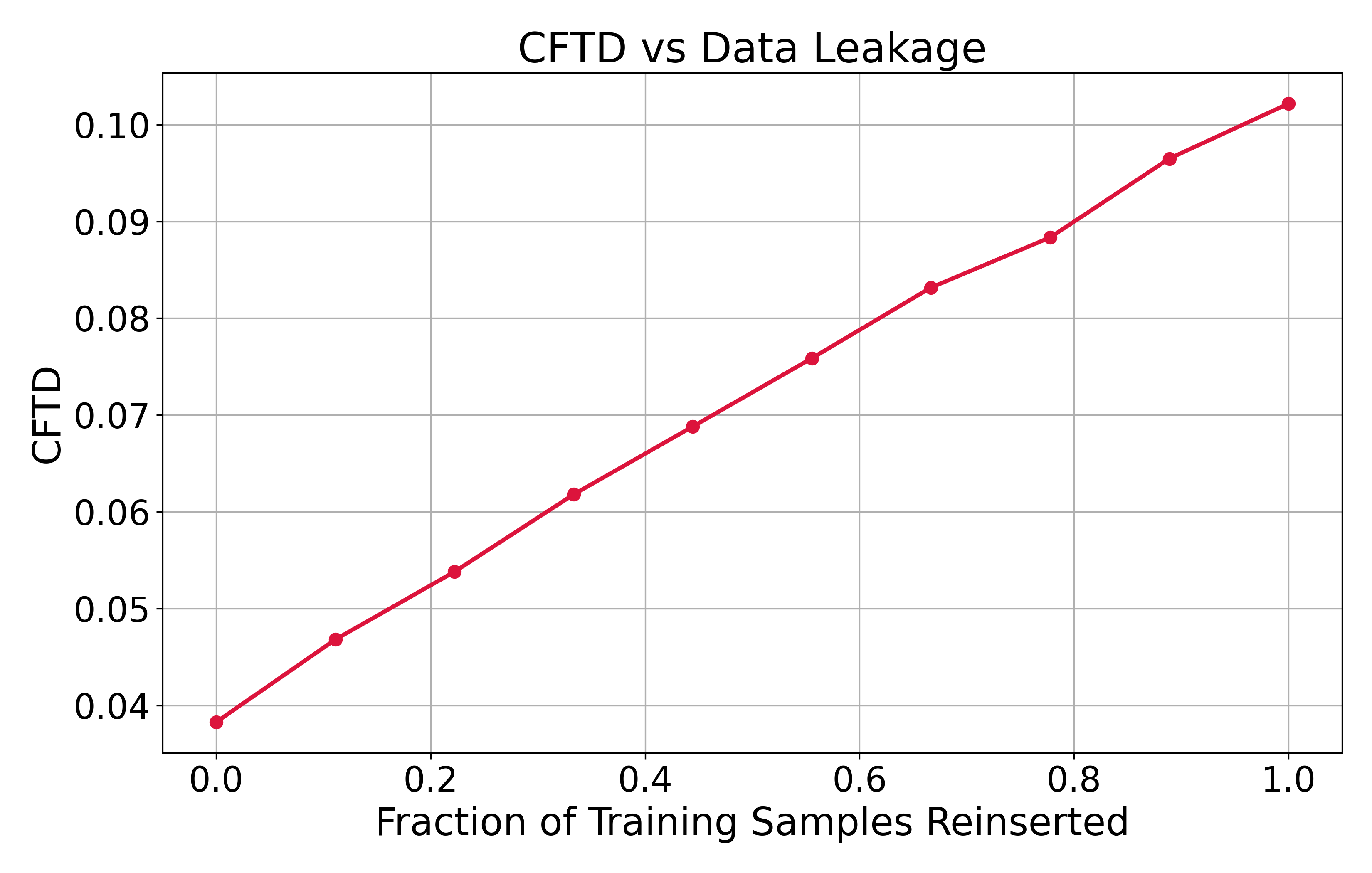}
        \caption{Train leakage.}
    \end{subfigure}

    \caption{CFTD behavior of the MP20 train and test set across different perturbation settings of the test set.}
    \label{fig:CFTD_perts}
\end{figure}

The behavior of these tests is shown in Fig.\ref{fig:CFTD_perts}. We observe that Gaussian noise and lattice strain both increase the CFTD heavily. This reflects the necessary behavior of our metric, since both experiments create structures with increasingly poorer quality. Fig.\ref{fig:CFTD_perts} (d) shows the results for iteratively replacing more and more test set structures with structures from the train set, displaying a monotone increase in the CFTD. Since this experiment simulates the memorization of structures, this is the expected result. Interestingly, the metric is very sensitive to atom substitutions (Fig.\ref{fig:CFTD_perts} (c)). Random substitutions are penalized more strongly than in-group substitutions, which, in turn, are more strongly penalized than modified Pettifor substitutions. This is reasonable, as structures created with Pettifor substitutions are expected to be higher in quality compared to structures created with in-group or random substitutions. This gain in quality compensates the effect of increased memorization in structures with Pettifor substitution, which comes from the inclusion of modified Pettifor neighbors in the augment function of the contrastive GNN. The Pettifor substitutions are penalized more strongly in the memorization component, which is not seen in the final CFTD value of Fig.\ref{fig:CFTD_perts} (c), due to the higher quality of the structures. This shows that with this change, the CFTD is able to detect chemical similarity very well. We also remark that, in contrast to TNovD (see the same Figure in \cite{hagemann2025transportnoveltydistancedistributional}), our metric is able to capture chemical differences in a distinctive way.
Further, our approach seems to penalize Gaussian noise and lattice strain more heavily than TNovD, indicating a better treatment of stability. 

Although the experiments yield highly divergent CFTD values that prevent a straightforward interpretation, the results in Fig.\ref{fig:CFTD_perts} prove that problematic structures and data leakage are indeed punished by our metric. We also repeat the same experiment with the Perov-5 dataset \cite{castelli_perov5, castelli_perov52, xie2022crystal} in the Appendix \ref{sec:perov_5}, where similar conclusions are drawn.

\paragraph{Comparison with continuous SUN}
Since our CFTD bears similarities to the continuous SUN metrics \cite{negishi2026continuoussunstableunique}, we propose a small toy experiment to showcase their differences. We use their implementation in xtalmet \cite{negishi2025continuous} and create three subsets of 1,000 structures each. Two subsets were created by randomly drawing 1000 structures from the MP20 train and validation set, respectively. For the third subset, only structures from the validation set with fewer than ten atoms were considered. 
In the toy experiment, the first subset was treated as the training set, while the other two simulated the generated structures. With this, CFTD and cSUN were calculated for the second and the third subset.
This experiment simulates a strong distributional shift, since the third subset covers only a specific part of the full distribution of structures (only those with at most ten atoms). 
As shown in Fig. \ref{fig:sunvscftd} (a), cSUN values barely differ between subsets two and three. This is not surprising, since, at an instance level, there is no reason to assume that materials with fewer than ten atoms are less unique or novel than those with more atoms. Nevertheless, if a model trained on the complete dataset generates only structures with fewer than 10 atoms, it would be considered mode-collapsed. The CFTD penalizes memorization and quality by optimizing the transport plan between the training and generated distributions. It is therefore primed to detect the artificially induced distributional shift, and, as indicated by the higher CFTD values in Fig. \ref{fig:sunvscftd} (a), it indeed does so. This shows that CFTD complements existing metrics by clearly detecting distributional shifts between training and generated samples. Since the training objective of generative models is to create samples that fit the training distribution without exactly mirroring it, this helps to identify potentially problematic models that score well in other metrics. To simulate a different kind of distributional shift, we perform a second toy experiment by creating a subset of 1,000 randomly drawn structures from the validation set that do not contain oxygen. As shown by the results in Fig. \ref{fig:sunvscftd} (b), we observe the same behavior.

\begin{figure}[!htpb]
    \centering    
    \begin{subfigure}{0.48\linewidth}
        \centering
        \includegraphics[width=\linewidth]{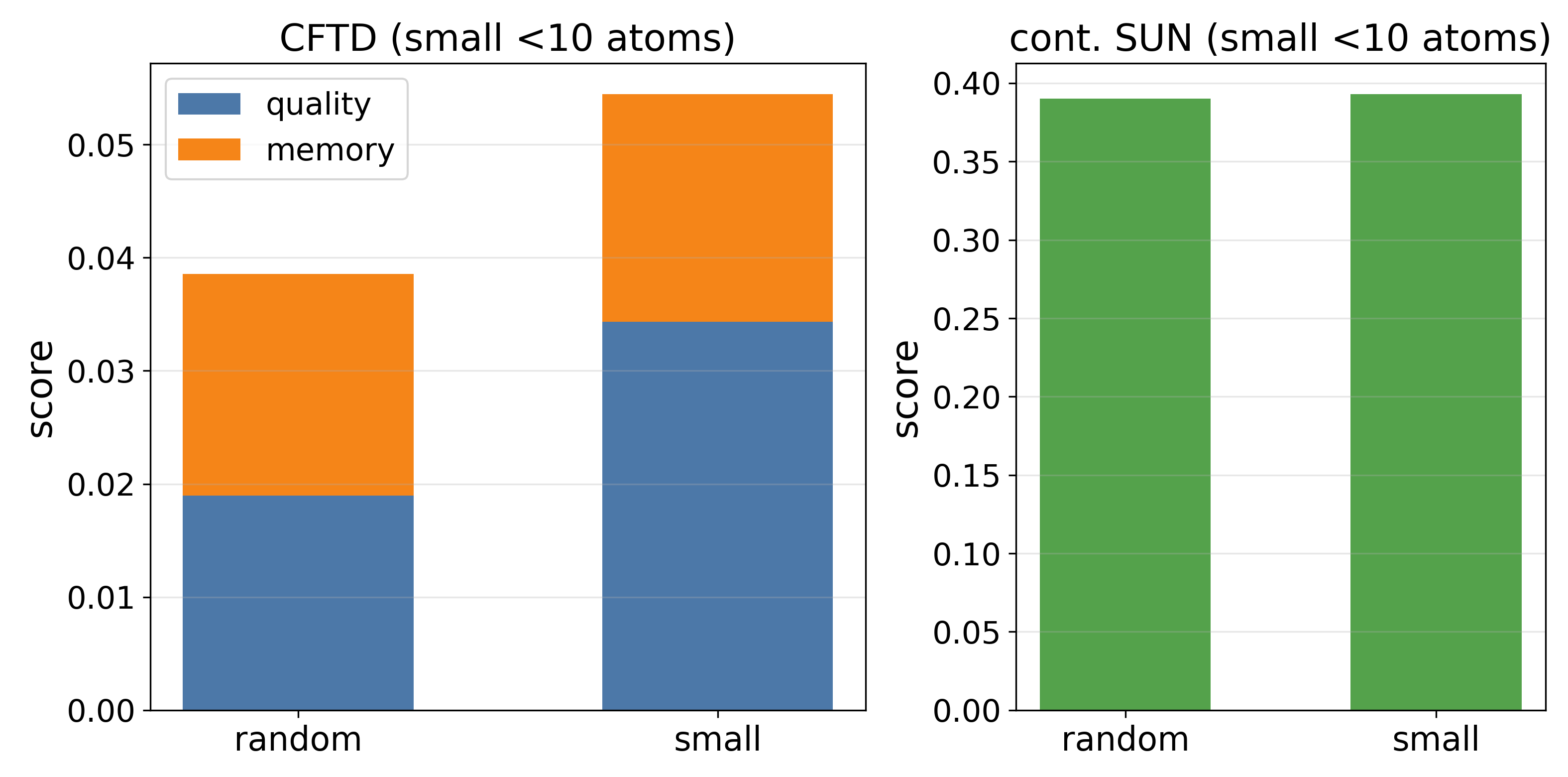}
        \caption{Comparison of a reference set versus a subset with less than 10 atoms.}
    \end{subfigure}
    \hfill
    \begin{subfigure}{0.48\linewidth}
        \centering
        \includegraphics[width=\linewidth]{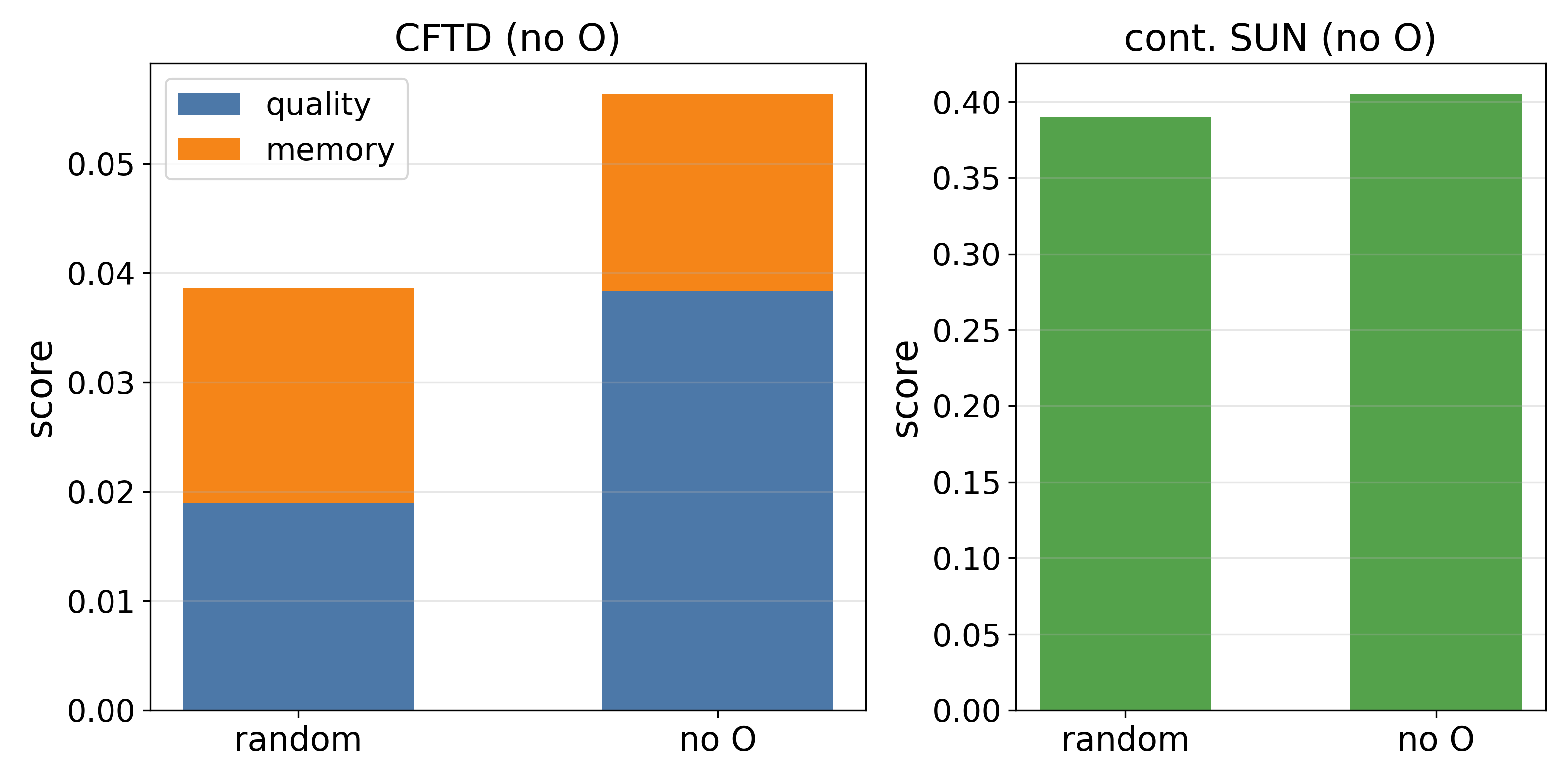}
        \caption{Comparison of a reference set versus a subset without Oxygen.}
    \end{subfigure}
    
    \caption{Comparison of continuous SUN versus CFTD.}
    \label{fig:sunvscftd}
\end{figure}

\begin{figure}[!htpb]
    \centering
    \begin{subfigure}{0.45\textwidth}
        \centering
        \includegraphics[width=\linewidth, scale = 0.8]{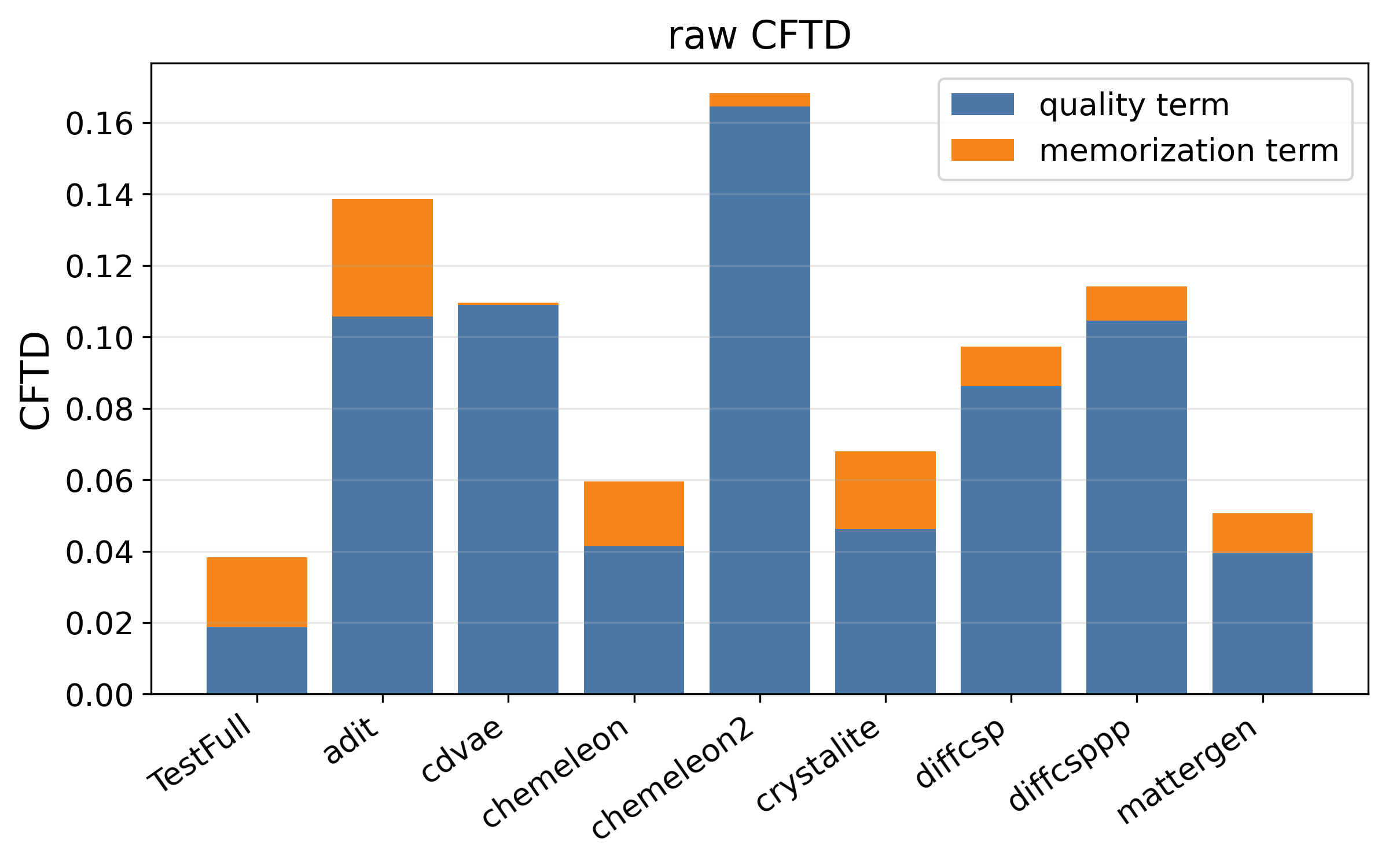}
        \caption{CFTD for unrelaxed structures from different generative models.}
    \end{subfigure}
    \hfill
    \begin{subfigure}{0.45\textwidth}
        \centering
        \includegraphics[width=\linewidth, scale = 0.8]{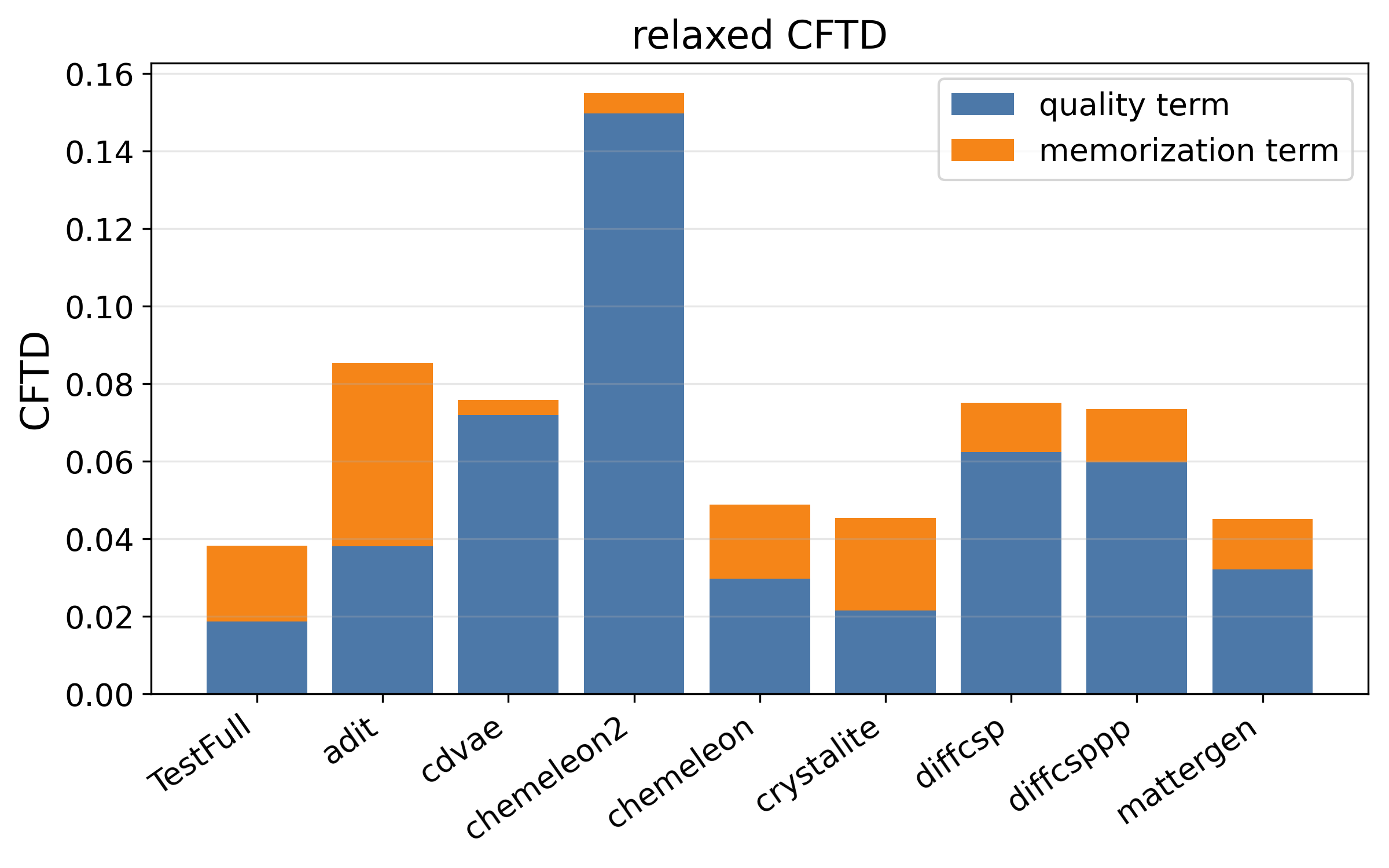}
        \caption{CFTD for relaxed structures from different generative models.}
    \end{subfigure}
\caption{Comparison of common material generative models: relaxed and unrelaxed.}
\label{fig:models_cftd}
\end{figure}

\subsection{Evaluation of current material generators}

We also evaluate current material generators with samples provided in \cite{negishi2026continuoussunstableunique}. This comes in handy, because our proposed CFTD bears many similarities with these continuous stability, uniqueness, and novelty metrics. Additionally, the same MP20 dataset was used to train the generators and our contrastive GNN, enabling us to use the generated structures directly without retraining the identity featurizer.  

We follow our proposed hyperparameter fitting procedure on the MP20 train and validation sets and provide evaluations of the materials generative models MatterGen \cite{MatterGen2025}, ADiT \cite{joshi2025allatomdiffusiontransformersunified}, Chemeleon \cite{llmcrystalgeneration}, Chemeleon2 \cite{park2025guidinggenerativemodelsuncover}, CdVAE \cite{xie2022crystal}, DiffCSP \cite{jiao2024crystalstructurepredictionjoint}, DiffCSP++ \cite{jiao2024spacegroupconstrainedcrystal}, Crystalite \cite{crystalite}, as well as the MP20 test set.

To investigate the influence of relaxation (i.e., a structural optimization of the atomic positions and unit cell parameters), we relaxed all generated structures with the MACE-MP-0b3 foundation model (model: "medium") for 100 steps and 0.02~\text{\AA}. In general, the relaxed structures from the generative models achieve lower CFTD values (i.e., can be considered better) but also exhibit higher memorization. 

As shown in Fig. \ref{fig:models_cftd}, our metric consistently ranks the MP20 test set structures as the best, with MatterGen following. The Chemeleon and Crystalite models also perform well, albeit with a higher memorization term than MatterGen. In particular, Crystalite and MatterGen are very close in the relaxed case and mainly differ in their novelty-quality coverage trade-off. 
 
The biggest difference from \cite{negishi2026continuoussunstableunique} seems to be the performance of Chemeleon2. It is ranked highest in \cite{negishi2026continuoussunstableunique} but scores the lowest on our metric. As a reinforcement-learning-based model, it deliberately overweights rare events, thereby shifting the generated distribution away from the training distribution. This is consistent with analysis in \cite{negishi2026continuoussunstableunique}, where they write that Chemeleon2 employs "reward hacking" under their CSUN. We hypothesize that this is caused by overconcentration, which is penalized by a distributional metric like CFTD, but mostly ignored by an \emph{instance-level} one.

\section{Guiding a material generator using MLIP features}
In addition to evaluating generative models, we wanted to test whether the coarse MACE features carry a usable signal for generation itself and therefore designed a MACE-conditional generator. The aim of this section is not to propose a SOTA generator, but to verify two things: that a generator can use the MACE features, and that it faithfully obeys them as a condition, i.e., generates materials that respect the conditional input. Both of these are necessary conditions for a \emph{hierarchical} generator, which we briefly describe in Appendix \ref{app:ext}. 

In the following, we describe a material as a tuple $(n, (a_i)_{i=1}^n, L, (x_i)_{i=1}^n, h)$, where $n$ describes the number of atoms, $a_i$ describes the (categorical) atom types, $L$ the lattice, and $(x_i)$ the fractional coordinates of atom positions. Compared with existing generative models, we include an additional parameter $h$, which corresponds to our coarse mace features.  
We choose the factorization $$p(n,h,L,a,x) = p(n,h)p(a|n,h) p(L|n,h,a)p(x|a,n,h,L),$$
which says that based on the MACE features and the number of atoms, we generate the atom types, then the lattice, and then the positions. 
In particular, we learn all the conditional probabilities $p(a|n,h)$, $p(L|a,n,h)$ and $p(x|L,a, n, h)$ using the means of flow matching \cite{albergo2023building,lipman2023flow, liu2023flow, gat2024discrete}. Given the variable type, we use continuous-flow matching for the position and lattice, and discrete-flow matching for the atom types. 

For simplicity, we choose $p(n,h) =\frac{1}{N}\sum_{i=1}^N \delta_{(n_i,h_i)}$, i.e., the empirical distribution. The other components are learned as follows:

\begin{itemize}
    \item \emph{Atom types: } We use a transformer trained with a log-likelihood objective. By applying a discrete flow matching-style approach \cite{gat2024discrete}, we model each atom type as a categorical value with a masking path. We sample using the substitution-based parameterization (SUBS) approach from \cite{sahoo2024simple}, which discretizes unmasking based on the learned probabilities. 
    \item \emph{Lattice: }We follow the parametrization used in \cite{jiao2024spacegroupconstrainedcrystal}, where they essentially choose a basis of the set of symmetric matrices, and use Euclidean flow matching in the space of this basis. 
    \item \emph{Atom positions :} We adopt an adapted version of an equivariant GNN \cite{pmlr-v139-satorras21a} and modify it into a non-equivariant architecture. We do this by adding directional and moment information \cite{Novikov_2021} that is passed through a multilayer perceptron (MLPs) and FiLM-style conditioning \cite{perez2018film}. The geometry chosen is flow matching on fractional spaces, similar to \cite{wu2026dmflow, sriram2024flowllm}. 
\end{itemize}
A detailed description can be found in Appendix \ref{sec:model}. Our modeling ideas build upon recent works of material generators.

\subsection{Evaluation}
\paragraph{Overfitting on a small subset}

To judge both CFTD and the consistency of our generator, we first train our model on a small subset of 2,000 materials from the MP20 dataset \cite{xie2022crystal}. With so few samples for training, we expect our model to memorize to some degree. We train it for 16,000 epochs and validate the generated structures, both relaxed and unrelaxed, every 2,000 steps using the CFTD metric. The CFTD allows an insightful comparison of quality coverage and novelty, which we visualize in Fig. \ref{fig:macematgen}. We see that both unrelaxed (Fig. \ref{fig:macematgen} (a)) and relaxed (Fig. \ref{fig:macematgen} (b)) runs have a minimal CFTD at epoch 2000. Memorization increases much faster for relaxed structures, which we trace back to imperfections in the positional generator. The generated fractional coordinates are sometimes close to, but not at the exact equilibrium positions. These small positional offsets can prevent generated structures that are identical copies of structures in the train set from being identified as such. By applying the MACE relaxation, the atoms are moved to equilibrium positions and, as a result, the affected structures are then correctly identified as memorized. These results show a vital advantage of CFTD for evaluating the training of material generative models: it can be easily combined with relaxation procedures, unlike, e.g., the validation loss. Increased memorization is a clear indicator of overfitting, and since this might only be visible after relaxation (as seen by comparing Fig. \ref{fig:macematgen} (a) and (b)), CFTD can identify such behavior more easily than other metrics. 

\begin{figure}[h!]
    \centering

    \begin{subfigure}{0.45\textwidth}
        \centering
        \includegraphics[width=\linewidth]{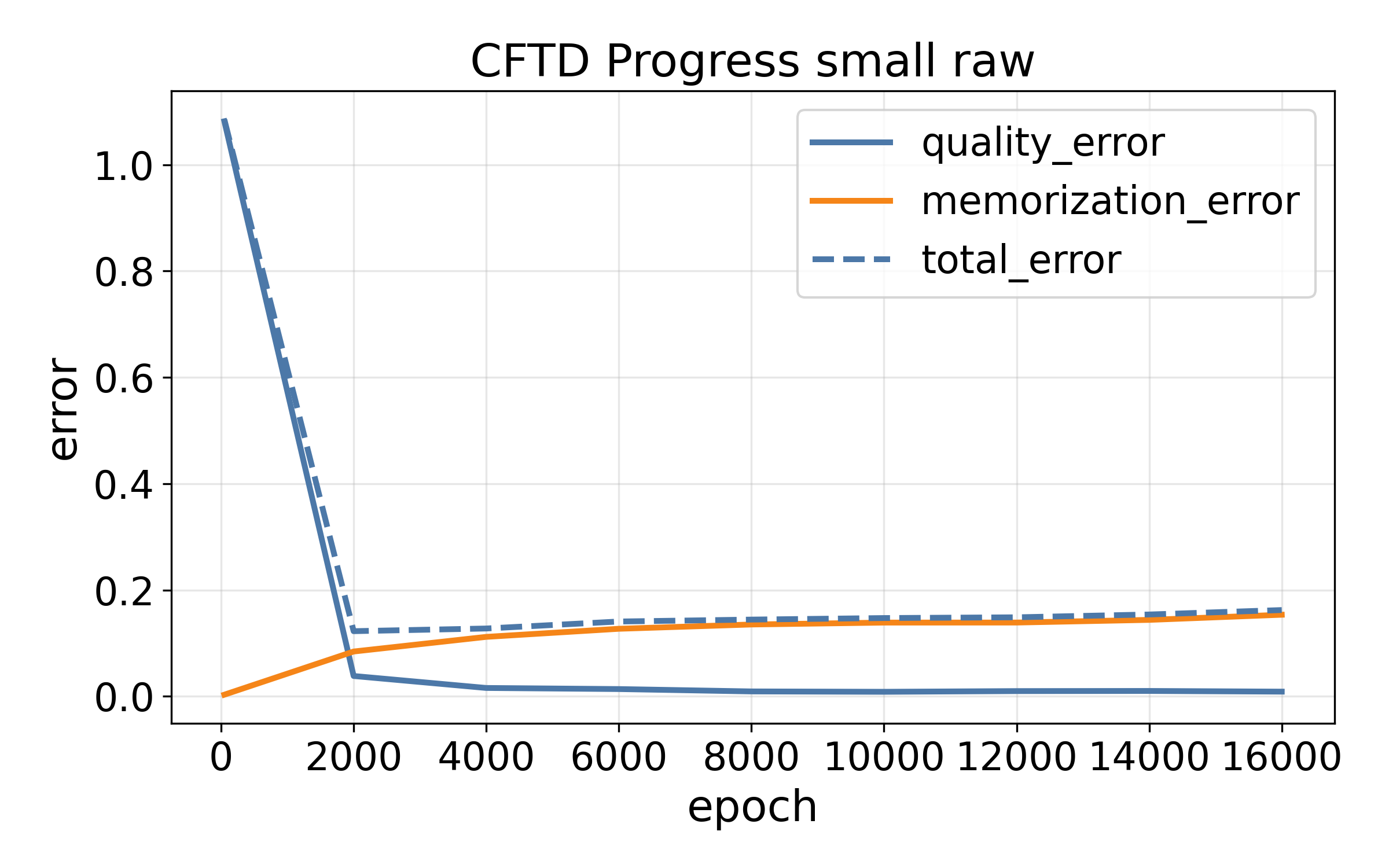}
        \caption{CFTD over epochs (unrelaxed).}
    \end{subfigure}
    \hfill
    \begin{subfigure}{0.45\textwidth}
        \centering
        \includegraphics[width=\linewidth]{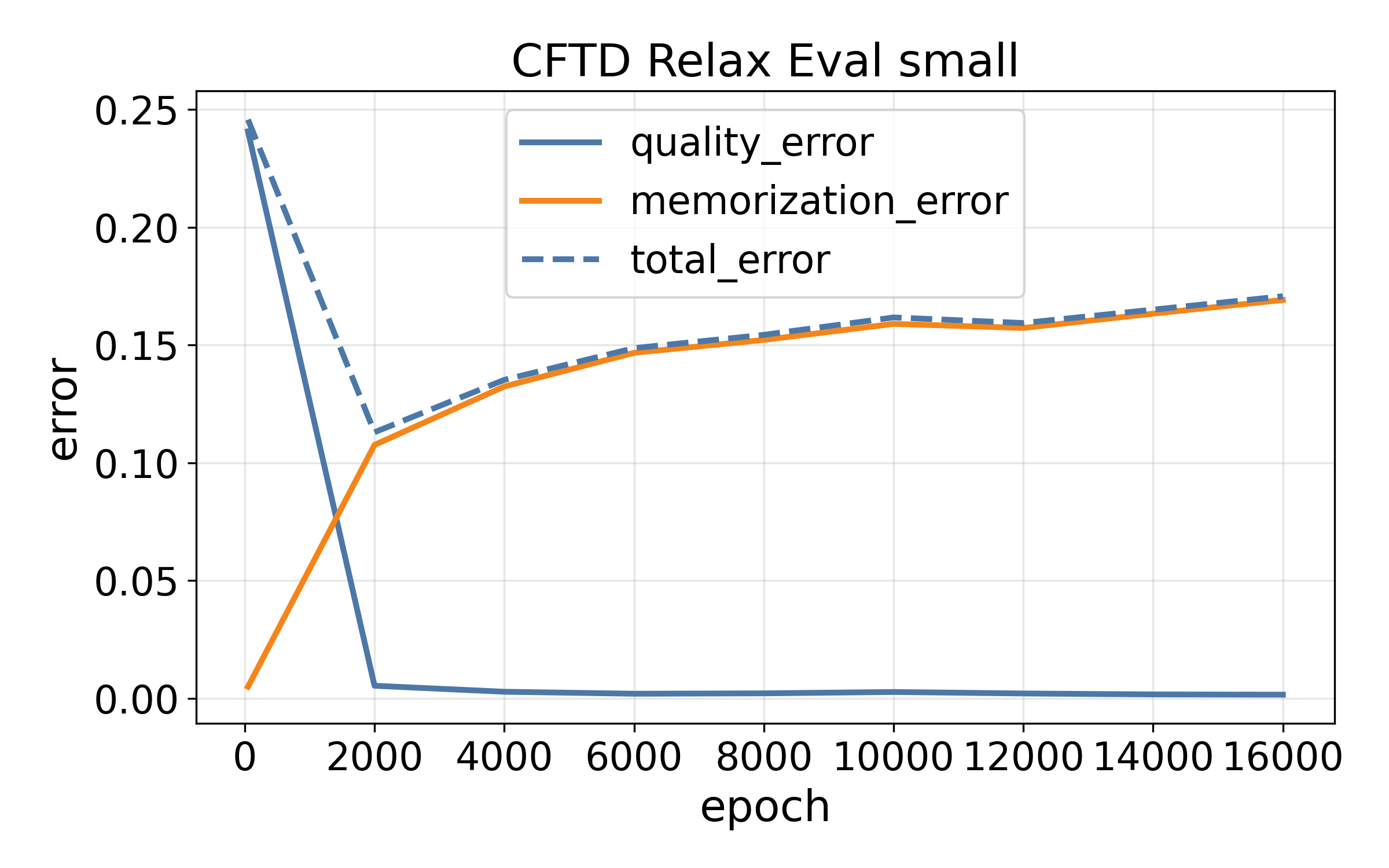}
        \caption{CFTD over epochs (relaxed).}
    \end{subfigure}
\caption{Comparison of our proposed material generator trained on an MP20 subset of 2,000 materials only. }
\label{fig:macematgen}
\end{figure}

\begin{figure}[h!]
    \centering

    \begin{subfigure}{0.49\textwidth}
        \centering
        \includegraphics[width=\linewidth]{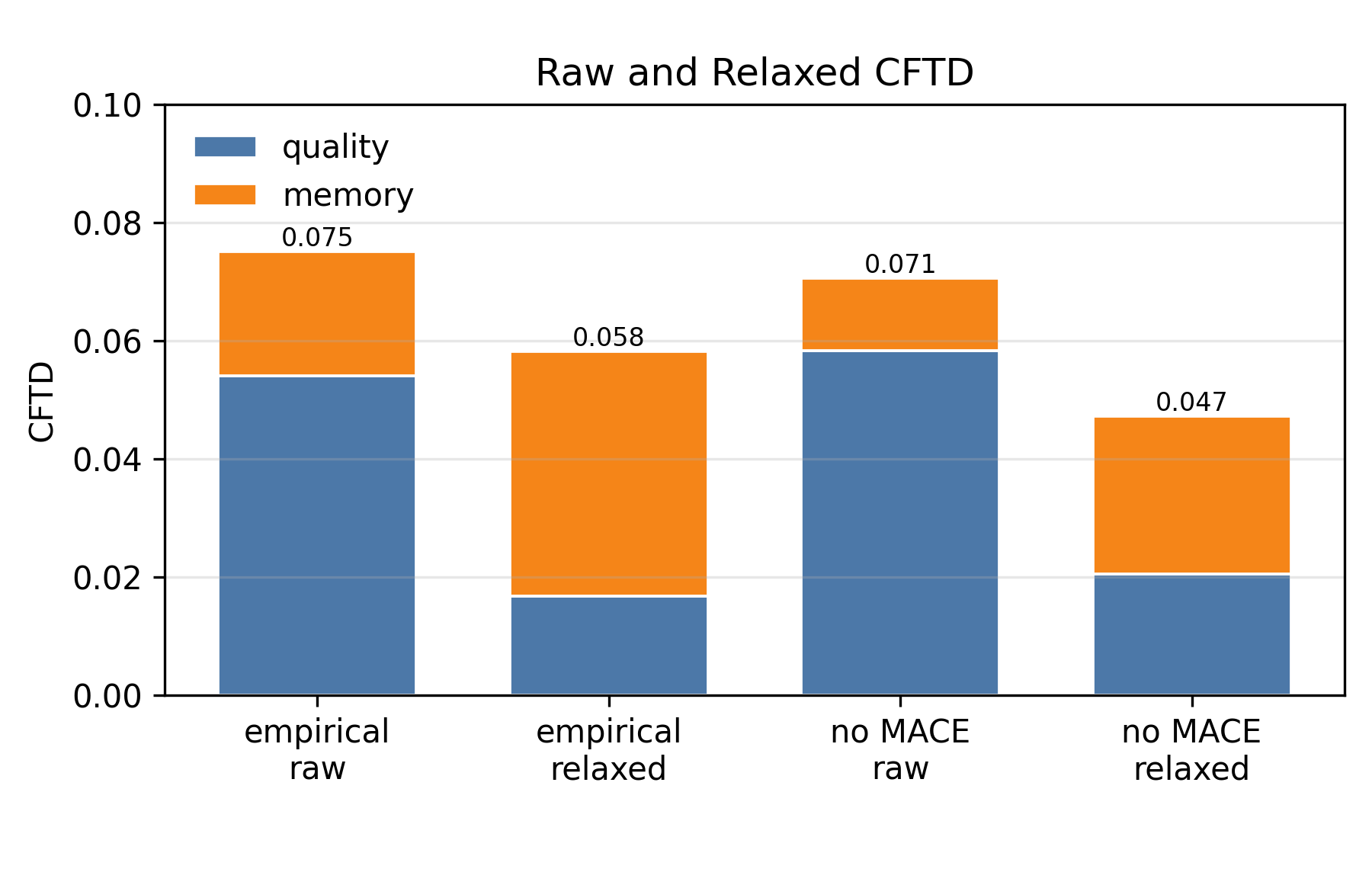}
        \caption{CFTD for the final models (empirical and none), raw and relaxed.}
    \end{subfigure}
    \hfill
    \begin{subfigure}{0.43\textwidth}
        \centering
        \includegraphics[width=\linewidth]{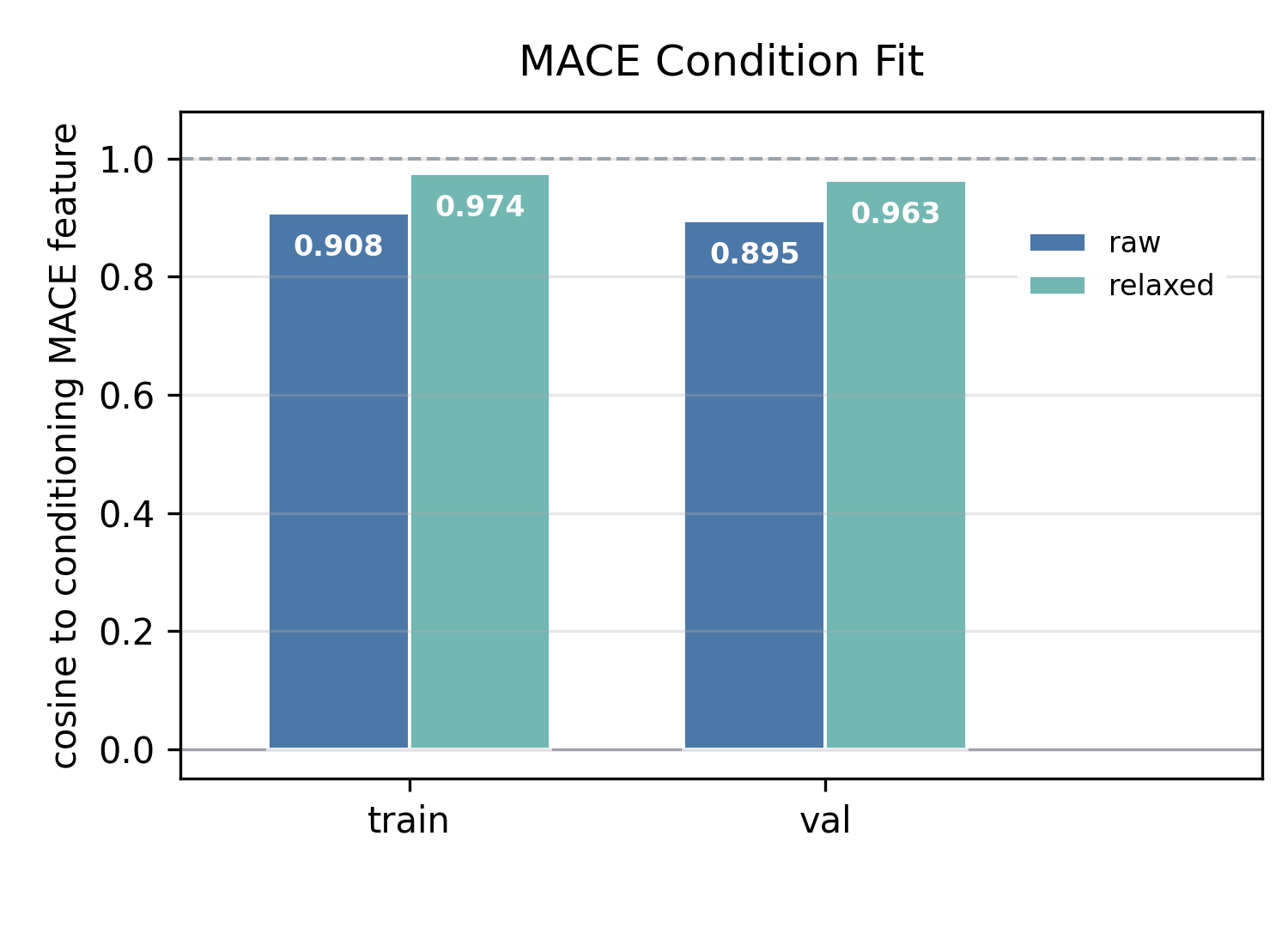}
        \caption{Cosine similarity for the empirical distribution between generated materials and condition.}
    \end{subfigure}
\caption{CFTD of models. Cosine similarities are for materials generated conditioned on empirical features, and they are tested to verify that they satisfy the condition.}
\label{fig:mace_cond_fit}
\end{figure}

\paragraph{Full Training}
To investigate whether coarse MACE features can be a useful guide for a material generative model, we tested whether adding empirical MACE features improves performance on LeMat-GenBench \cite{betala2026lematgenbenchunifiedevaluationframework}, i.e., whether the guidance provides a useful downstream signal. More precisely, we compare the MACE-conditional model from above against an otherwise identical model that does not use MACE features. In the MACE-conditional model, generation is initialized by sampling the pair ((n,h)), where (n) is the number of atoms in the unit cell and (h) is the corresponding pooled MACE feature vector projection. In the empirical variant considered here, we use the empirical distribution of the MP20 train set as the prior over these conditions, i.e., $(p(n,h)=\frac{1}{N}\sum_i \delta_{(n_i,h_i)})$. The remaining components of the generator then sample atom types, lattice, and fractional coordinates conditional on this sampled pair. Since the conditions are taken directly from the empirical distribution of the training set, one may expect increased memorization. This can potentially be improved by generating the MACE feature conditions as well, and we discuss possible updates of the generator in Appendix \ref{app:ext}. At the same time, because MACE features encode structural, chemical, and physical information, they may guide the generator toward improved (meta)stability. To investigate such potential upsides, we compare the conditional and unconditional models using both CFTD and LeMat-GenBench metrics. 
Furthermore, we check whether the generated materials indeed comply with the MACE feature conditioning. This means that when applying the MACE featurizer on the generated samples, we check whether the resulting features matches the condition we fed to the model. 

The results are shown in Fig. \ref{fig:mace_cond_fit}. In Fig. \ref{fig:mace_cond_fit} (a), we see that the empirical MACE-conditional model exhibits higher memorization but better quality coverage (i.e., a lower quality value). The cosine distances in Fig. \ref{fig:mace_cond_fit} (b) indicate a very high degree of agreement between the features of the generated materials and the conditions, as they barely change after relaxing the structures to their energetic ground state with MACE-MP-0b3. We trained both models for 1,000 epochs and saved the model with the lowest validation loss. 

On LeMat-GenBench, the empirical MACE-conditioned model improved validity and stability over the No-MACE baseline 
($V=98.5$ vs.\ $97.2$, $S=11.7$ vs.\ $7.7$, $M=51.6$ vs.\ $44.4$), while the No-MACE model attains slightly higher novelty 
($N=61.0$ vs.\ $51.2$). Both models have similar SUN and MSUN profiles; the empirical MACE-conditioned model has SUN $0.8$, No-MACE has $0.5$. For MSUN, the empirical MACE-conditioned model reaches $16.7$, No-MACE reaches $18.0$. This shows that the empirical MACE-conditioned model does precisely what it should: guide the models towards more stable but less novel samples (since we fed in exact train MACE features). This shows high potential for learning a novel MACE feature space that combines stability improvements with greater novelty. 

Note that while the CFTD evaluation of the MACE-conditioned model shares the same underlying MACE representation, the pre-relaxation and evaluation procedure of LeMat-GenBench does use a more thorough relaxation procedure with 3 different MLIPs. 

\section{Conclusions}
In this paper, we introduced a dual featurizer framework for evaluating material generative models, extending the existing TNovD. We related our featurizers to the more classical terms of coverage and novelty in the respective featurizer geometries. We showed that our resulting CFTD exhibits consistent behavior compared to TNovD and can detect mode collapse, indicating a more distributional flavor than the cSUN metrics. We also showed that our metric  detects overfitting, making it desirable as a model checkpointer. However, it needs to be investigated more closely how our CFTD compares to other possible checkpoint strategies, such as the validation loss or MSUN, as taken in \cite{crystalite}. 

We further investigated the utility of the MACE features beyond generative model evaluation. We show that they inform both composition and structure and can be used to guide a conditional material generative model. This opens up a very interesting line of research for a \emph{hierarchical} model that first generates coarse MACE features and then builds a generative model for structure creation on top of them. 
A further benefit of conditioning a generative model on MACE features is that it opens a route to property-conditional materials generation: a MACE feature generator could be steered towards regions in the MACE feature space corresponding to structures with the desired property, thereby obtaining a property-conditional generator without retraining a separate conditional model for each target.
For the \emph{hierarchical} model, one needs to investigate the learnability-identifiability tradeoff, i.e., investigate how well the choices we make can inform the generator and how easy it is to learn the MACE feature space. Other summary statistics might be viable, such as taking atom-level features or the max or standard deviation instead of pooling. This provides a new way of designing material generators, which we believe is a fruitful area for future research. We discuss potential extensions to conditional generation in Appendix \ref{app:ext}.

\section*{Code Availability}
The code for the CFTD evaluation metric is published at https://github.com/BAMeScience/cftd. It includes a pre-trained Identity featurizer (a contrastive GNN) that was trained on MP20 data, which allows users to easily score their own generative model, if also trained on this dataset. It also includes the code to retrain the Identity featurizer on new data, if that is necessary.

\bibliographystyle{unsrt}
\bibliography{bibi}
\newpage
\appendix
\section{Primer on Optimal Transport}
We will briefly recall the basics of Optimal Transport (OT), following \cite{peyré2020computationaloptimaltransport}; for a more theoretical, abstract overview, see \cite{villani}. 

Optimal Transport is concerned with the task of how to efficiently transport mass from one probability measure $\mu$ to another one $\nu$. Specifically, it tries to find a coupling $\pi \in \Pi(\mu,\nu)$ such that $\pi = \mathrm{argmin}_{\pi \in \Pi(\mu,\nu)}\int \Vert x-y\Vert d\pi(x,y)$, where $\Pi(\mu, \nu)$ is the set of probability measures with marginals $\mu$ and $\nu$. 

In the case of empirical, disjoint measures of the same size, this reduces to finding the optimal permutation $\pi$, i.e., which point $x_i$ should be transported to which other one $y_j$, which can be solved using the Hungarian algorithm \cite{hungarian}. In practice, we use the POT package \cite{flamary2021pot} with the standard ot.emd function.

Note that OT is also available when the two probability measures have different numbers of samples. However, we then need to split the mass. For instance, consider $\mu = \delta_0$ and $\nu = \frac{1}{2}\delta_0+ \frac{1}{2}\delta_{-2}$, then "half" of the mass of $\delta_0$ gets mapped to 0, whereas the other half gets mapped to -2. This also explains how OT penalizes mode collapse. If a point is overrepresented in one (like the 0) but not in the other measure, it manifests as a transportation error. This is also the reason why TNovD and CFTD penalize distributional shifts, see \cite{hagemann2025transportnoveltydistancedistributional} for an illustration.

Regarding runtime, Hungarian matching (or the LP solver we use) scales as $n^3$ in the number of particles \cite{peyré2020computationaloptimaltransport} and is therefore more expensive than the Sinkhorn algorithm \cite{cuturi_sinkhorn}. However, for a larger number of particles, our method could utilize similar heuristical couplings as done in OT flow matching \cite{pooladian_fm}. To scale to more particles, we outline two potential ways. 

\paragraph{Expected Slice Transport}
To overcome poor scaling, we propose using expected slice transport, introduced in \cite{liu2025expected}. 
The main idea is to use slicing to reduce dimensionality when calculating the optimal transport plan. Assume we want to find a transport between the measure $\mu = \frac{1}{n}\sum_i \delta_{x_i}$ and $\nu =\frac{1}{m}\sum_i \delta_{y_j} $, then we define the sliced probability measure with respect to a vector $\Vert \theta \Vert = 1$ as $\theta_{\#} \mu =\frac{1}{n}\sum_i \delta_{\theta^T x_i} $ and similarly for $\nu$. Now for each $\theta$, find an optimal transport plan $\tilde{\gamma}_{\theta} \in \Pi(\theta_{\#}\mu, \theta_{\#}\nu)$. Since this trick makes it a one dimensional problem, it can now easily be calculated via sorting, before being lifted to a genuine transport between the original points $x_i$ and $y_j$, which we call $\gamma_{\theta}$ (note that this is easiest to understand in the case $n=m$ but also works in $n \neq m$). 
All transport plans are averaged to find the so-called \emph{Expected Sliced Transport} (EST) 
$$\gamma =\mathbb{E}_{\theta \sim S^{d-1}}[\gamma_{\theta}].$$

Note that in \cite{liu2025expected} they have a more general construction, which we can disregard when working with empirical measures and assuming no projection clashes. 

\paragraph{Multiscale Optimal Transport}
In \cite{gerber2017}, another strategy is outlined that reduces scaling to $n \log{n}$. There we find a multiscale partition of training and generated features, i.e., we denote the data by $X_j$ at scales $j=0,...,J$, with 0 being the coarsest scale and $J$ the full data. Each scale is indexed further by a number of clusters, i.e., $X_j = \{(X_j)_k\}_{k=1}^{K_j}$. Each cluster has a representative point (the center of mass), and we solve an OT problem between each scale represented by this point. We now solve OT between training points $(X_j)_k$ at scale $j$ and generated points $(G_j)_k$. Since each finer scale cluster $(X_{j+1})_{k'}$ is contained in one coarse scale cluster, we can propagate that we only solve at scale $j+1$ with the clusters that were supported at scale $j$. Depending on the internal structure, this reduces run time to $n \log{n}$ in the case of a genuine tree partitioning. 

These two directions could support scaling our transport distance to much larger sample sizes. However, neither yields OT couplings and the implications for novelty detection remain to be investigated, as they could match suboptimally or smear mass. 

\section{Transport Novelty Distance and Coverage-Novelty Bounds}
\label{app:tnovd}
We discuss here the recently introduced Transport Novelty Distance and firstly introduce the mathematical notions of coverage and novelty. 
\paragraph{Coverage and Novelty}
In early generative modeling, people proposed precision and recall, as well as continuous versions thereof, to evaluate generative models \cite{NEURIPS2019_0234c510,prec_rec1, prec_rec2}. Following \cite{prec_rec2}, precision is defined as the fraction of the generated distribution $Q$ that is in the support of the true data distribution $P$. More specifically, these are generated samples that lie within the support of the training samples. 
Conversely, all samples from the true data distribution $P$ that are contained in the generated distribution $Q$ yield the recall. This precision--recall viewpoint motivates the two quantities used in our analysis. It provides the conceptual starting point for our analysis, because it separates two complementary failure modes of generative models: generating samples outside the true data distribution and failing to cover parts of the true data distribution. To connect this viewpoint to our transport-based metric, we use the closely related radius-based notions of coverage and novelty. \emph{Coverage} measures whether each data sample is approximated by at least one generated sample, and is therefore aligned with recall. \emph{Novelty} measures whether generated samples remain outside a prescribed neighborhood of the training data, allowing us to distinguish genuinely new samples from memorized ones. Thus, coverage and novelty provide the concrete quantities that CFTD is designed to control. More specifically, we want high coverage regarding the quality of the generated structures (the quality parameters of the generated samples are supposed cover the training data distribution), while we also want high novelty regarding the structural and compositional aspects (the generated structures are supposed to not exactly memorize structures from the training set).

We define the \emph{coverage} \cite{pmlr-v119-naeem20a} of the measure $\mu = \frac{1}{n} \sum_{i=1}^n \delta_{x_i}$ and $\nu = \frac{1}{m} \sum_{i=1}^m \delta_{g_i}$ of radius $r$ as 
$$\mathrm{Cov}_r(\mu, \nu) = \frac{1}{n}\sum_{i=1}^n 1_{\exists j: \Vert x_i -g_j \Vert \leq r}.$$
If the distance of at least one generated sample $g_j$ to the investigated sample $x_i$ is smaller than or equal to $r$, the respective sample $x_i$ is counted as covered by the generated distribution. 
\emph{Coverage} therefore indicates, how much of the true data distribution $\mu$ is covered by the generated distribution $\nu$. 

Similarly, we define the \emph{novelty} \cite{negishi2026continuoussunstableunique} with radius $r$ as 
$$\mathrm{Nov}_r(\mu, \nu) = \frac{1}{m} \sum_{j=1}^m 1_{\forall i: \Vert x_i -g_j \Vert > r}.$$
If all samples $x_i$ from the true data distribution have a distance larger than $r$ to the investigated generated sample $g_j$, the respective sample $g_j$ is counted as novel.
For the final \emph{novelty}, we sum over all samples $g_j$, for which we cannot find any $x_i$ from the true data distribution up to a distance of $r$. As such, \emph{novelty} controls how much of the generated distribution $\nu$ is not included in the training distribution $\mu$.   

Note that \cite{pmlr-v119-naeem20a} defines the error using the distances between k-nearest-neighbors, which is precisely due to the fact that the threshold needs to be chosen data-adaptively.

\paragraph{Transport Novelty Distance}
Here we quickly recall the main definition of TNovD \cite{hagemann2025transportnoveltydistancedistributional}, where we aimed to find a quality-novelty metric for unconditional generative models that jointly penalized generated distributions that are either unrealistic or copies of the training set.  

With available training samples $(x_i)_{i=1}^n$ and generated samples $(g_i)_{i=1}^m$, the TNovD follows two steps:
\begin{enumerate}
    \item Step 1: Calculate the pairwise distance matrix $C_{i,j} = \Vert x_i -g_j \Vert$ and the OT plan $\pi(\mu, \nu) \in \mathbb{R}^{n,m}$ between $\mu = \frac{1}{n}\sum_{i=1}^n \delta_{x_i}$ and $\nu = \frac{1}{m} \sum_{i=1}^m \delta_{g_i}$. This is realized as a doubly stochastic matrix. 
    \item Step 2: Calculate the actual TNovD metric as $$\mathrm{TNovD}(\mu, \nu) = \sum_{i=1}^n \sum_{j=1}^m \pi_{i,j}(\mu, \nu) (\mathrm{ReLU}(C_{i,j}-\tau) + M\ \mathrm{ReLU}(\tau-C_{i,j})).$$
\end{enumerate}
During step 2, $\tau$ serves as a threshold that splits $C_{i,j}$ into a memorization and a quality regime, which are supposed to reflect the above introduced novelty and quality coverage, respectively. Since the memorization values are below the threshold $\tau$, they are naturally lower than those of the quality regime. To adapt to this, a hyperparameter $M$ is tuned to scale the memorization values. The two hyperparameter $\tau$ and $M$ are therefore crucial for the success of this algorithm. 

We will now show that we can bound the quality coverage and novelty using the TNovD, i.e. a low TNovD translates to a high quality coverage \emph{and} a high novelty. While the coverage bound is natural and trivial to formulate, the novelty requires a non-degeneracy condition that ensures that whenever a generated sample is close to a training sample, the OT plan also assigns mass to it. In the following simple lemma, we highlight the connections between TNovD and the established metrics coverage and novelty. 

\begin{lemma}
\label{lem:tnovd}
Let $\mu = \frac{1}{n} \sum_{i=1}^n \delta_{x_i}$ and $\nu = \frac{1}{m} \sum_{j=1}^m \delta_{g_j}$ be as above and assume that $r > \tau$. Then,  
$$1- \mathrm{Cov}_r(\mu, \nu) \leq \frac{\mathrm{TNovD}(\mu, \nu)}{r-\tau}.$$

Now assume $r < \tau$, and assume that there exists $\alpha >0$, $\min_i C_{i,j} \leq r\implies \sum_{i: C_{i,j} \leq r} \pi_{i,j} \geq \frac{\alpha}{m}$. Then we obtain that 

$$1-\mathrm{Nov}_r(\mu, \nu) \leq  \frac{\mathrm{TNovD}(\mu,\nu)}{\alpha\ M\ (\tau-r)} .$$

\end{lemma}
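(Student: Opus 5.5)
Both bounds follow by restricting the TNovD sum to the relevant part of the transport plan and using that each ReLU term is bounded below by a constant there. The marginal constraints of $\pi=\pi(\mu,\nu)$, namely $\sum_j \pi_{i,j}=\frac1n$ and $\sum_i \pi_{i,j}=\frac1m$, do the remaining work.

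\emph{Coverage bound ($r>\tau$).} Let $U=\{i : \min_j C_{i,j}>r\}$ be the uncovered training indices, so that $1-\mathrm{Cov}_r(\mu,\nu)=|U|/n$. Both terms of the TNovD integrand are nonnegative, so we drop the memorization term and keep only rows $i\in U$:
\[
\mathrm{TNovD}(\mu,\nu)\ \ge\ \sum_{i\in U}\sum_{j=1}^m \pi_{i,j}\,\mathrm{ReLU}(C_{i,j}-\tau).
\]
For $i\in U$ we have $C_{i,j}>r>\tau$ for every $j$, so each ReLU term exceeds $r-\tau$. The row marginal then gives
\[
\mathrm{TNovD}(\mu,\nu)\ \ge\ (r-\tau)\,\frac{|U|}{n},
\]
which rearranges to the first claim. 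No assumption on $\pi$ is needed, because every coupling must send the full row mass $1/n$ somewhere, and in an uncovered row every destination is far.

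\emph{Novelty bound ($r<\tau$).} Let $V=\{j : \min_i C_{i,j}\le r\}$ be the non-novel generated indices, so that $1-\mathrm{Nov}_r(\mu,\nu)=|V|/m$. This time we keep only the memorization term and the entries with $j\in V$ and $C_{i,j}\le r$:
\[
\mathrm{TNovD}(\mu,\nu)\ \ge\ M\sum_{j\in V}\ \sum_{i:\,C_{i,j}\le r}\pi_{i,j}\,\mathrm{ReLU}(\tau-C_{i,j}).
\]
On these entries $\tau-C_{i,j}\ge\tau-r>0$. By the non-degeneracy hypothesis, the inner mass satisfies $\sum_{i:C_{i,j}\le r}\pi_{i,j}\ge\alpha/m$ for each $j\in V$. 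Hence
\[
\mathrm{TNovD}(\mu,\nu)\ \ge\ M(\tau-r)\,\alpha\,\frac{|V|}{m},
\]
which gives the second claim after dividing by $\alpha M(\tau-r)$.

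The only real obstacle is conceptual and sits in the novelty part. The column mass of a non-novel $g_j$ could in principle be sent entirely to distant training points, and then the memorization penalty would never see it. That is exactly why the hypothesis involving $\alpha$ is required, and the proof simply invokes it; the remaining steps are one-line estimates. One care point is that the inequalities must be strict versus non-strict in the right places (e.g.\ $C_{i,j}>r$ gives $C_{i,j}-\tau>r-\tau$), and for both bounds $\ge$ suffices.
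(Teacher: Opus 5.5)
Your proposal is correct and follows the paper's proof essentially step for step. For the coverage bound, drop the memorization term and use the row marginal $\sum_j \pi_{i,j}=\frac1n$ on the uncovered rows; for the novelty bound, drop the quality term and invoke the $\alpha$-hypothesis on the non-novel columns, with the only cosmetic difference being that you restrict to uncovered rows before lower-bounding the ReLU, whereas the paper first bounds by $(r-\tau)\sum_{i,j}\pi_{i,j}1_{C_{i,j}>r}$ over all pairs and then restricts.
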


This is a positive statement for TNovD: It shows that we can indeed control both novelty and coverage. On the other hand, to increase both novelty and quality coverage, we would need to choose $r > \tau$ and $r < \tau$ simultaneously.

\begin{proof}
Fix the coupling $\pi(\mu, \nu)$. We now rewrite 
$$\mathrm{TNovD}(\mu, \nu) = \sum_{i=1}^n \sum_{j=1}^m \pi_{i,j} (C_{i,j}-\tau)1_{C_{i,j} > \tau} + M (\tau-C_{i,j}) 1_{C_{i,j} \leq \tau}.$$

We can drop the second part, then this implies $$\mathrm{TNovD}(\mu, \nu) \geq \sum_{i=1}^n \sum_{j=1}^m \pi_{i,j} (C_{i,j}-\tau)1_{C_{i,j} > \tau}.$$

Hence, for $r > \tau$ that 
$$\mathrm{TNovD}(\mu, \nu) \geq \sum_{i=1}^n \sum_{j=1}^m \pi_{i,j} (r-\tau)1_{C_{i,j} > r}.$$

Using this, we directly get  $$\sum_{i,j} \pi_{i,j} 1_{C_{i,j} > r} \leq \frac{\mathrm{TNovD}}{r-\tau}.$$

By properties of the OT plan, namely that $\sum_{j=1}^m \pi_{i,j} = \frac{1}{n}$ we get 
$$\sum_{i,j} \pi_{i,j} 1_{C_{i,j} >r} \geq \sum_{i=1}^n 1_{\forall j: \Vert x_i - g_j \Vert > r} \frac{1}{n} =  1- \mathrm{Cov}_r(\mu, \nu).$$

For the second part, we define the set of non-novel points, i.e., $S = \{j: \min_i C_{i,j} \leq r\}$. These are the generated points that are $r$-close to a training point. 

In the same manner, we bound 
$$\mathrm{TNovD}(\mu, \nu) \geq \sum_{j \in S} \sum_{i: C_{i,j} \leq r} M\ \pi_{i,j} (\tau - r) \geq \frac{|S| \alpha}{m} M (\tau - r).$$

Since $S$ is precisely the set of generated points that are $r$-close to training samples, we can redefine novelty via $$\mathrm{Nov}(\mu, \nu) = \frac{1}{m} \sum_{j =1}^m 1_{j \in S^c} = \frac{1}{m} |S^c|.$$

Using $|S| + |S^c| = m$, we have
$1-\frac{|S|}{m} = \mathrm{Nov}_r$. 

This overall implies the claim by rearranging that $$1-\mathrm{Nov}(\mu,\nu) \leq \frac{\mathrm{TNovD}(\mu,\nu)}{\alpha M (\tau-r)}.$$
\end{proof}

Note that for the CFTD, we now redefine \begin{align*}\mathrm{Cov}_r(\mu, \nu) = \frac{1}{n} \sum_{i=1}^n 1_{\exists j: \Vert F^{mace}(x_i) - F^{mace}(g_j) \Vert \leq r},\ \mathrm{Nov}_r(\mu, \nu) = \frac{1}{m} \sum_{j=1}^m 1_{\forall i: \Vert F^{id}(x_i) - F^{id}(g_j) \Vert > r}.
\end{align*}

\begin{theorem}
\label{thm:cftd}
Define the cost functions $C^{id}_{i,j} = \Vert F_{id}(x_i) - F_{id}(g_j) \Vert$ and $C^{mace}_{i,j} = \Vert F_{mace}(x_i) - F_{mace}(g_j) \Vert$ for the empirical measures $\mu = \frac{1}{n}\sum_{i=1}^n \delta_{x_i}$ and $\nu = \frac{1}{m}\sum_{j=1}^m \delta_{g_j}$. 

Further, denote $\pi$ an optimal coupling associated to the cost $C = \frac{1}{2}C^{id} + \frac{1}{2}C^{mace}$.

Denote $r_{mem} < \tau_{mem}$ and similarly $r_{qual} > \tau_{qual}$ (Goldilocks Zone). 

We then have the coverage (quality) as
$$1-\mathrm{Cov}_{r_{qual}}(\mu,\nu) \leq \frac{\mathrm{CFTD}(\mu,\nu)}{r_{qual}-\tau_{qual}} $$

Assume that there exists $\alpha >0$ such that for every $j$ $\min_i C^{id}_{i,j} \leq r_{mem}  \implies \sum_{i: C^{id}_{i,j} \leq r_{mem}} \pi_{i,j} \geq \frac{\alpha}{m}$, then we have novelty as 
$$1- \mathrm{Nov}_{r_{mem}}(\mu, \nu) \leq \frac{\mathrm{CFTD(\mu,\nu)}}{\alpha\ M (\tau_{mem}- r_{mem})}.$$

\end{theorem}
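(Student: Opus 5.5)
The argument follows the proof of Lemma~\ref{lem:tnovd} almost verbatim. The only structural difference is that the two penalty terms of the CFTD now live in different cost matrices, $C^{id}$ and $C^{mace}$, while the coupling $\pi$ is optimal for the averaged cost $C$. The key observation is that neither bound in Lemma~\ref{lem:tnovd} used optimality of $\pi$. The coverage bound only needs the marginal constraint $\sum_j \pi_{i,j} = \frac1n$ and nonnegativity of $\pi$. The novelty bound only needs the non-degeneracy assumption with constant $\alpha$, which the theorem imposes directly for $C^{id}$ and this particular $\pi$. We therefore fix $\pi$ as given and treat it as an arbitrary element of $\Pi(\mu,\nu)$.

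\emph{Coverage.} Since all summands of the CFTD are nonnegative, we drop the memorization term. This gives
\[
\mathrm{CFTD}(\mu,\nu) \geq \sum_{i,j}\pi_{i,j}\,\mathrm{ReLU}(C^{mace}_{i,j}-\tau_{qual}) \geq (r_{qual}-\tau_{qual})\sum_{i,j}\pi_{i,j}1_{C^{mace}_{i,j}>r_{qual}}.
\]
Now consider any $i$ that is not covered at radius $r_{qual}$, i.e.\ $C^{mace}_{i,j}>r_{qual}$ for all $j$. For such $i$, the whole row mass $\sum_j \pi_{i,j}=\frac1n$ sits in the indicator set. Hence the last sum is at least $\frac1n\,\#\{i \text{ uncovered}\} = 1-\mathrm{Cov}_{r_{qual}}(\mu,\nu)$, using the CFTD-adapted definition of coverage in the MACE feature space. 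Dividing by $r_{qual}-\tau_{qual}>0$ gives the first inequality.

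\emph{Novelty.} We instead drop the quality term. Let $S=\{j : \min_i C^{id}_{i,j}\le r_{mem}\}$ be the set of non-novel generated samples. For pairs with $C^{id}_{i,j}\le r_{mem}$ we have $M\,\mathrm{ReLU}(\tau_{mem}-C^{id}_{i,j}) \ge M(\tau_{mem}-r_{mem})$. Restricting the sum to $j\in S$ and those $i$, and applying the $\alpha$-assumption, we get
\[
\mathrm{CFTD}(\mu,\nu) \ge M(\tau_{mem}-r_{mem})\sum_{j\in S}\sum_{i:C^{id}_{i,j}\le r_{mem}}\pi_{i,j} \ge M(\tau_{mem}-r_{mem})\,\alpha\,\frac{|S|}{m}.
\]
Since $|S|/m = 1-\mathrm{Nov}_{r_{mem}}(\mu,\nu)$ for the identity-feature novelty, rearranging gives the second bound.

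There is no real technical obstacle. The one point to check carefully is that the mismatch between the coupling cost (the average $C$) and the penalty costs ($C^{id}$, $C^{mace}$ separately) does not break anything. It does not, because only the marginal constraints and the stated non-degeneracy condition are used. The non-degeneracy condition is where optimality would matter in practice, but the theorem takes it as a hypothesis, so it need not be proved. The informal Proposition then follows: letting $\mathrm{CFTD}\to 0$ with fixed $r_{qual}>\tau_{qual}$, $r_{mem}<\tau_{mem}$, $M$ and $\alpha$ drives both $1-\mathrm{Cov}_{r_{qual}}$ and $1-\mathrm{Nov}_{r_{mem}}$ to zero.
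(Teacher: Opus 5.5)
Your proof is correct and follows the paper's argument essentially step for step: drop one nonnegative penalty term, lower-bound the remaining ReLU on the relevant indicator set, then use the row-marginal constraint $\sum_j \pi_{i,j} = \frac1n$ for coverage and the $\alpha$-non-degeneracy hypothesis on $S$ for novelty, never invoking optimality of $\pi$. Your strict indicator $1_{C^{mace}_{i,j} > r_{qual}}$ matches the definition of $\mathrm{Cov}_{r}$ (which uses $\le r$) exactly, whereas the paper uses $\ge r_{qual}$ and states an equality that only holds as the inequality in the needed direction, which is harmless.
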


This shows that we can indeed control both novelty and quality coverage independently, which allows us to explicitly look at the coverage of structural quality and structural memorization detached from one another. It also shows that both parameters are bound by the CFTD.

\begin{proof}
Denote the $\mathrm{CFTD}$ in its probabilistic form via $$\mathrm{CFTD}(\mu,\nu) = \mathbb{E}_{\pi}[M\mathrm{ReLU}((\tau_{mem}-C^{id}))+ \mathrm{ReLU}(C^{mace}-\tau_{qual})].$$

Note that here $C^{id}$ and $C^{mace}$ denote the random variables which are obtained from sampling $(i,j) \sim \pi$ and evaluating the costs. $\mathbb{E}_{\pi}$ is the expectation value with respect to $\pi$

On the event $\{C^{id} \leq r_{mem}\}$, we have $$\mathrm{CFTD}(\mu,\nu) \geq M (\tau_{mem}-r_{mem})\pi(C^{id} \leq r_{mem}),$$

and similarly, $$\mathrm{CFTD}(\mu,\nu) \geq (r_{qual}-\tau_{qual})\pi(C^{mace} \geq r_{qual}).$$

We can proceed as in the case of the proof of Lemma \ref{lem:tnovd} and show that $$\pi(C^{mace} \geq r_{qual}) =  \sum_{i,j} \pi_{i,j} 1_{C^{mace}_{i,j} \geq r_{qual}} \geq \frac{1}{n}\sum_{i = 1}^n 1_{\forall j: C^{mace}_{i,j} \geq r_{qual}} = 1-  \mathrm{Cov}_{r_{qual}}(\mu,\nu).$$

For the novelty we proceed similarly by defining $S = \{j: \min_i C^{id}_{i,j} \leq r_{mem}\}$. 

By assumption we have that $\sum_{i: C^{id}_{i,j}\leq r_{mem}} \pi_{i,j} \geq \frac{\alpha}{m}$, hence $$\pi(C^{id} \leq r_{mem}) = \sum_{j\in S}\sum_{i: C_{i,j}^{id} \leq r_{mem}} \pi_{i,j} \geq \frac{|S|\alpha}{m}, $$
from which we can infer the claim in the same way as before. 
\end{proof}

\section{On the fineness of the Identity features}
\label{sec:infonce}
In this section, we provide a brief mathematical argument explaining why a contrastive loss \cite{oord2019representationlearningcontrastivepredictive} is a good choice for the fine featurizer (i.e., to detect novelty). We want to show that if the InfoNCE loss is small, the probability that materials are mapped to the same features vanishes. This is by no means a new technical argument, but helpful for our story. A similar result, albeit 
in a much different language, has been shown in \cite{pmlr-v124-nozawa20a}. It basically says that a small InfoNCE loss yields good separation between distinct materials, thereby preventing false positives. 

\begin{proposition}
\label{info_nce:sep}
Let $x$ be a material, with positive augment $x^+$ (should be close) and negative augments $x_1,...,x_K$ (should not be close) with feature versions $z = F_{id}(x), z^+ = F_{id}(x^+), z_1,...,z_K = F_{id}(x_1),...,F_{id}(x_K)$
Further denote the InfoNCE loss for $x$ by $l_{nce} =-  \log\left(\frac{\exp(z^T z^+)}{\exp(z^T z^+)+\sum_{k=1}^K \exp(z^T z_k) } \right).$ Then the probability of the event $A_{\gamma} = \{\exists j: z^T z_j \geq z^T z^+ - \gamma\}$ can be bounded by 
$$\mathbb{P}(A_{\gamma}) \leq \frac{\mathbb{E}(l_{nce})}{\log(1+\exp(-\gamma))}.$$

\end{proposition}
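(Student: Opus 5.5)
The plan is to reduce the statement to a pointwise inequality between $l_{nce}$ and the indicator of $A_\gamma$, and then apply a Markov-type step. First observe that $l_{nce}\ge 0$ always, since the ratio inside the logarithm is at most one. Next, rewrite the loss as
$$l_{nce} = \log\Bigl(1+\sum_{k=1}^K \exp\bigl(z^T z_k - z^T z^+\bigr)\Bigr),$$
obtained by dividing numerator and denominator by $\exp(z^T z^+)$. This form makes the loss a monotone increasing function of each gap $z^Tz_k - z^Tz^+$.

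The key pointwise step is to show that on the event $A_\gamma$ there is an index $j$ with $z^T z_j - z^T z^+ \ge -\gamma$. Because all summands are nonnegative, dropping every term except the $j$-th gives
$$l_{nce} \ge \log\bigl(1+\exp(z^Tz_j - z^Tz^+)\bigr) \ge \log\bigl(1+\exp(-\gamma)\bigr).$$
Combined with $l_{nce}\ge 0$ off the event, this yields the pointwise bound
$$l_{nce} \ge \log\bigl(1+\exp(-\gamma)\bigr)\,1_{A_\gamma}.$$

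Taking expectations over the random draw of $x$, its positive augment and its negatives gives $\mathbb{E}(l_{nce}) \ge \log(1+\exp(-\gamma))\,\mathbb{P}(A_\gamma)$. Since $\log(1+e^{-\gamma})>0$ for every real $\gamma$, we may divide by it, which is exactly the claimed bound. This is the same "lower bound the objective on a bad event and divide" device used in the proofs of Lemma \ref{lem:tnovd} and Theorem \ref{thm:cftd}, so it fits naturally beside them.

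The only real subtlety is measure-theoretic bookkeeping, not analysis. One must make explicit that the probability $\mathbb{P}$ and the expectation $\mathbb{E}$ are taken with respect to the same joint law of $(x, x^+, x_1,\dots,x_K)$ together with any augmentation randomness. One must also note that $A_\gamma$ is measurable, which holds because it is a finite union of sets defined by continuous functions of the features. No independence assumptions or bounds on feature norms are needed. I would remark afterwards that letting $\gamma$ be moderately negative reads the statement as follows: a small expected InfoNCE loss forces, with high probability, a margin between the positive similarity and every negative similarity, i.e.\ distinct materials are separated.
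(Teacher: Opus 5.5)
Your proof is correct and is essentially the paper's argument. On $A_\gamma$ you keep only the $j$-th negative term to get $l_{nce}\ge\log(1+e^{-\gamma})$, and then apply Markov's inequality, which you simply inline via the pointwise bound $l_{nce}\ge \log(1+e^{-\gamma})\,1_{A_\gamma}$; this has the merit of making explicit the nonnegativity of $l_{nce}$ that the paper uses tacitly.

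There is one small slip in your closing remark. The complement of $A_\gamma$ is $\{\forall j:\ z^Tz_j < z^Tz^+ - \gamma\}$, so a genuine separation margin corresponds to $\gamma>0$ (at the price of a smaller denominator $\log(1+e^{-\gamma})<\log 2$), not to negative $\gamma$.
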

\begin{proof}
On the event $A_{\gamma}$, we have that by definition for some fixed $j$ that 
$$\frac{\exp(z^T z^+)}{\exp(z^T z^+)+\sum_{k=1}^K \exp(z^T z_k) } \leq \frac{\exp(z^T z^+)}{\exp(z^T z^+)+ \exp(z^T z_j)} \leq \frac{1}{1+\exp(-\gamma)}.$$

Taking logs yields that $$l_{nce} \geq \log(1+\exp(-\gamma)).$$
Hence, by Markov, we have $$\mathbb{P}(A_{\gamma}) \leq \mathbb{P}(l_{nce} \geq \log(1+\exp(-\gamma))) \leq \frac{\mathbb{E}(l_{nce})}{\log(1+\exp(-\gamma))}.$$
\end{proof}

This exemplifies the role of the fine featurizer: it defines an augmentation-dependent identity geometry in which structures related by the chosen augmentations are kept close, while other materials are encouraged to be separated. One could also take more classical approaches, such as the novelty "featurizers" employed in \cite{negishi2026continuoussunstableunique}; however, they need to be adapted to have a fixed length to fit the OT framework.

\section{Hyperparameter Selection}
\label{app:hyp_sel}
Here, we provide a precise description of how the hyperparameters are chosen in our framework. 

The input to the hyperparameter selection is as follows:
\begin{enumerate}
    \item The training data set
    \item The validation data set 
    \item The $\beta$-Goldilocks percentage, i.e., the relative number of points that are categorized into the Goldilocks zone. 
\end{enumerate}

From there on, we calculate the featurized versions of the training and the validation samples with respect to the two featurizers $F^{id}, F^{mace}$, and find the optimal coupling with respect to the coupling $C = \frac{1}{2} C^{id} + \frac{1}{2}C^{mace}$. Then, we assign that $\frac{1-\beta}{2}$ mass should belong to the "memorized" and $\frac{1-\beta}{2}$ belongs to the "low quality" zone. 

This allows us to enumerate the coupled cost. Now, we set $\tau_{mem} = \mathrm{Quantile}_{\frac{1-\beta}{2}}(C^{id})$ and $\tau_{qual} = \mathrm{Quantile}_{1-\frac{1-\beta}{2}}(C^{mace})$ with respect to the OT plan $\pi$ between train and validation, which just says that the thresholds are chosen such that only $\frac{1-\beta}{2}$ of the mass are categorized below $\tau_{mem}$ or above $\tau_{qual}$. Further, during the calculation of $\mathrm{CFTD}$, we choose these regions to be mutually exclusive, favoring a categorization as memorized via setting the "low quality ones" as the ones with $C^{mace}$ above $\tau_{qual}$ \emph{that are not categorized as memorized}. 

Now, we can calculate the memorized value $\sum_{i,j} \pi_{i,j} (\tau_{mem} - C^{id}_{i,j})$ as well as the "low quality" value  $\sum_{i,j} \pi_{i,j} (C^{mace}_{i,j}- \tau_{qual})$ and choose $M$ as the fraction of "low quality" value over memorized, i.e., scaling the values to match on the validation set. We remark that this is purely heuristic and intended to retain interpretability. It is a great future direction to improve this with concrete data labeling or explicit data perturbation procedures. 

\paragraph{Fine Featurizer Parameters}
The fine featurizer (to detect memorization) was trained for 10 epochs using an equivariant GNN with batch size 128, a contrastive InfoNCE loss and an augmentation function with modified Pettifor nearest-neighbor and next-nearest-neighbor substitution probabilities of $0.3$ and $0.01$, respectively, and Gaussian coordinate noise applied with probability $0.5$ and scale $0.02$.

\section{On the coarseness of MACE features}
\label{app:coarse_mace}
We investigate here the nature of the coarseness of the chosen MACE features, i.e. their ability to represent the quality of a structure. First, we will discuss what information the MACE features actually contain, particularly whether we can reconstruct space groups, lattice, and compositional information. 

Then, we will contrast the fine (for novelty) and coarse (for quality) features by probing how they treat augmentations, whether they separate them, and by comparing their cosine similarities. 

\subsection{What information is contained in the MACE features?}
\label{app:info_mace}
To further elucidate the role of the MACE features, we examine whether we can reconstruct space groups, composition, and lattice parameters, by simply learning a standard feedforward neural network from the MACE feature space to the respective structural parameters. We perform composition, lattice, and space-group analyses on the MP20-dataset. As it turns out, composition is the best-controlled part of a material, i.e., the easiest one to reconstruct from the MACE features. 
To check whether the reconstruction of space groups is inferred by the good prediction of the composition, we repeat the space group prediction on the Carbon24 data set \cite{carbon2020data,xie2022crystal}. This is interesting, since each material in this dataset consists solely of carbon, and the results tackles the hypothesis that MACE features encode only composition and no structural information. Given the architecture and success of pre-trained MACE models, this was, to a certain extent, already expected. However, since we apply mean pooling to the features and restrict ourselves to a certain number of random projections to create the feature space, we needed to assess the influence of these changes on the final features. Further, we test four different projection lengths of the mean-pooled invariant MACE features (32, 64, 128, and 256), with larger projections generally conveying more information. 

For the composition, we measure the $L1$-Distance between the predicted probability (i.e., the fraction of atoms in the generated structures relative to the full set) and the true frequency in the validation set. For the lattice, we compute the MAE between predicted and true lattice vectors and angles. The results can be seen in Fig.\ref{fig:lat_comp} and show that the composition is quite predictable, whereas an accurate prediction of the lattice is harder. Generally, both improve as the number of projections increases. 

Regarding the space group reconstruction, we report the accuracy of the predicted top space group for the two datasets in Fig.\ref{fig:sg_curve}. The space group accuracy is around 0.35-0.40 on the MP20 and 0.55-0.58 on the Carbon24 dataset.

These experiments show that the projections of the mean-pooled MACE features capture both chemical and geometric information. However, they do not suffice to perfectly reconstruct any parameter. The closest one is chemical composition. This underlines the coarse character of the MACE features. Generally, information improves as we add more projections. 

\begin{figure}
    \centering

    \begin{subfigure}{0.38\linewidth}
        \centering
        \includegraphics[width=\linewidth]{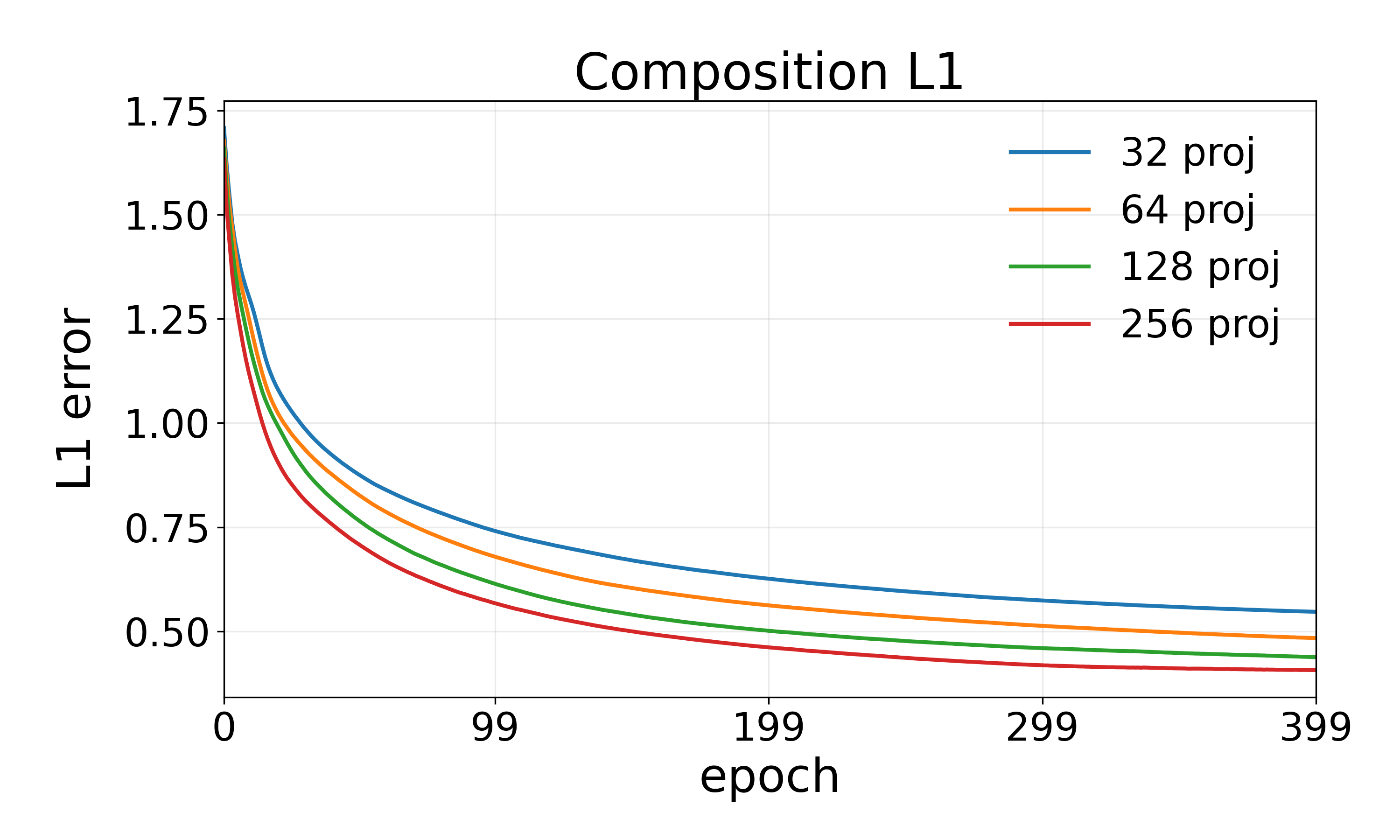}
        \caption{Composition curve.}
        \label{fig:comp_mp20}
    \end{subfigure}
    \hfill
    \begin{subfigure}{0.55\linewidth}
        \centering
        \includegraphics[width=\linewidth]{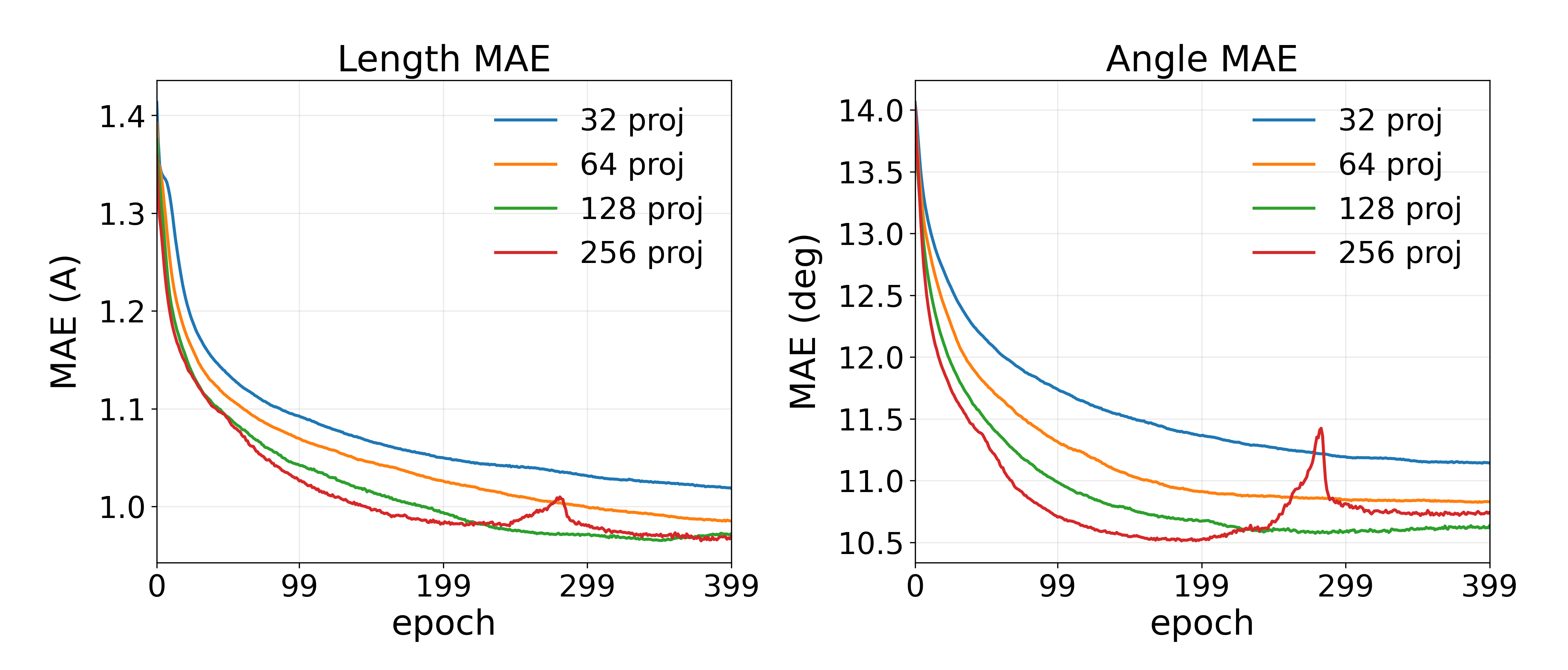}
        \caption{Lattice curve.}
        \label{fig:lattice_mp20}
    \end{subfigure}

    \caption{Composition and lattice reconstruction curves on MP20.}
    \label{fig:lat_comp}
\end{figure}
\begin{figure}
    \centering
    \begin{subfigure}{0.48\linewidth}
        \centering
        \includegraphics[width=\linewidth]{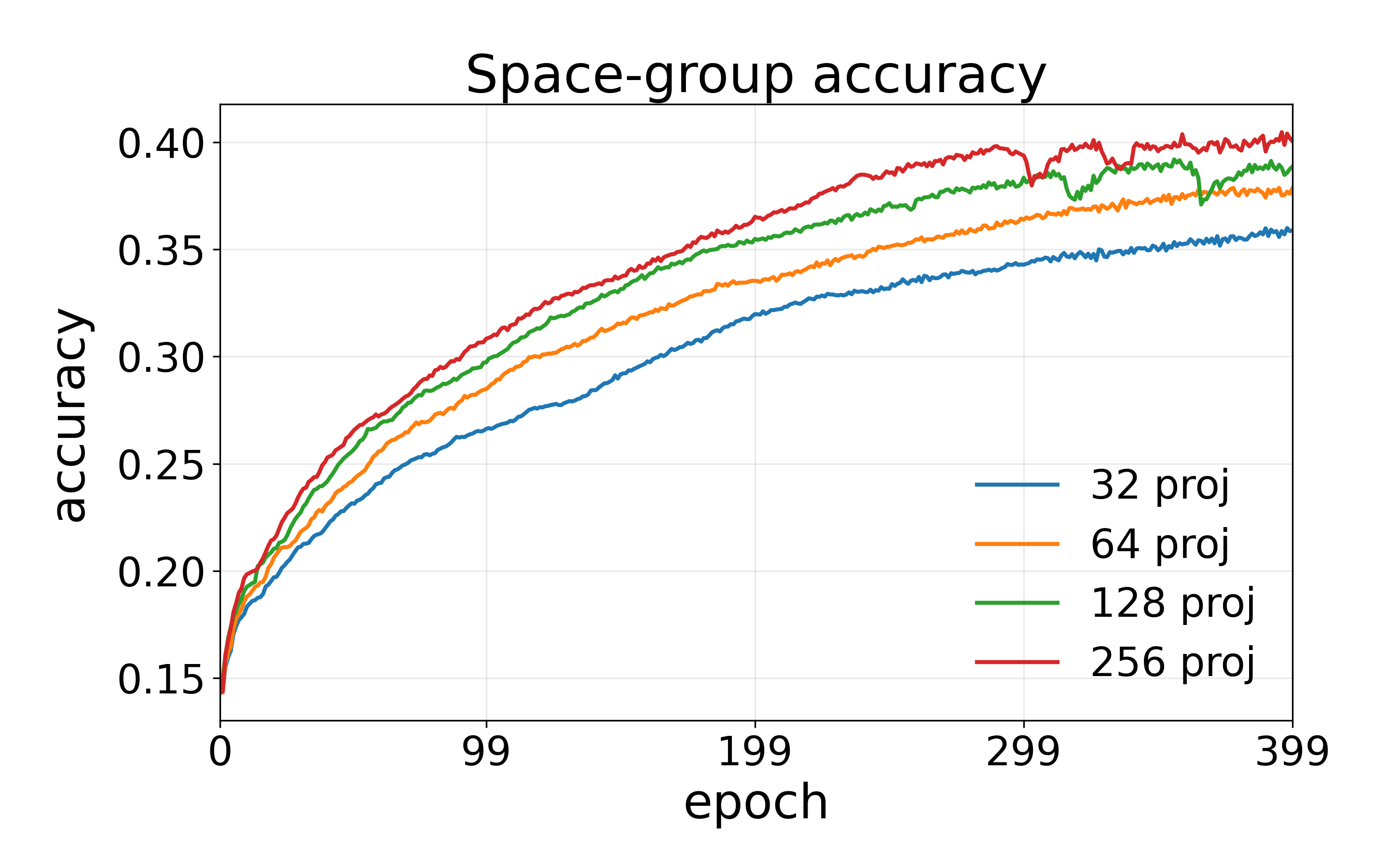}
        \caption{Space group accuracy for MP20.}
        \label{fig:sg_mp20}
    \end{subfigure}
    \hfill
    \begin{subfigure}{0.48\linewidth}
        \centering
        \includegraphics[width=\linewidth]{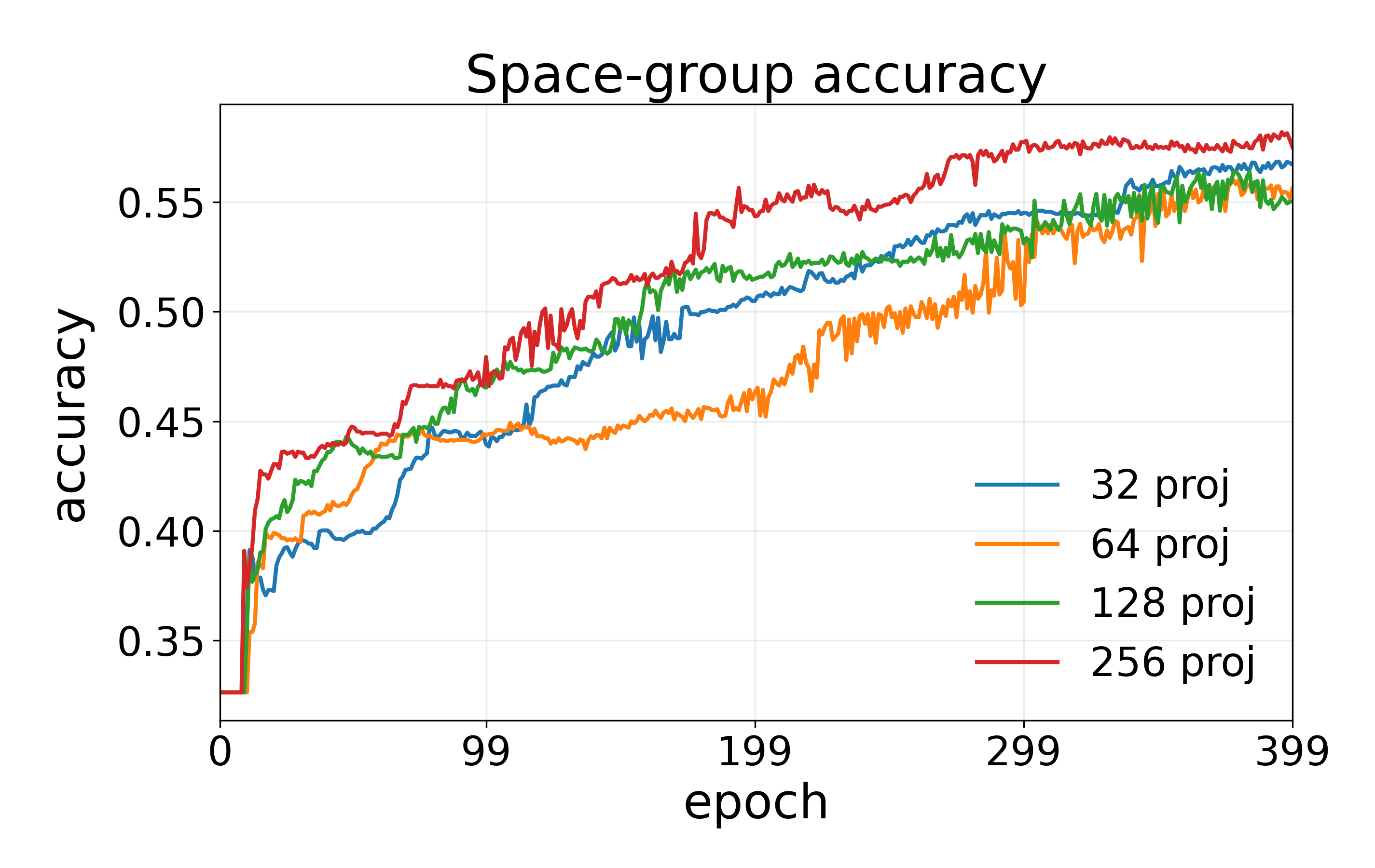}
        \caption{Space group curve for Carbon24.}
        \label{fig:sg_carbon24}
    \end{subfigure}

    \caption{Space group reconstruction curves.}
    \label{fig:sg_curve}
\end{figure}

\subsection{What makes the Identity featurizer fine?}
\label{app:fine_coarse}
The role of the Identity featurizer is to provide a different kind of "geometry" differentiating between positive (those structures that should be close) and negative pairs (those structures that should not be close) as indicated by Proposition \ref{info_nce:sep}. Therefore, we investigate whether the contrastive GNN indeed separates positive and negative pairs differently than the MACE featurizer. To test this, we randomly select 5000 samples from MP20 train set and sweep over $\gamma$ as in the Proposition. We measure the number of negative pairs that exceed the threshold. More specifically, we generate for each material a positive augment and check whether there exists a different material that exceeds the augment cosine similarity with a subtracted $\gamma$. Hence, as $\gamma$ increases, the probability of hitting a negative pair (a different material) instead of the augment increases. We do this for both the MACE and the Identity featurizer, for which the results can be seen in Fig.\ref{fig:info_nce}. Generally, the contrastive GNN provides a much better separation, as indicated by the much lower probability of margin violation. This explains the role of the fine Identity featurizer, which induces a completely different geometry than the coarse MACE featurizer. 

\begin{figure}
    \centering
    \includegraphics[width=0.5\linewidth]{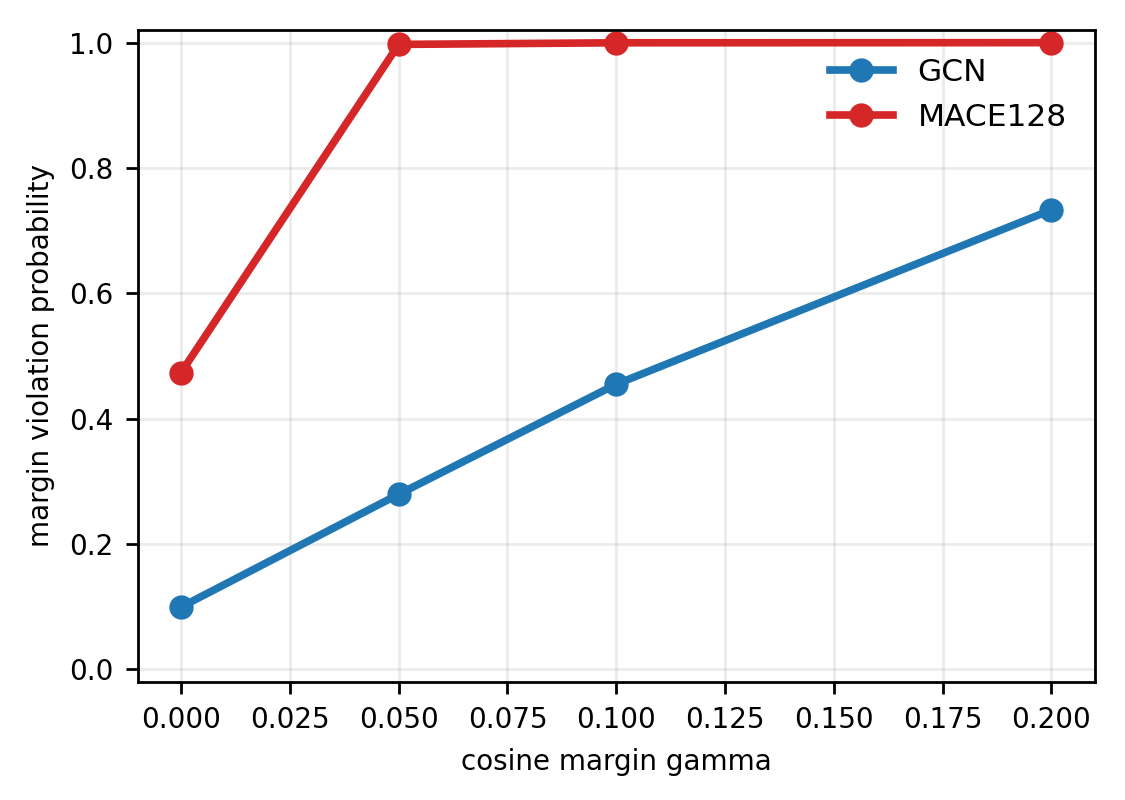}
    \caption{Separation between the MACE featurizer and the Identity featurizer.}
    \label{fig:info_nce}
\end{figure}

\subsection{A cosine investigation}
To further understand the different featurizers, we randomly drew two times 5000 samples from the MP20 train set and formed random pairs between the two created subsets. 
These samples were then transformed into the two feature spaces, using the MACE (coarse) and Identity (fine) featurizers. The features were not only calculated on "clean" structures directly drawn from the MP20 train set, but also on "noisy" structures. We induced this noise by altering the fractional coordinates with Gaussian noise and applied a relatively high standard deviation of 0.05. Fig. \ref{fig:hist_gcn_mace_joint} compares the cosine similarity for each featurizer, i.e., Identity or MACE, and randomly selected feature pairs of, in total, four different sets. The sets cover feature pairs for clean vs. clean, clean vs. noisy, noisy vs. clean, and noisy vs. noisy structures. For the Identity featuzier, cosine similarity values are mainly focused around 0 and high similarities are the absolute exception for all four sets. The low similarity values indicate that the contrastive GNN is indeed trained to achieve separation of the different samples, even when noise is present. For the MACE featurizer, we see a different behavior: for the clean-clean pairs, we observe that relatively few have a cosine similarity below 0.5. The MACE tail, however, changes for clean vs. noisy or noisy vs clean feature pairs. This indicates that the MACE feature space also accounts for perturbations, as it is trained not only on "clean" and stable data. The histograms in Fig. \ref{fig:hist_gcn_mace_joint} thereby further underline the different geometries learned by both featurizers.

\begin{figure}
    \centering
    \includegraphics[width=0.8\linewidth]{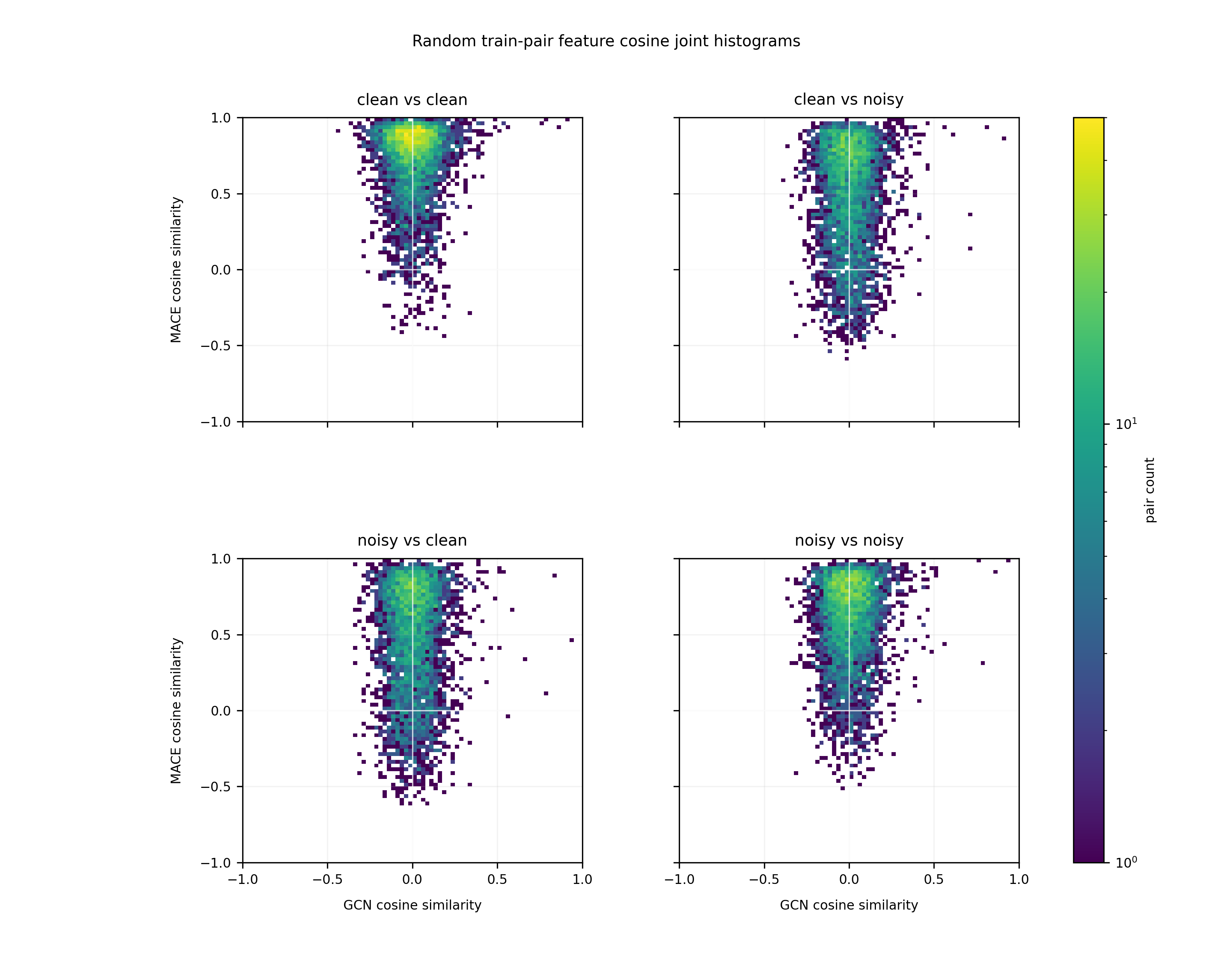}
    \caption{Joint histograms of cosine similarities between random pairs of clean or noisy data for the two featurizers.}
    \label{fig:hist_gcn_mace_joint}
\end{figure}

\section{On the implicit stability of MACE features}
\label{app:stability-mace}
A natural question that arises is what notion of stability the MACE features really encode. We call this \emph{implicit} stability. Of course, due to mean pooling and the random projections, exact node-level information is lost. Since MACE predicts energies and forces, it is still reasonable to expect that it captures these even in a condensed feature space. We test this with the following experiments on formation energies.

\paragraph{MP20 Sweep:} We test how well we can predict formation energies of MP20 from the mean pooled MACE feature projections and trained a simple multilayer perceptron (MLP) to predict the formation energies using the MAE as the loss function. The results are shown in Fig.\ref{fig:mace_form_energy} for a varying number of projections of the mean pooled invariant MACE features as input for the MLP. We can see that all of the variants capture some notion of stability and greatly improve with training. More projections also improve the information encoded. Therefore, Fig.\ref{fig:mace_form_energy} shows that mean pooled MACE feature projections are a reasonable proxy for estimating formation energy, and possibly also local stability. 

\begin{figure}
    \centering
    \includegraphics[width=0.8\linewidth]{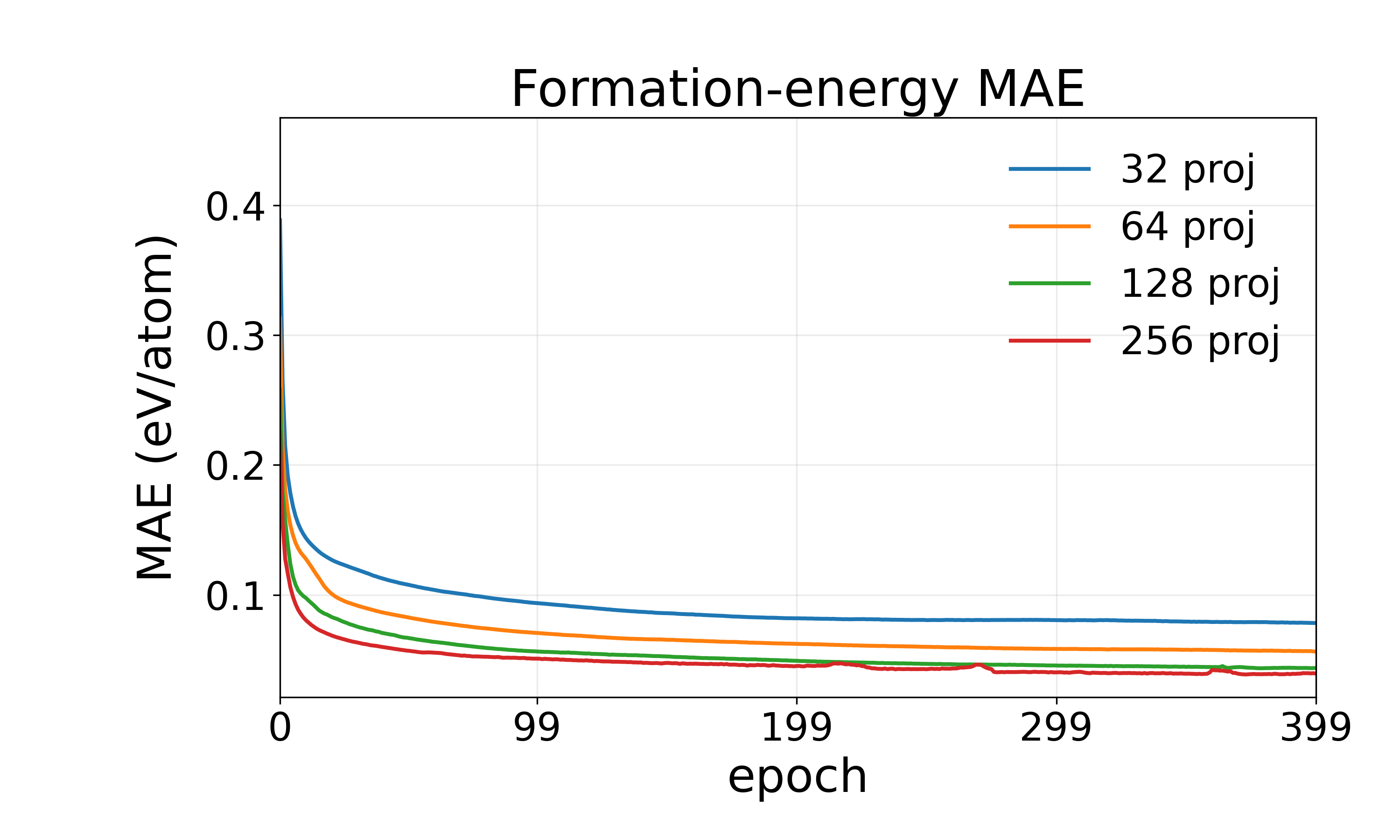}
    \caption{MAE of formation energy per atom over epochs for different lengths of mean pooled MACE feature projections. }
    \label{fig:mace_form_energy}
\end{figure}

\paragraph{Experiments on Polymorphs: } We continue our investigation by analyzing the formation energies of polymorphs (i.e., compositions where different crystal structures exist). With this test, we want to check whether the differences in their energy above the hull (essentially, their differences are equivalent to the formation energy differences for the polymorphs) can be sufficiently separated by the MACE featurizer. We check all polymorphs of SiO2 and ZnS in the MP dataset \cite{jain_mp}. We specifically choose polymorphs because they have the same composition, which we can therefore exclude this as a confounder. In the results shown in Fig.\ref{fig:poly_hull}, both investigated polymorphs exhibit a near linear correlation between their energy above hull differences with the distance between their respective MACE feature projections and although several outliers are visible in the plots, a general trend persists. This trend is likely unsurprising, as the readout phase for the invariant features in the first layers of the MACE models uses a linear readout. The experiment underscores our general thesis that pooled MACE feature projections are still a useful proxy for precise stability information. Nevertheless, there may be several outliers, as shown in the figure. 

\begin{figure}[htbp]
    \centering

    \begin{subfigure}{0.8\textwidth}
        \centering
        \includegraphics[width=\linewidth]{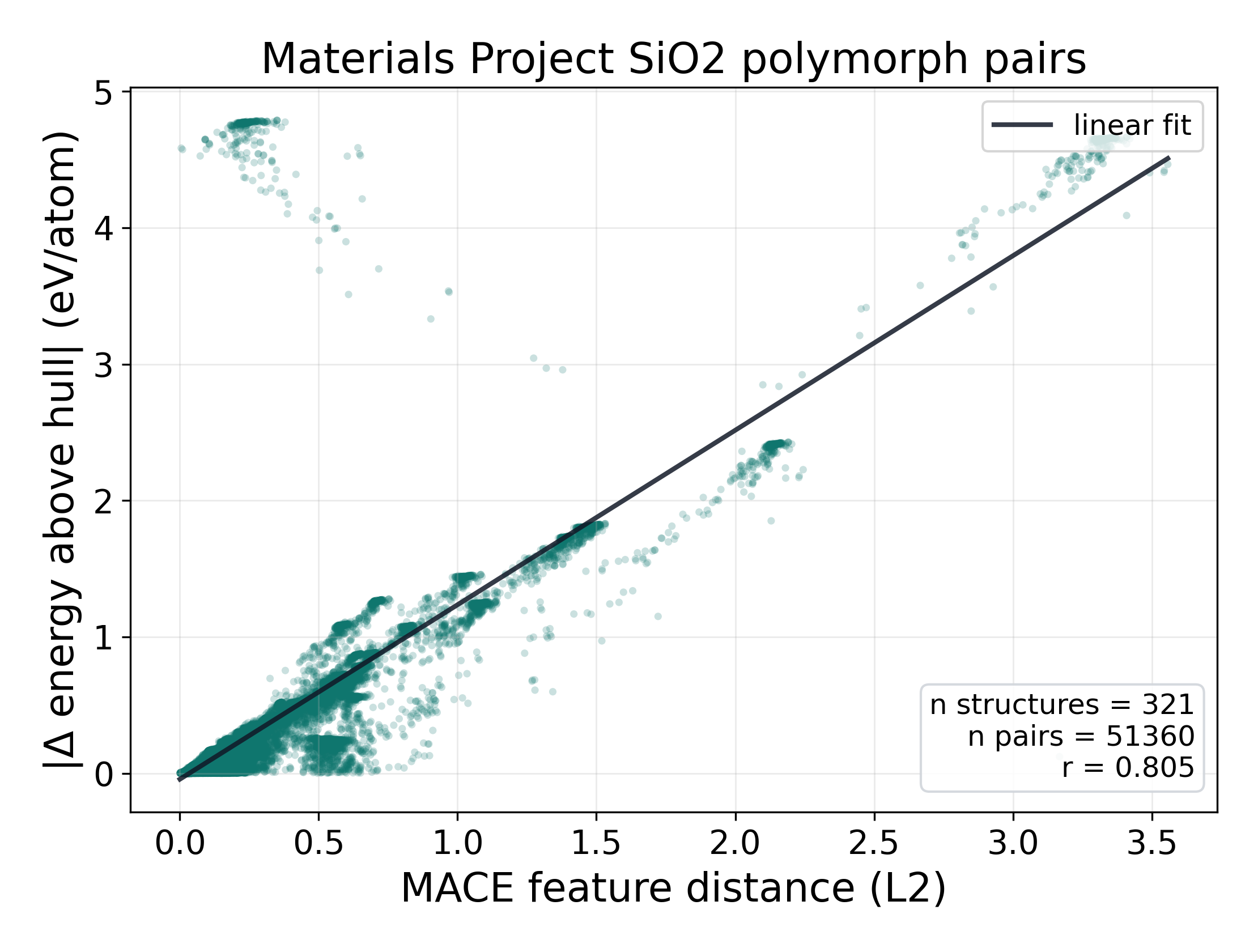}
        \caption{Hull-energy difference versus MACE feature L2 difference for SiO$_2$ polymorphs.}
        \label{fig:sio2_poly_hull}
    \end{subfigure}

    \vspace{0.5em}

    \begin{subfigure}{0.8\textwidth}
        \centering
        \includegraphics[width=\linewidth]{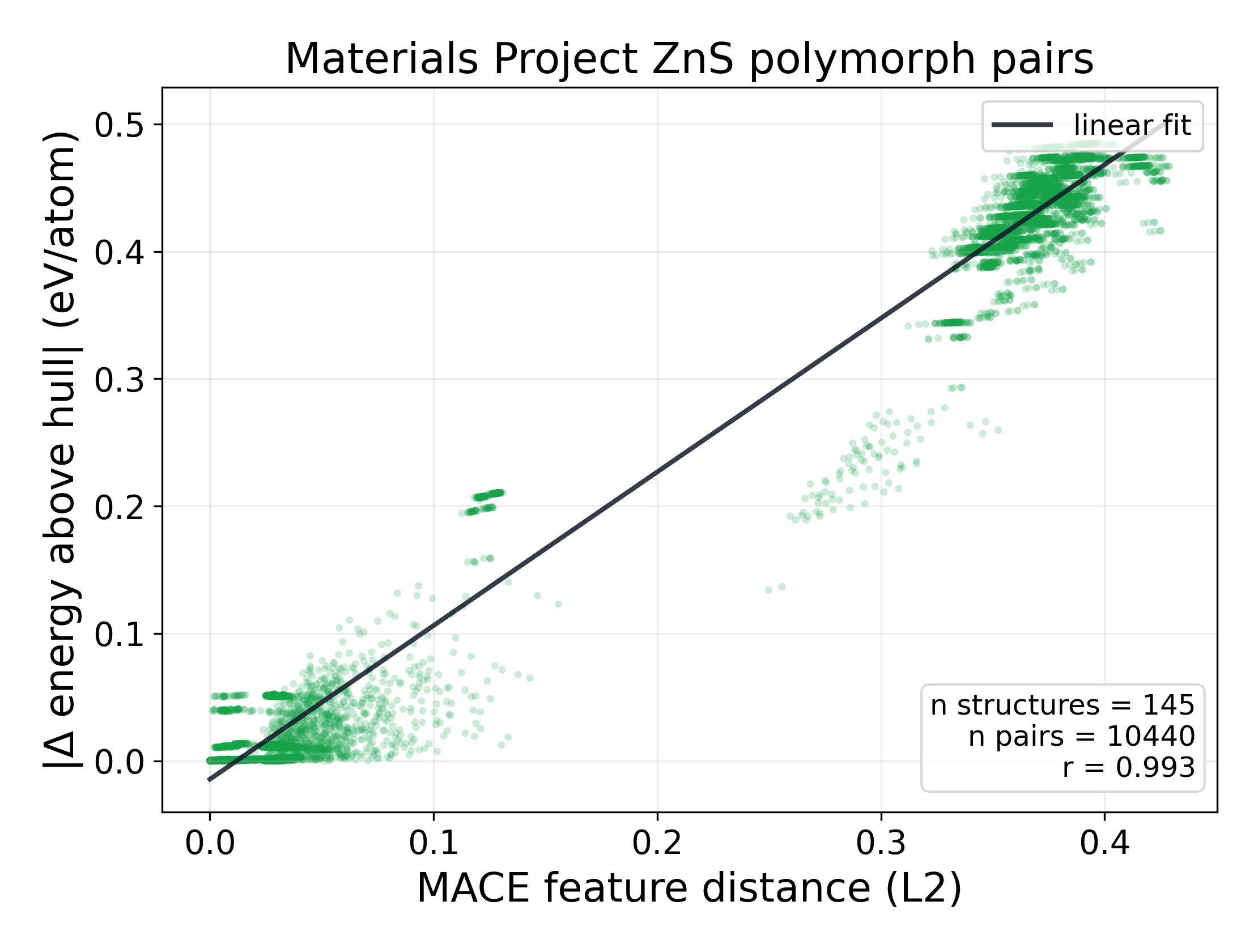}
        \caption{Hull-energy difference versus MACE feature L2 difference for ZnS polymorphs.}
        \label{fig:zns_poly_hull}
    \end{subfigure}

    \caption{Polymorph plots showing hull-energy differences as a function of MACE feature L2 differences.}
    \label{fig:poly_hull}
\end{figure}

\section{Toy Experiment on Perov-5}
\label{sec:perov_5}

\begin{figure}[H]
    \centering

    \begin{subfigure}{0.45\textwidth}
        \includegraphics[width=\linewidth]{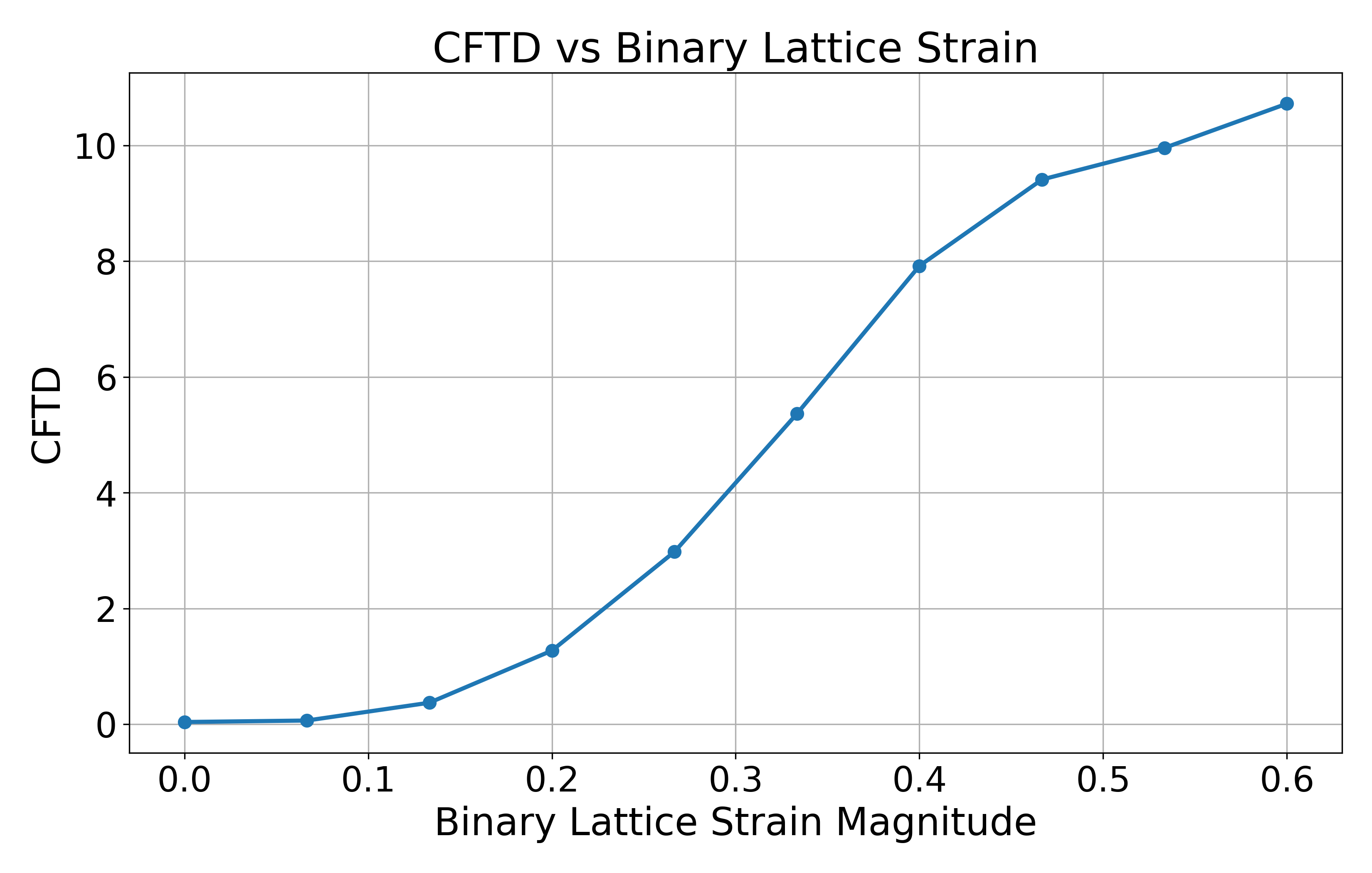}
        \caption{CFTD over lattice strains.}
    \end{subfigure}
    \hfill
    \begin{subfigure}{0.45\textwidth}
        \includegraphics[width=\linewidth]{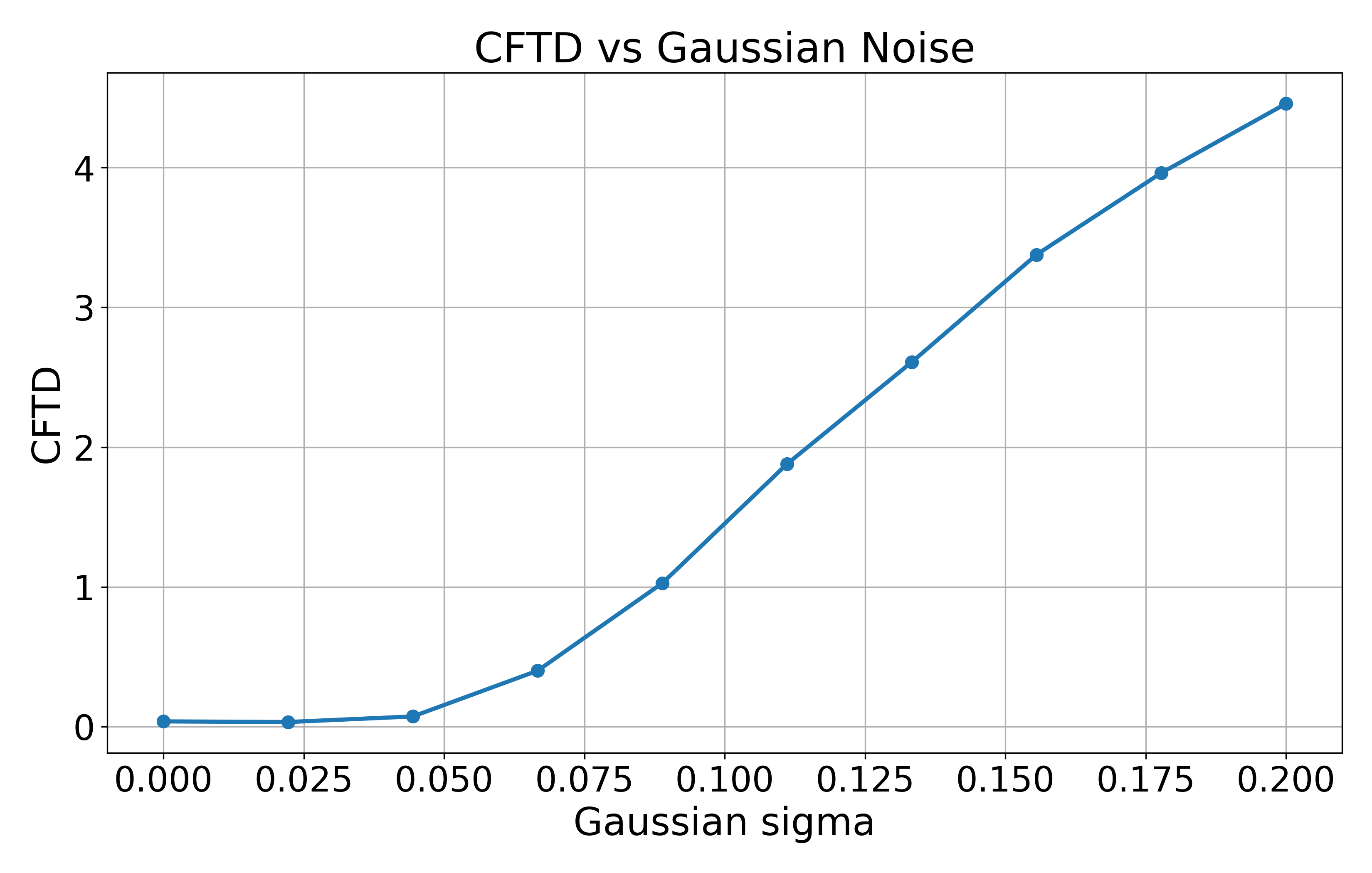}
        \caption{CFTD over Gaussian noise.}
    \end{subfigure}

    \vspace{0.5cm}

    \begin{subfigure}{0.45\textwidth}
        \includegraphics[width=\linewidth]{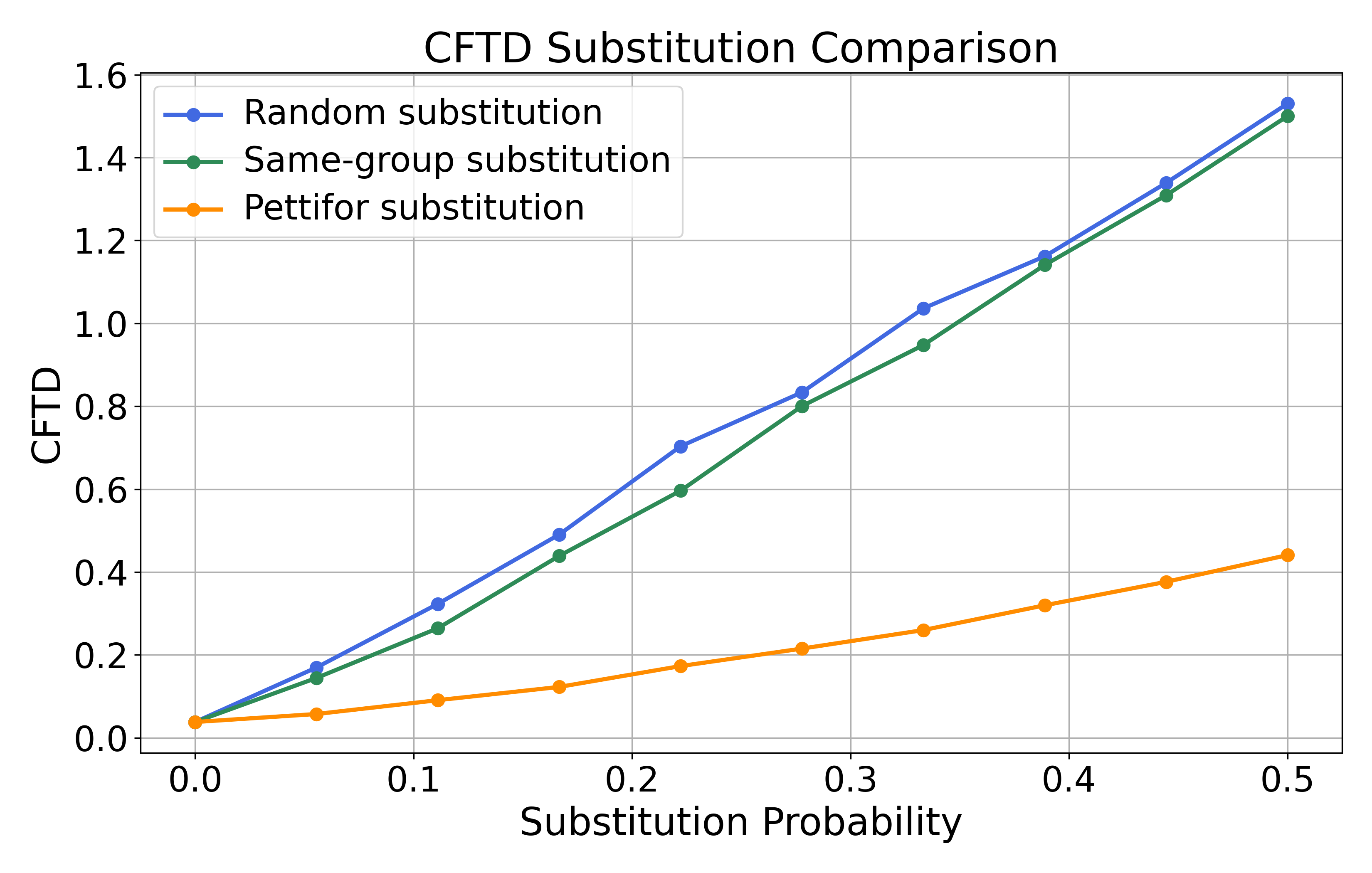}
        \caption{CFTD over substitutions.}
    \end{subfigure}
    \hfill
    \begin{subfigure}{0.45\textwidth}
        \includegraphics[width=\linewidth]{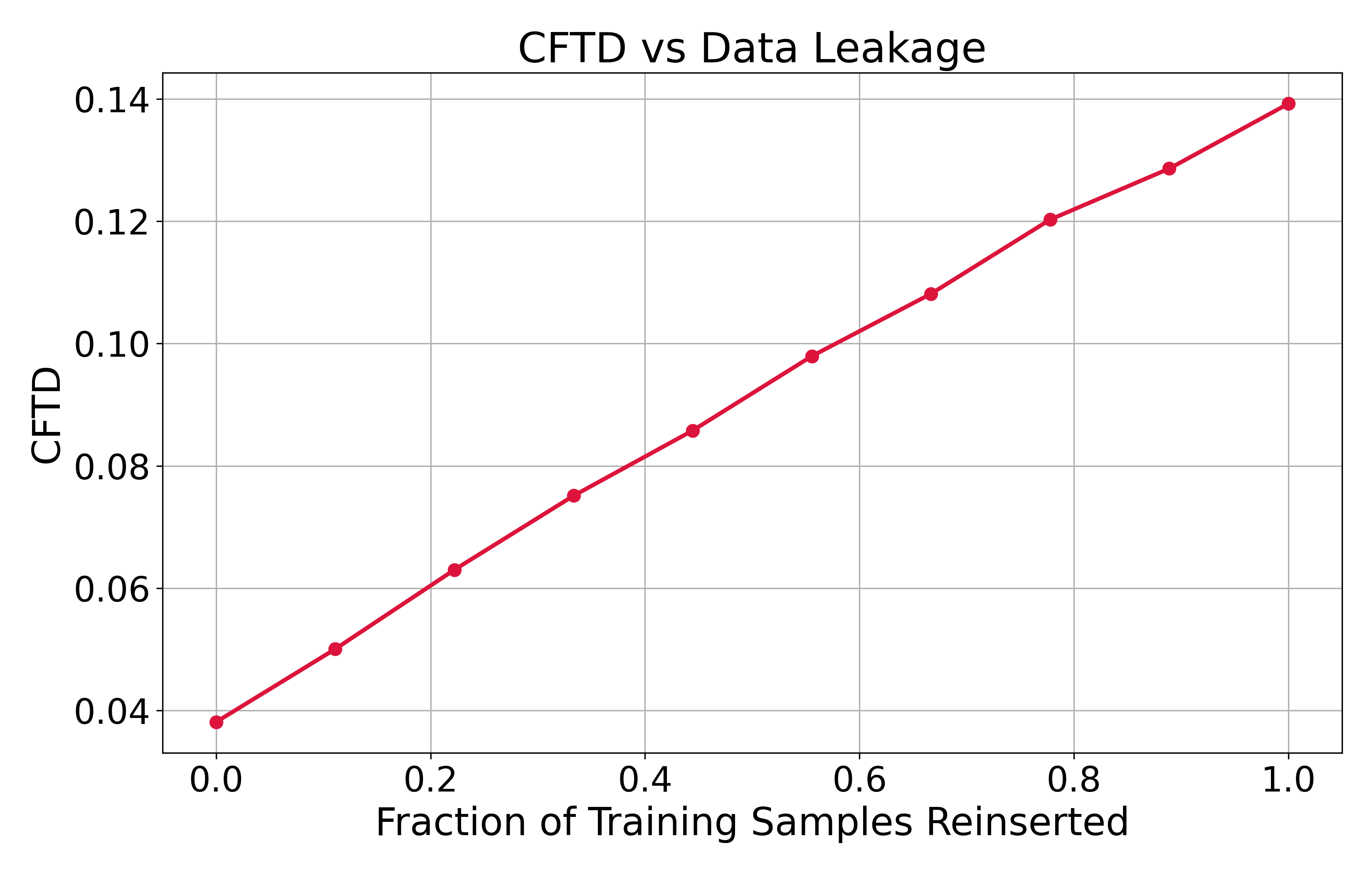}
        \caption{CFTD over train leakage.}
    \end{subfigure}

    \caption{CFTD behavior across different perturbation settings on Perov-5.}
    \label{fig:CFTD_perts_perov}
\end{figure}

To show that the toy experiments from Fig. \ref{fig:CFTD_perts} in \label{Quality-Novelty Checks} extend to larger training sets and different structures, we repeat them for the Perov-5 dataset \cite{castelli_perov5, castelli_perov52, xie2022crystal}. It contains more materials that share a similar structure but different compositions (over 18,000 in total), making it structurally different from MP20. We repeat the same deformations as before and also test for data leakage. 

Note that we reused the Identity featurizer that was trained on MP20, as well as the same MACE featurizer. The results can be seen in Fig. \ref{fig:CFTD_perts_perov}. Note that for this experiment, we chose modified Pettifor nearest neighbors from the MP20 set, not only from the atoms contained in Perov-5. A close investigation reveals that for the first addition of Gaussian noise (0.022) to the atomic positions, the CFTD dips slightly, which indicates that for such low numbers of noise, memorization decreases more quickly than quality penalization kicks in. 

As for MP20, the plots of the individual experiments reveal large differences in the absolute values of the new metric. Artificial crystal degeneration is penalized more strongly than data leakage, and it responds particularly sensitively to lattice strain and changes in atomic positions. This is again different from the behavior observed in TNovD. We argue that this result comes from the inclusion of the coarse MACE featurizer, which enables the new CFTD to now accurately capture the quality of a crystal structure. The much lower values observed in the data leakage experiment result from our implementation of tuning the hyperparameter $M$. The optimal value of $M$ is determined by aligning the naturally lower values of the memorization regime with the values of the quality regime on the same scale. This adjustment is performed on a split of MP20 train and validation set and, since the quality regime in this scenario has lower CFTD values, $M$ will scale the memorization regime accordingly. 

\section{Ablation study on Goldilocks zone $\beta$}

\begin{figure}[h!]
    \centering

    \begin{subfigure}{0.45\textwidth}
        \centering
        \includegraphics[width=\linewidth, scale = 0.8]{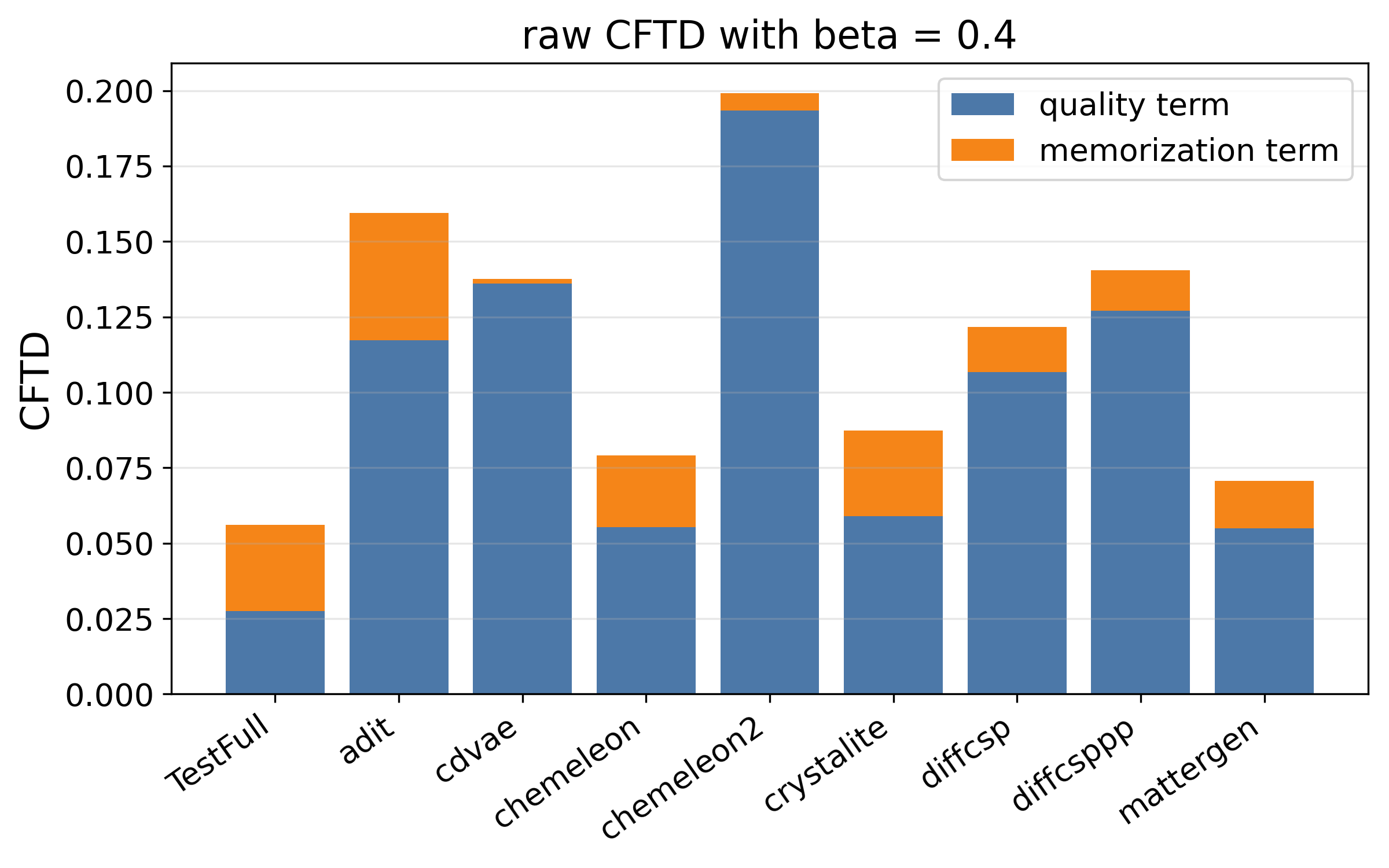}
        \caption{CFTD for unrelaxed models $\beta = 0.4$.}
    \end{subfigure}
    \hfill
    \begin{subfigure}{0.45\textwidth}
        \centering
        \includegraphics[width=\linewidth, scale = 0.8]{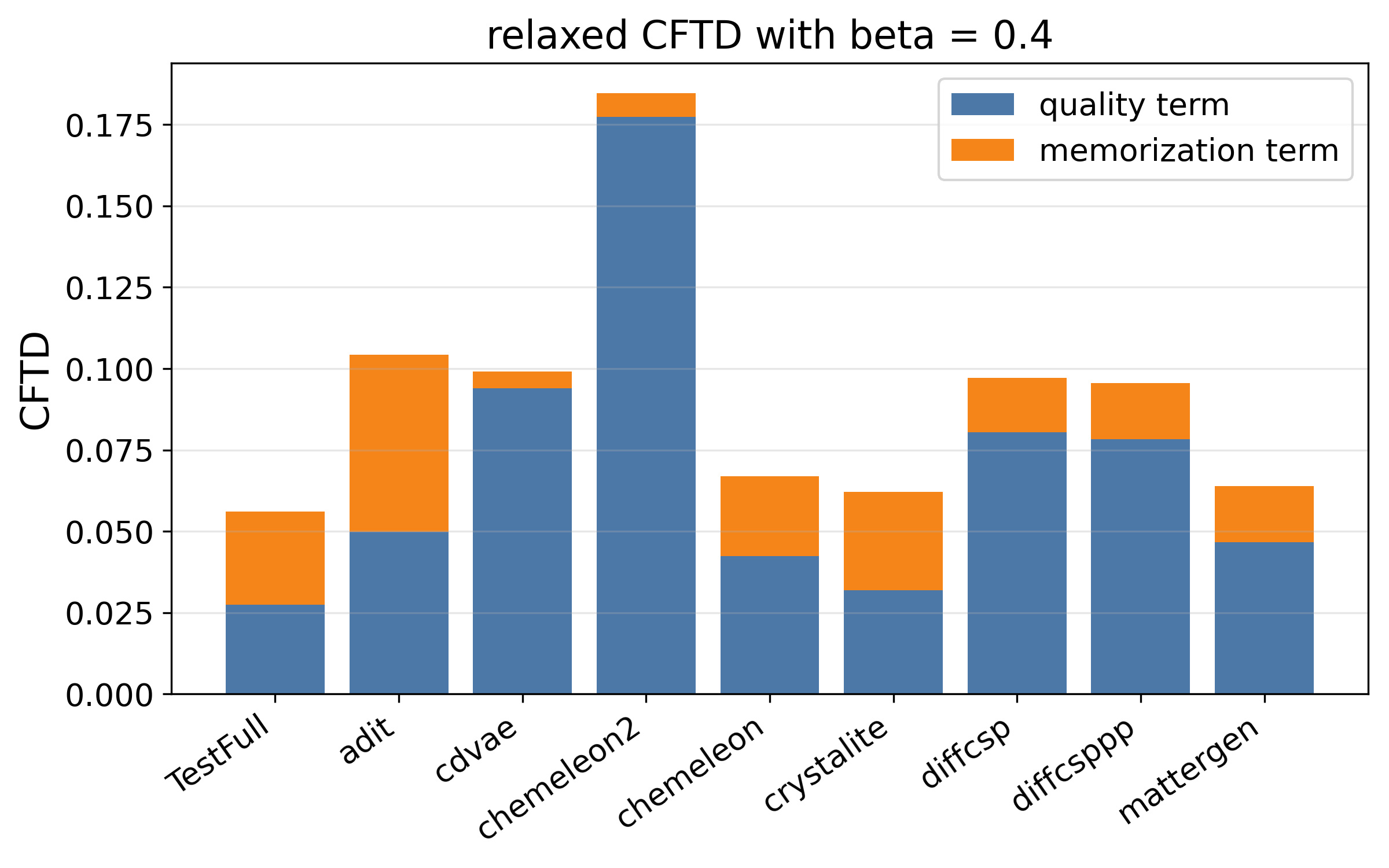}
        \caption{CFTD for relaxed models $\beta = 0.4$.}
    \end{subfigure}
    
    \begin{subfigure}{0.45\textwidth}
        \centering
        \includegraphics[width=\linewidth, scale = 0.8]{imgs_model/model_eval_raw.png}
        \caption{CFTD for unrelaxed models $\beta = 0.6$.}
    \end{subfigure}
    \hfill
    \begin{subfigure}{0.45\textwidth}
        \centering
        \includegraphics[width=\linewidth, scale = 0.8]{imgs_model/model_eval_relaxed.png}
        \caption{CFTD for relaxed models $\beta = 0.6$.}
    \end{subfigure}
    \begin{subfigure}{0.45\textwidth}
        \centering
        \includegraphics[width=\linewidth, scale = 0.8]{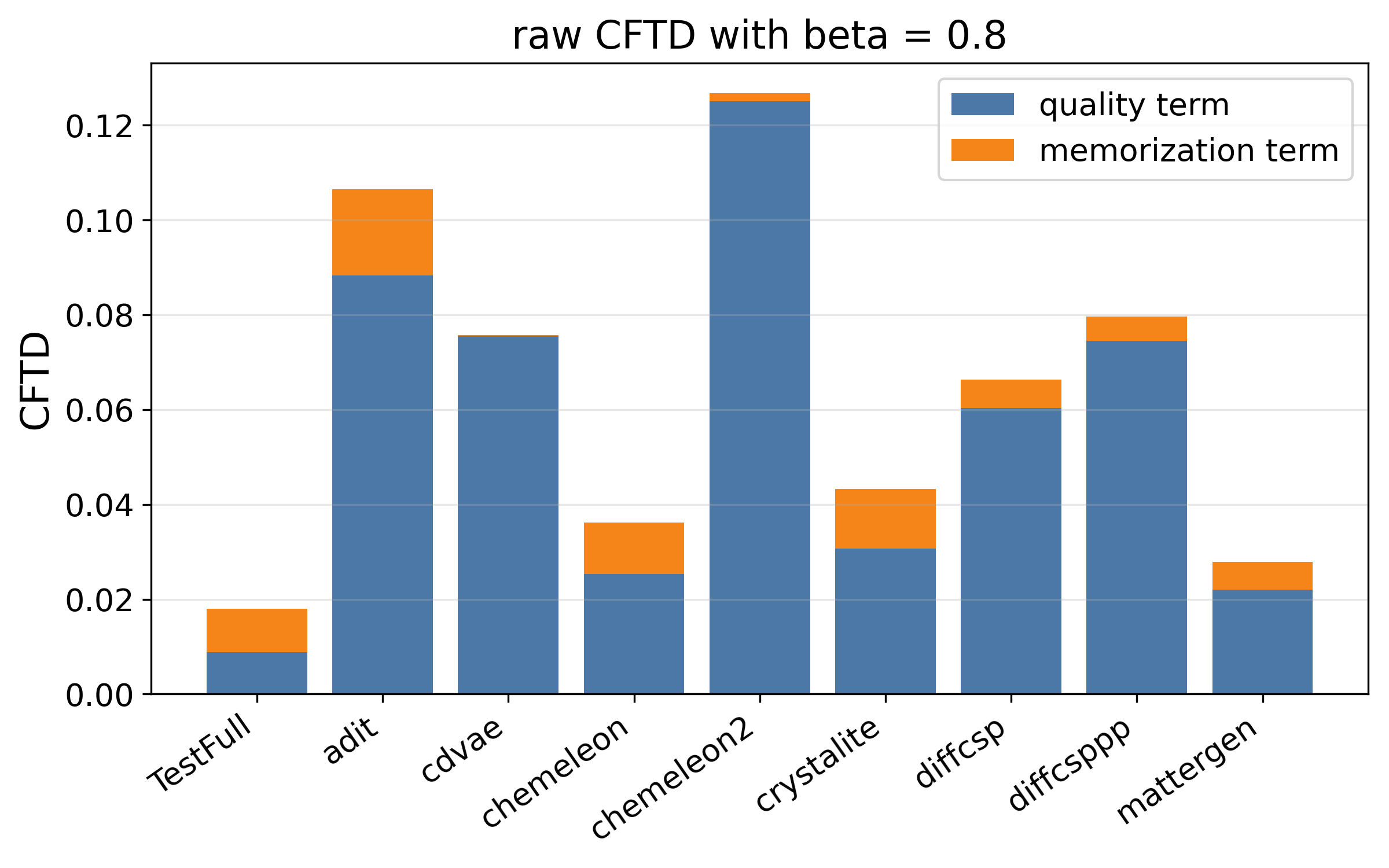}
        \caption{CFTD for unrelaxed models $\beta = 0.8$.}
    \end{subfigure}
    \hfill
    \begin{subfigure}{0.45\textwidth}
        \centering
        \includegraphics[width=\linewidth, scale = 0.8]{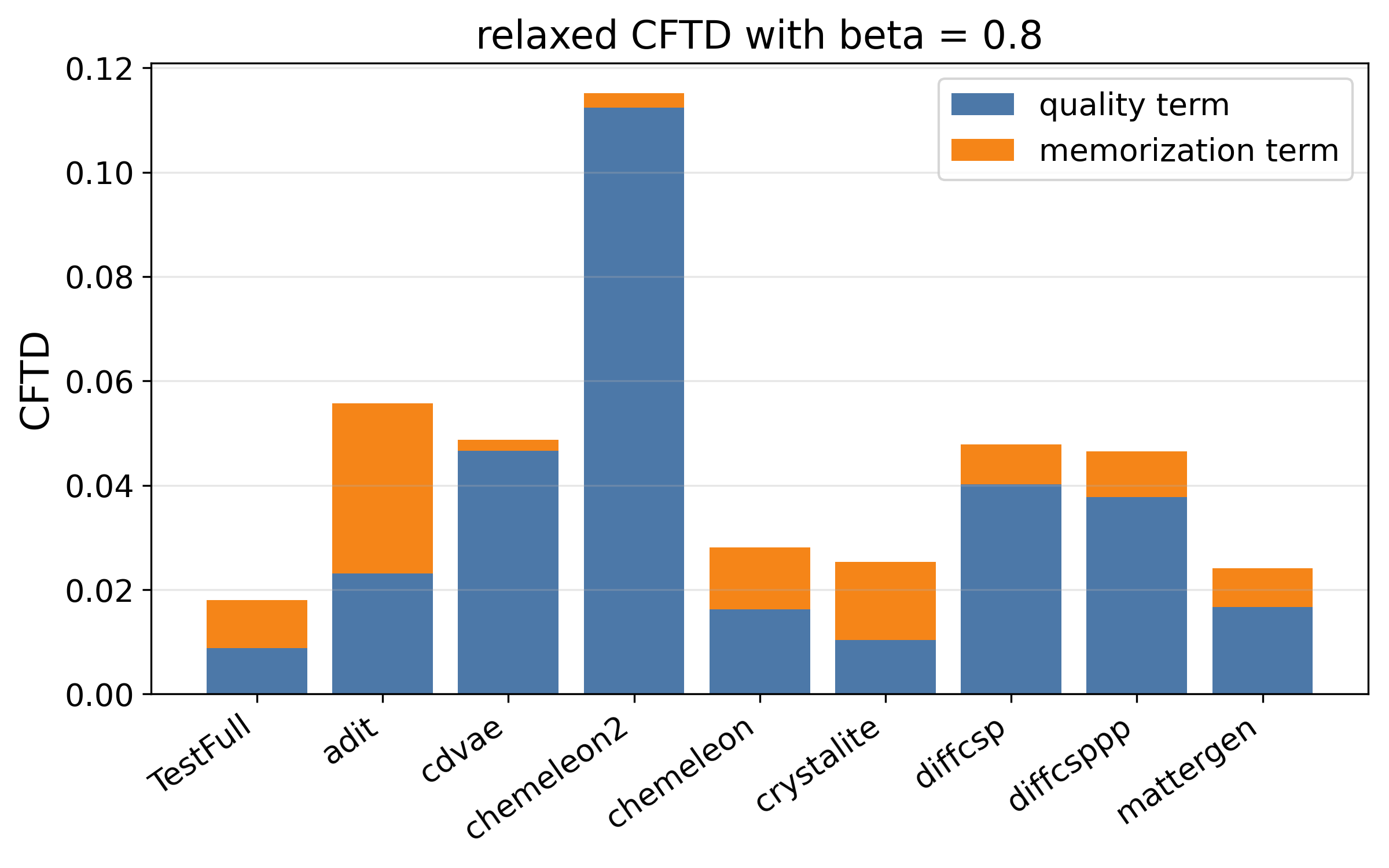}
        \caption{CFTD for relaxed models $\beta = 0.8$.}
    \end{subfigure}
\caption{Comparison of common material generative models: relaxed and unrelaxed for varying $\beta$. }
\label{fig:models_cftd_beta}
\end{figure}

Here, we ablate how much influence the size of the Goldilocks zone has. This is controlled by the parameter $\beta$. The higher $\beta$ is, the less quality and memorization errors are penalized. Hence, we need to ablate whether our conclusions, i.e., the internal rankings of the models, stay consistent with varying $\beta$. For this, we run model ablation and comparison experiments for $\beta \in \{0.4, 0.6, 0.8\}$ on the raw and relaxed versions of the generated materials. In Fig. \ref{fig:models_cftd_beta}, we can see that the general trend of the performance persists very well across different $\beta$, indicating that the results by our CFTD are highly robust. In particular, the strongest and weakest models remain clearly separated, and only minor ranking changes occur among models with similar CFTD values.

\section{Ablation on the balance factor $M$}
Similarly, we ablate the results with respect to the factor $M$. Usually, it is chosen that, on a calibration set (e.g., the validation set), both the quality and memorization penalties are balanced. However, this can be changed according to the type of model that is supposed to be favored. For instance, one could weigh memorization less than quality, favoring more conservative, training-like models, or penalize it more heavily. Therefore, we ablate on calibrating $M$ such that both terms are balanced, and then set $M_{new} \in \{0.5M, M, 1.5M\}$. 

The results on the models rankings are shown in Fig.\ref{fig:models_cftd_M}, where the relative ranking does not change for unrelaxed models, but it does for relaxed ones. 
In particular, Crystalite benefits strongly from a lower memorization penalty. For $M_{new}=0.5M$, it becomes the best generated model after relaxation. Also ADiT's ranking benefits from a lower memorization penalty. Generally, $M$ is a tuning knob for the quality-novelty profile, and hence rankings are not expected to be stable.

\begin{figure}[h!]
    \centering

    \begin{subfigure}{0.45\textwidth}
        \centering
        \includegraphics[width=\linewidth, scale = 0.8]{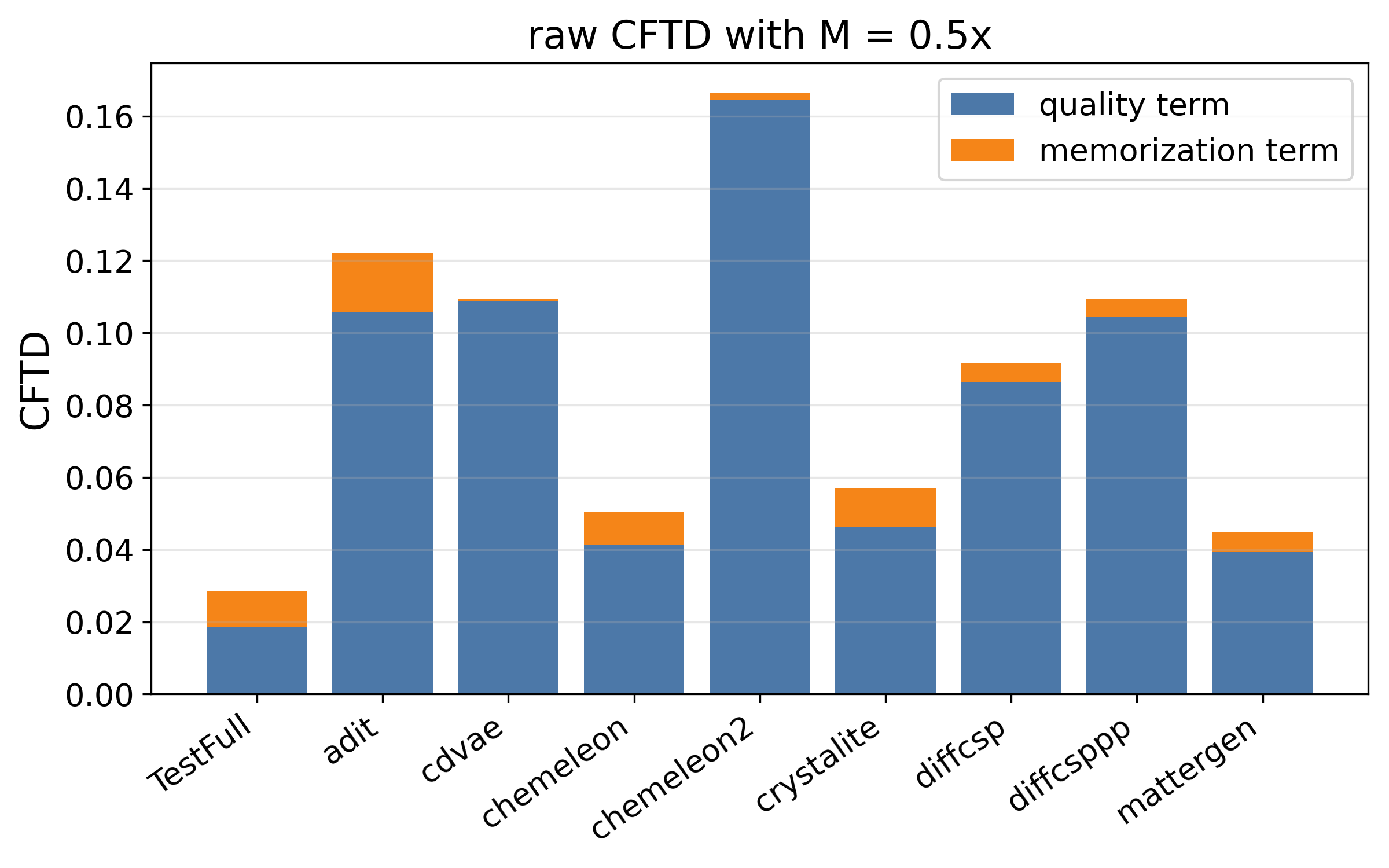}
        \caption{CFTD for unrelaxed models $M_{new}= 0.5M$.}
    \end{subfigure}
    \hfill
    \begin{subfigure}{0.45\textwidth}
        \centering
        \includegraphics[width=\linewidth, scale = 0.8]{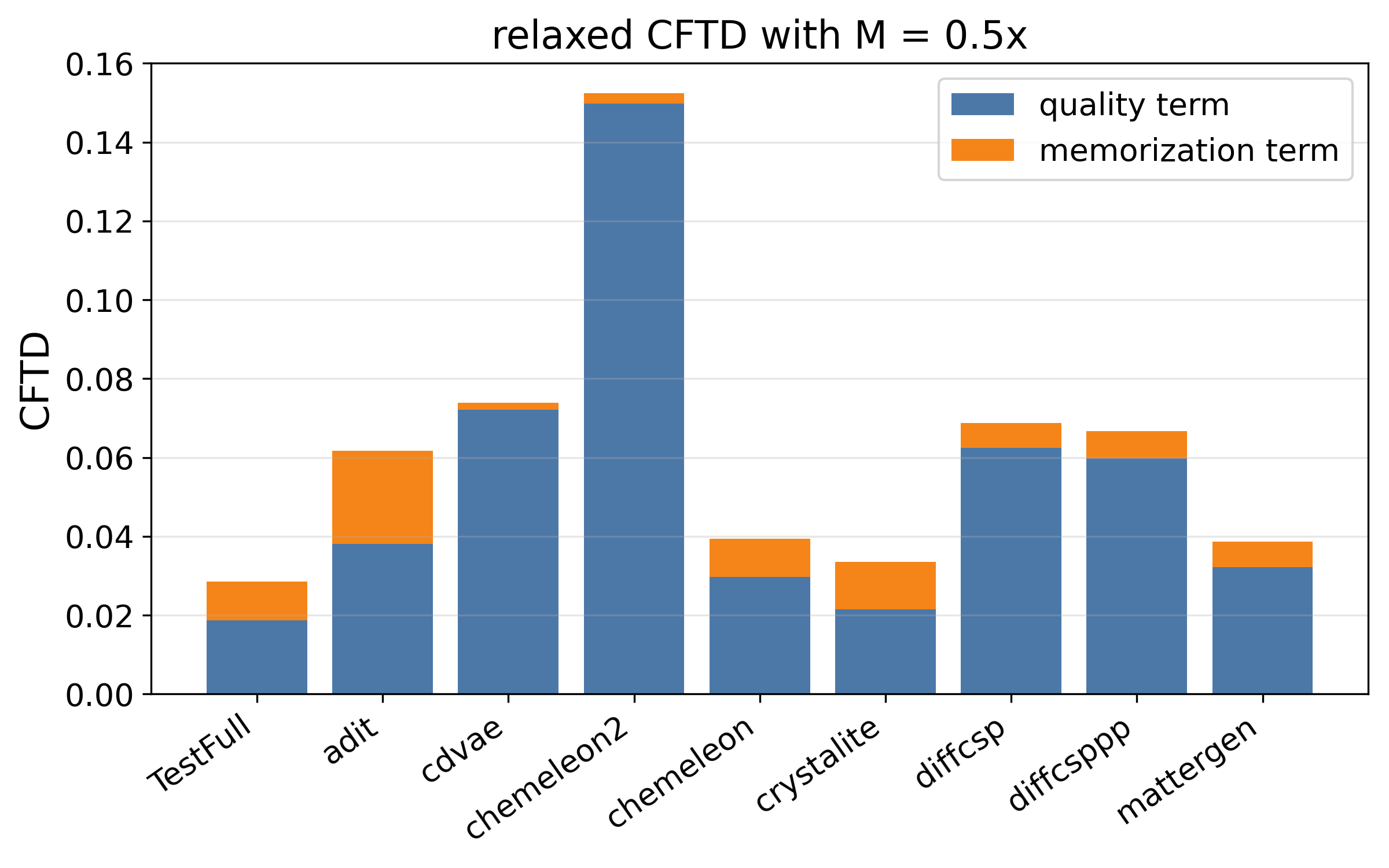}
        \caption{CFTD for relaxed models $M_{new}= 0.5M$.}
    \end{subfigure}
    
    \begin{subfigure}{0.45\textwidth}
        \centering
        \includegraphics[width=\linewidth, scale = 0.8]{imgs_model/model_eval_raw.png}
        \caption{CFTD for unrelaxed models $M_{new} = 1M$.}
    \end{subfigure}
    \hfill
    \begin{subfigure}{0.45\textwidth}
        \centering
        \includegraphics[width=\linewidth, scale = 0.8]{imgs_model/model_eval_relaxed.png}
        \caption{CFTD for relaxed models $M_{new} = 1M$.}
    \end{subfigure}
    \begin{subfigure}{0.45\textwidth}
        \centering
        \includegraphics[width=\linewidth, scale = 0.8]{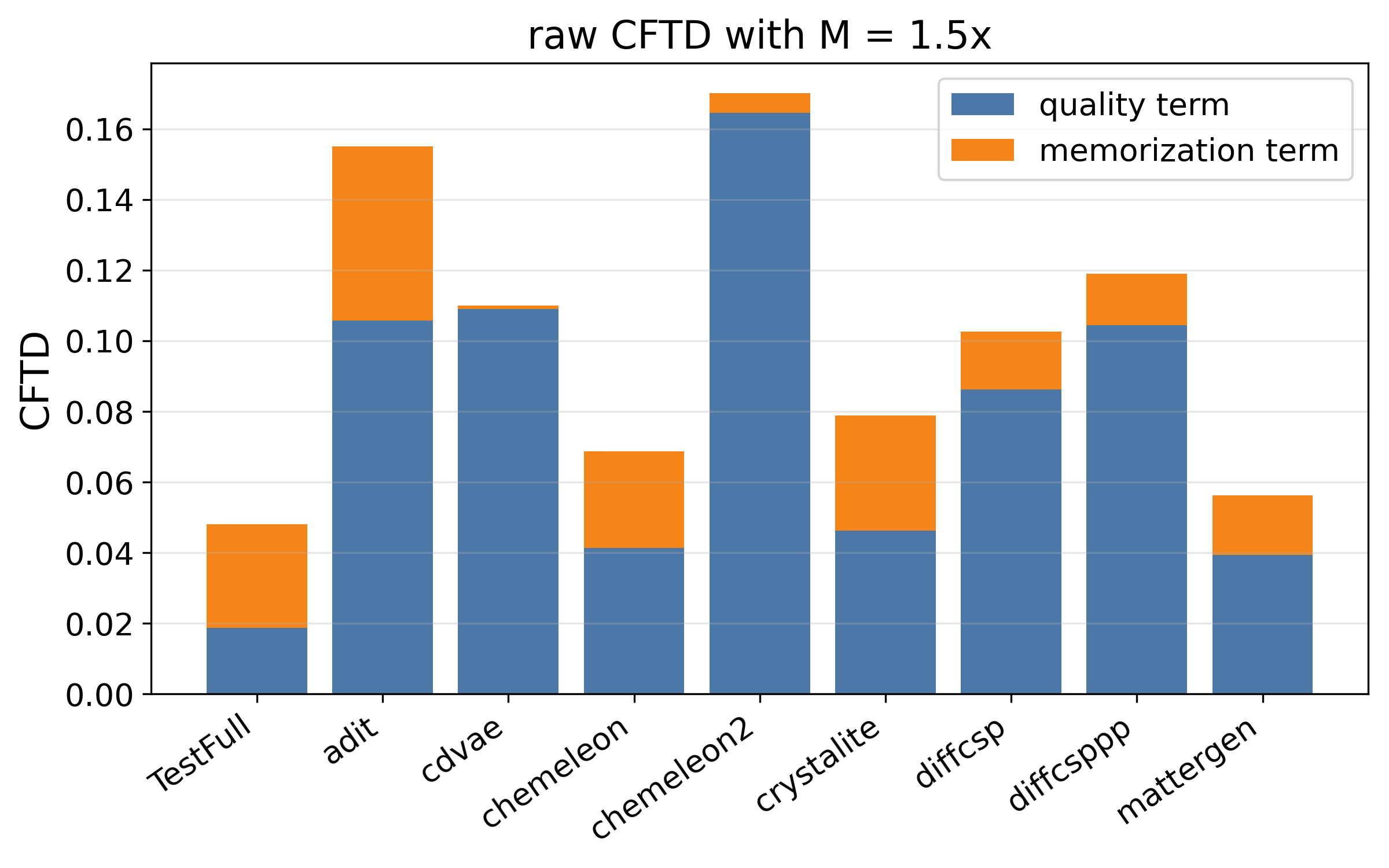}
        \caption{CFTD for unrelaxed models $M_{new} = 1.5M$.}
    \end{subfigure}
    \hfill
    \begin{subfigure}{0.45\textwidth}
        \centering
        \includegraphics[width=\linewidth, scale = 0.8]{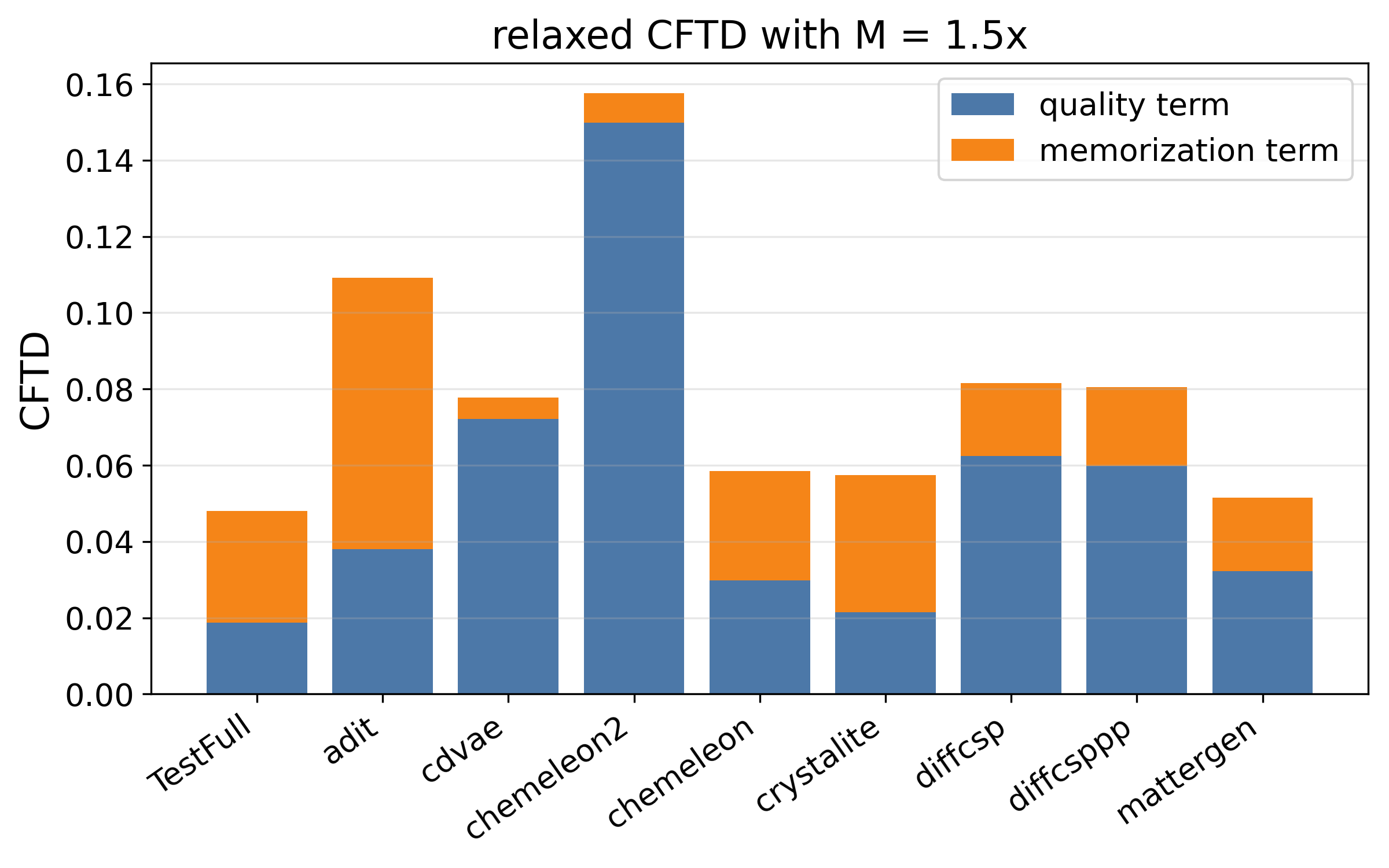}
        \caption{CFTD for relaxed models $M_{new} = 1.5M$.}
    \end{subfigure}
\caption{Comparison of common material generative models: relaxed and unrelaxed for varying $M$. }
\label{fig:models_cftd_M}
\end{figure}

\section{Model ranking}

Here, we comment in more detail on the rankings of the models and how they compare to the rankings implied by continuous SUN\cite{negishi2026continuoussunstableunique}. For this comparison, we dismiss Crystalite as it was not available in their study. 

Also, they ran their results only on unrelaxed models and we therefore compare the rankings only for those. Table \ref{tab:ranking_comparison} shows that our method largely agrees, with only minor differences between MatterGen and the test set, and DiffCSP++. The only major disagreement is the ranking of Chemeleon2. Note that their ranking changes significantly depending on the choice of the tunable weight hyperparameters for stability ($w_S$), uniqueness ($w_U$), and novelty ($w_N$). In their study, they show results for varying stability weights $w_S$. For instance, with $w_S = 0$, CDVAE is ranked 2nd, since novelty is overvalued and it is very novel by generating low-quality structures. 

\begin{table}[htbp]
\centering
\begin{tabular}{|l|c|c|}
\hline
\textbf{Model} & \textbf{cSUN Rank} & \textbf{CFTD Rank} \\ \hline
MP20 Test Set  & 3 & 1 \\ \hline
MatterGen      & 2 & 2 \\ \hline
Chemeleon      & 4 & 3 \\ \hline
DiffCSP        & 5 & 4 \\ \hline
CDVAE          & 6 & 5 \\ \hline
DiffCSP++      & 7 & 6 \\ \hline
ADiT           & 8 & 7 \\ \hline
Chemeleon2     & 1 & 8 \\ \hline
\end{tabular}
\caption{Rankings taken from the cSUN study \cite{negishi2025continuous} with $w_S = 1$ 
(SUN with elm+am) and our CFTD ranking for $\beta=0.6$ and calibrated $M$, so that quality and memorization are balanced.}
\label{tab:ranking_comparison}
\end{table}

\section{Model Description}
\label{sec:model}

\paragraph{Atom type sampling}
We model the sequence of atom types as $(a_1, ..,a_n)$ with n as the number of atoms in a unit cell. Each atom type takes a value in the discrete set $a_i \in \{1,...,K\}$, where $K$ is the number of distinct atom types we wish to model (i.e., atom types present in the dataset). Sequence modeling is therefore hard, since we have $K^n$ different possible sequences. As the atom types are categorical, we use the framework of discrete flow matching \cite{gat2024discrete}. 
Within this framework, we are given training samples $A_1 \sim P(A|n,h)$, where $A = (a_1,...,a_n)$ is a random variable drawn according to the law (or dataset). We start with a fully masked atom-type assignment $A_0 = (m,...,m)$, where $m$ is a mask token. Then,  we interpolate between the fully masked $A_0$ and the unmasked vector $A_1$ of atom types, by setting $A_t = (1-B)\ A_0 + B\ A_1$, where $B$ is a vector of independent Bernoulli random variables, where an entry is one with the probability $t \in [0,1]$. Then, similar to flow matching, we learn a conditional probability on the categorical simplex $\Delta^{K-1}$, i.e., probabilities on the atom types for each atom given the partially unmasked structure, which we denote by $q_{\theta}(A_1|A_t,t)$. We train this via forward KL, i.e., maximize the cross entropy 

$$\mathbb{E}_{a_1 \sim p, t \in U([0,1])}[ - \log q_{\theta}(a_1|a_t, t)].$$

Here, in practice, we condition on all relevant variables, i.e., the number of atoms and the pooled MACE feature projections as additional conditioning. 

After training our categorical distribution on $\{1,...,K\}^n$, we sample following the masked diffusion procedure from \cite{sahoo2024simple}. From the definition, it follows for each component of the path at time $t$ that $p(A_t^i = m) = 1-t$ and correspondingly $p(A_t \neq m) = t$. Then, for sampling, the probability that a previously masked token $A_{t_1}$ is unmasked at a later time $A_{t_2}$ is given by $p(A_{s} =  m | A_{t} = m) = \frac{1-s}{1-t}$, and hence  $p(A_{s} \neq  m | A_{t} = m) = \frac{s-t}{1-t}$. Following this, we start with $s_0 = (m,...,m)$ and then for each time step of $T = (t_1,...,t_T)$ steps, then for each position in $s_t$ we do nothing if it is already unmasked or we unmask with probability $\frac{t_{k+1}-t_k}{1-t_k}$. This matches the conditional formulas given by the path. Furthermore, as $t$ approaches 1, all tokens become unmasked.

\paragraph{Lattice sampling}
For lattice and position sampling, we mainly use continuous-flow matching. In contrast to discrete-flow matching, we here interpolate continuously in the space of random vectors, not in the space of measures. We follow mainly the presentation in \cite{liu2023flow}. Assume we already drew the positions $X_1$. Flow matching interpolates between an auxiliary latent $X_0 \sim \mathcal{N}(0,I)$ and $X_t = t\ X_1 + (1-t)X_0$ to learn its velocity $v_t(X_t) \approx X_1-X_0$. It can be shown that considering the Ordinary Differential Equation (ODE) $dX_t =v_t(X_t)$ with initial $X_0 \sim \mathcal{N}(0,I)$ yields (approximate) samples from the positions.

For lattice generation, we follow \cite{jiao2024spacegroupconstrainedcrystal} and its proposed polar decomposition. We apply this decomposition to the lattice matrix $L$, which is given by the lattice vectors. With real volume, this is invertible, and we can find a decomposition $L = Q \exp(S)$ with an orthogonal $Q$ and symmetric $S$. By utilizing the space of symmetric matrices as a vector space, we can find a basis and decompose $S = \sum_{i=1}^6 k_i B_i$, where $B_i$ is the basis and $k_i$ are the coefficients. The basis matrices are taken from~\cite{jiao2024spacegroupconstrainedcrystal}. Since the space of
$3 \times 3$ symmetric matrices are six-dimensional, we choose the following basis:
\[
B_1 =
\begin{pmatrix}
0 & 1 & 0 \\
1 & 0 & 0 \\
0 & 0 & 0
\end{pmatrix},
\quad
B_2 =
\begin{pmatrix}
0 & 0 & 1 \\
0 & 0 & 0 \\
1 & 0 & 0
\end{pmatrix},
\quad
B_3 =
\begin{pmatrix}
0 & 0 & 0 \\
0 & 0 & 1 \\
0 & 1 & 0
\end{pmatrix},
\]
\[
B_4 =
\begin{pmatrix}
1 & 0 & 0 \\
0 & -1 & 0 \\
0 & 0 & 0
\end{pmatrix},
\quad
B_5 =
\begin{pmatrix}
1 & 0 & 0 \\
0 & 1 & 0 \\
0 & 0 & -2
\end{pmatrix},
\quad
B_6 =
\begin{pmatrix}
1 & 0 & 0 \\
0 & 1 & 0 \\
0 & 0 & 1
\end{pmatrix}.
\]

For the flow matching onto the lattice coefficients $k_i$, we need to convert the lattice representation $(a,b,c, \alpha,\beta, \gamma)$ into the spaces of the $k_i$. From the $(a,b,c,\alpha, \beta, \gamma)$ we can compute the metric matrix $G = LL^T$. From this, we can find an eigendecomposition, $G = V \Lambda V^T$, where $\Lambda$ is the diagonal matrix of eigenvalues and $V$ are the eigenvectors. Now the matrix $S = V \log (\Lambda^{1/2}) V^T$ satisfies $\exp(S)\exp(S)^T = G$, since $S$ is symmetric itself. Now we convert these matrix entries into the representation with respect to the k-coefficients. It is easy to see that $k_1 = S_{0,1}$, $k_2 = S_{0,2}$ and $k_3 = S_{1,2}$, since $B_1$ is the only basis matrix with non zero entry at $(0,1)$ (and similar for the other two). For the remaining $k$, we need to solve a linear system. For $(0,0)$ we have $S_{0,0} = k_4 + k_5 + k_6$, and further $S_{1,1} = -k_4 + k_5 + k_6$ and $S_{2,2} = -2k_5  + k_6$. It is simple to check that $k_4 = \frac{1}{2}(S_{0,0}- S_{1,1})$, $k_5 = \frac{1}{6}(S_{0,0}+S_{1,1}-2S_{2,2})$ and $k_6 = \frac{1}{3}(S_{0,0}+ S_{1,1}+S_{2,2})$ solve this. Now we can get from $(a,b,c, \alpha,\beta,\gamma)$ to the coefficients, and use flow matching with a latent $Z \sim \mathcal{N}(0,I)$ to the $k_i$, from which we can build a representation of the lattice matrix $S$.

\paragraph{Position sampling}
For the positions, we also employ continuous-flow matching conditioned on the pooled MACE features projections, the lattice, and atom types. The main technical difficulty is that material generation introduces a periodic structure, so we cannot naively use flow matching in the Euclidean space. Let $\mathbb{T}^3_n$ denote n atoms (points) living each in the 3-dimensional torus space with the periodic structure. Further, we make the assumption that $[x+ \delta] = [x_1+\delta, ..., x_n + \delta] = [x]$, i.e., we have translational invariance for $x \in \mathbb{R}^{n,3}$ and $\delta \in \mathbb{R}^3$. We perform flow matching on the fractional coordinates. Assume we have $X_0 \in \mathbb{T}^{3}_n$ as a latent random vector, and $X_1 \in  \mathbb{T}^3_n$.

The general idea of flow matching is now to regress onto the velocity $X_1 - X_0$. However, $X_1-X_0$ is in general not in $[0,1]^{n,3}$ due to the periodic boundary conditions, so we need to be careful there. For an element in $\mathbb{T}^3_n$ , there are many different elements representing $[X_1-X_0]$, namely $X_1-X_0+l+\delta$ with $l \in  \mathbb{Z}^{n,3}$ and $\delta \in \mathbb{R}^3$. In particular, since flow matching is defined to live on the tangent space \cite{wu2026dmflow, sriram2024flowllm}, we use the representative given by $P(\mathrm{wrap}(X_1-X_0))$, where $\mathrm{wrap}(x) = (x+0.5 \ \mathrm{ mod}\  1)-0.5$ and $(P(x))_i = x_i - \sum_{k=1}^n \frac{1}{n} x_k $. This can be seen by either Riemannian geometry \cite{chen2024flow} or by choosing the representative $\Delta = \mathrm{argmin}_{l,\delta} \Vert x_1-x_0 + l + \delta \Vert^2$. 

Similarly, we choose the congruent path representative $\mathrm{frac}(X_0 + t \Delta)$, which has derivative (up to the discontinuity points of $\mathrm{frac}$) $\Delta$.

Then we regress our velocity onto $$L_{pos}(v_{\theta}) = \mathbb{E}_{x_0,x_1,t}[\Vert v_{\theta}(t,x_t) -\Delta \Vert^2].$$ 
This also has neat differential-geometric interpretations \cite{chen2024flow}.

\section{Possible Extensions}
\label{app:ext}
\subsection{Hierarchical Generation}
One of the motivating pillars of this work is to strip away complexity from generating composition via LLM ideas or positions via GNNs. The first one seems particularly prone to overfitting, whereas the latter often struggles with finding stable or space group diverse structures, as indicated by the results in \cite{betala2026lematgenbenchunifiedevaluationframework} (see also their leaderboard). Conveniently, the pooled MACE features projections now live in a fixed-dimensional continuous space, where we can apply ideas from standard generative modeling in Euclidean space. In particular, this enables us to learn $p(n,h)$ via a generative model, for instance, a transformer using the flow matching framework. Then, we could pursue precisely like in our material generator, and conditionally sample atom types, lattice and positions, therefore being a full "hierarchical" MACE conditional generative model. 

In particular, this allows us to sample "new" MACE features, therefore bypassing the composition-overfitting problem. However, a priori, the MACE feature space does not seem to have any particular structure. Many choices we made, such as mean-pooling, are arbitrary. This suggests a deeper analysis of how to compress the information in the MACE features while making it as "learnable" as possible by a generative model. We leave this for future work, but preliminary results suggest that we can achieve a high novelty generator with a similar MSUN rate.

Further, another direction to explore is using \emph{atom-level} features of the MACE embeddings. We did not pursue this since OT deliberately requires a fixed dimension. However, for material generation, finer-grained information might be helpful. 

\subsection{Conditional Generation}
In this paper, we were mainly concerned with unconditional material generative models, but both the metric idea and the generative model can be extended for conditional generators, albeit extending the metric is not trivial and might need ideas like conditional Wasserstein-like couplings \cite{chems_2025}. We want to focus on extending a MACE generator to a conditional one. Indeed, we could subsample or steer a MACE generator (say a generative model in the MACE feature space) to sample only from $p(h|c)$. This is possible as soon as we have paired data $(h_i,c_i)$ and would avoid training another conditional generative model that takes the condition as input for a factorization. This might provide a significant speed-up as soon as the condition is identifiable enough using the MACE features. The experiments in Appendix \ref{app:info_mace} suggest that MACE features indeed encode important material aspects.

\end{document}